\documentclass{article}


\usepackage[preprint]{}




\usepackage[utf8]{inputenc} 
\usepackage[T1]{fontenc}    
\usepackage{hyperref}       
\usepackage{url}            
\usepackage{booktabs}       
\usepackage{amsfonts}       
\usepackage{nicefrac}       
\usepackage{microtype}      
\usepackage{xcolor}         

\usepackage{algorithm}
\usepackage{algorithmic}
\usepackage{tikz}
\usepackage{makecell}
\usepackage{amsthm}
\usepackage{hyperref}
\usepackage{mathtools}
\usepackage{amsmath}
\usepackage{amsthm}
\usepackage{amssymb}
\usepackage{setspace}
\usepackage{booktabs}
\usepackage{tabularx}
\usepackage{bm}
\usepackage{xcolor}
\definecolor{magenta}{rgb}{1.0, 0.0, 1.0}

\newcommand{\junpei}[1]{}
\newcommand{\updated}[1]{}
\newcommand{\kazu}[1]{}
\newcommand{\brown}[1]{}
\newcommand{\ks}[1]{}


\makeatletter

\newcommand{\freqepsilon}{\Delta}

\usepackage{amsthm}
\usepackage{amsfonts}
\usepackage{bbm}
\usepackage{enumerate}
\usepackage{float}
\usepackage{caption}

\usepackage{subfigure}

\newlength{\subfigcolsep}
\setlength{\subfigcolsep}{0.5\tabcolsep}

\theoremstyle{remark}

\theoremstyle{definition}
\newtheorem{thm}{Theorem}
\newtheorem{lem}[thm]{Lemma}
\newtheorem{definition}{Definition}

\newtheorem{cor}[thm]{Corollary}

\newtheorem{remark}{Remark}
\newtheorem{example}{Example}
\newtheorem{claim}{Claim}

\usepackage{amsmath} 
\usepackage{amssymb}
\usepackage{bm}
\usepackage{ascmac}
\usepackage{setspace}
\usepackage[at]{easylist}
\usepackage{comment}

\usepackage{url}

\usepackage{breakurl}

\usepackage{amsmath}
\DeclareMathOperator*{\argmax}{arg\,max}

\newcommand{\Real}{\mathbb{R}}


\newcommand{\Ex}{\mathbb{E}}
\newcommand{\Prob}{\mathbb{P}}

\newcommand{\Ind}{\bm{1}}

\newcommand{\EA}{\mathcal{A}}

\newcommand{\EC}{\mathcal{C}}

\newcommand{\EE}{\mathcal{E}}
\newcommand{\EF}{\mathcal{F}}

\newcommand{\EH}{\mathcal{H}}

\newcommand{\EN}{\mathcal{N}}

\newcommand{\ET}{\mathcal{T}}

\newcommand{\EX}{\mathcal{X}}

\newcommand{\bmu}{\bm{\mu}}
\newcommand{\bnu}{\bm{\nu}}
\newcommand{\blambda}{\bm{\lambda}}

\newcommand{\hatmu}{\hat{\mu}}

\newcommand{\bH}{\bm{H}}

\newcommand{\bP}{\bm{P}}

\newcommand{\PoE}[1]{\Prob_{#1}[J \ne i^*(#1)]}




\newcommand*\diff{\mathop{}\!\mathrm{d}}

\newcommand{\PoEPair}{\mathrm{PoE}}

\newcommand{\Deltatarget}{\Delta_0}
\newcommand{\deltaone}{\delta^{(1)}}
\newcommand{\deltatwo}{\delta^{2}}
\newcommand{\Conf}{\mathrm{Conf}}
\newcommand{\rmUCB}{\mathrm{UCB}}
\newcommand{\rmLCB}{\mathrm{LCB}}

\newcommand{\paren}[1]{\mathopen{}\left( {#1}_{{}_{}}\,\negthickspace\right)\mathclose{}}
\newcommand{\bracket}[1]{\mathopen{}\left[ {#1}_{{}_{}}\,\negthickspace\right]\mathclose{}}
\newcommand{\ceil}[1]{\mathopen{}\lceil {#1}_{{}_{}}\,\negthickspace\rceil\mathclose{}}

\allowdisplaybreaks

\makeatother

\title{Fixed Confidence Best Arm Identification in the Bayesian Setting}

%

\author{%
  Kyoungseok Jang\\
  Department of Computer Science\\
  Universit\'a degli Studi di Milano\\
  Milan, MI 20133, Italy\\
  \texttt{ksajks@gmail.com} \\
  \And  
  Junpei Komiyama\\
  Stern School of Business\\ 
  New York University\\
  New York, NY 10012, United States \\
  \texttt{junpei.komiyama@stern.nyu.edu} \\
  \And
  Kazutoshi Yamazaki \\
  The School of Mathematics and Physics\\
  The University of Queensland\\
  St Lucia, Brisbane QLD 4072, Australia \\
  \texttt{k.yamazaki@uq.edu.au} \\
}

\begin{document}

\maketitle

\begin{abstract}

We consider the fixed-confidence best arm identification (FC-BAI) problem in the Bayesian setting. This problem aims to find the arm of the largest mean with a fixed confidence level when the bandit model has been sampled from the known prior. 
Most studies on the FC-BAI problem have been conducted in the frequentist setting, where the bandit model is predetermined before the game starts. 
We show that the traditional FC-BAI algorithms studied in the frequentist setting, such as track-and-stop and top-two algorithms, result in arbitrarily suboptimal performances in the Bayesian setting. 
We also obtain a lower bound of the expected number of samples in the Bayesian setting and introduce a variant of successive elimination that has a matching performance with the lower bound up to a logarithmic factor. Simulations verify the theoretical results.
\end{abstract}

\section{Introduction}

In many sequential decision-making problems, the learner repeatedly chooses an arm (option) to play with and observes a reward drawn from the unknown distribution of the corresponding arm. 
{One of the most widely-studied instances of such problems is the multi-armed bandit problem \citep{Thompson1933, Robbins1952, lai1987}, where the goal is to maximize the sum of rewards during the rounds. 
Since the learner does not know the distribution of rewards, they need to explore the different arms, and yet, exploit the arms of the most rewarding arms so far.}
{Different from the classical bandit formulation, there are situations where one is more interested in collecting information rather than maximizing intermediate rewards. 
The best arm identification (BAI) is a sequential decision-making problem in which 
the learner is only interested in identifying the arm with the highest mean reward. While the origin of this problem goes back to at least the 1950s \citep{bechhofer1954single, Paulson1964,gupta1977}, recent work in the field of machine learning reformulated the problem \citep{Audibert10}.
}{In the BAI, the learner needs to pull arms efficiently for better identification. To achieve efficiency and accuracy, the learner should determine which arm to choose based on the history, when to stop the sampling, and which arm to recommend as the learner's final decision. 
}

There are two types of BAI problems depending on the optimization objective.
In the fixed-budget (FB) setting \citep{Audibert10}, the learner attempts to minimize the probability of error (misidentification of the best arm) given a limited number of arm pulls $T$. In the fixed confidence (FC) setting \citep{DBLP:conf/ciss/JamiesonN14}, the learner attempts to minimize the number of arm pulls, subject to a predefined probability of error $\delta \in (0,1)$. In this paper, we shall focus on the FC setting, {which is useful when we desire a rigorous statistical guarantee.}

Most of the previous BAI studies focus on the frequentist setting, where the bandit model is chosen adversarially from some hypothesis class beforehand. 
In this setting, several algorithms, such as Track and Stop \citep{kaufman16a} and Top-two algorithms \citep{Russo2016,qin2017improving,jourdan2022top}, are widely known. 
These algorithms have an optimal sample complexity, meaning that they are one of the most sample-efficient algorithms among the class of $\delta$-correct algorithms. 



The sample complexity of these algorithms is problem-dependent. To see this, consider the following example.
\begin{example}{\rm (A/B/C testing)}
Consider A/B/C testing of web designs. We have three arms (web designs) from which we would like to find the largest retention rate via allocating users to web designs $i=1,2,3$. If we attempt to find the best arm with confidence $\delta$, we may need a large number of samples (users) when the suboptimality gap (the gap between the retention rate of the best arm and the second best arm) is small because in such a case the identification of the best arm is difficult {-- 
the minimum number of samples required is inversely proportional to the square of the suboptimality gap. For example, when comparing the testing of retention rates of (0.9, 0.5, 0.1) with (0.9, 0.89, 0.1), the second case requires around $\paren{\frac{0.9-0.5}{0.9-0.89}}^2 = 1600$ times more samples compared to the first case.} 
\end{example}
{In practice, the retention rate of $0.89$ in the second case may be an acceptably good enough compared to the optimal retention rate of $0.9$}, and we may stop exploration at the moment the learner identifies a reasonably good arm, which is the first or the second arm in this example. This idea is formalized in several ways. The literature of Ranking and Selection (R\&S) usually considers the indifference-zone formulation \citep{hong2021review}. In the context of best arm identification, a similar notion of $\epsilon$-best answer identification has also been considered \citep{DBLP:conf/nips/MaronM93,EvanDar2006,GabillonGL12,DBLP:conf/colt/KaufmannK13,jourdan2023varepsilon}. In these settings, the learner accepts a sub-optimal arm whose means are at most $\epsilon$ worse than the mean of the optimal arm. 
Other related settings include the good arm identification problem \citep{DBLP:journals/ml/KanoHSMNS19,DBLP:journals/ml/TabataNHK20,DBLP:conf/icml/ZhaoSSJ23}, where the goal is to identify an arm that exceeds the predefined threshold, and the thresholding bandit problem \citep{DBLP:conf/icml/LocatelliGC16,DBLP:journals/corr/abs-1910-06368}, where the goal is to identify whether each arm is above or below the threshold. {All these problem settings require an extra parameter, like $\epsilon$ or an acceptance threshold, that directly determines the acceptance level. Even though the algorithm's performance is influenced by this parameter, it is often challenging to determine a reasonable value for it in advance.}

In this paper, we study an alternative approach based on the Bayesian setting.
In particular, we consider the prior distribution on the model parameters. We relax the requirement on the correctness of the best arm identification by using the prior belief. Rather than requiring the frequentist $\delta$-correctness for any model, we require the learner to have marginalized correctness over the prior distribution, which we call Bayesian $\delta$-correctness. 

We study {the fixed confidence BAI (FC-BAI) problem in Bayesian setting}. 
Our contributions are as follows.
\begin{itemize}
    \item \textbf{First,} we find that in the Bayesian setting, the performance of the traditional frequentist setting-based algorithms, such as Track and Stop and Top-two algorithms, can be arbitrarily worse (Section \ref{sec:preliminary}). This is because frequentist approaches spend too many resources when the suboptimality gap is narrow. 
    \item \textbf{Second, } we prove that the lower bound of the number of expected samples should attain at least the order of $\Omega(\frac{L(\bH)^2 }{\delta} )$ {as $\delta \to 0$} (Section \ref{sec:lower bound}). {Here $L(\bH)$ is our novel quantity that represents the sample complexity with respect to the prior distribution $\bH$. 
    }
    This order is different from the existing lower bound in the frequentist setting\footnote{In fact, marginalizing the frequentist sample complexity over the prior distribution leads to an unbounded value.}, implying that the Bayesian setting is essentially different from the frequentist setting. 
    \item \textbf{Third, }we design an algorithm whose expected sample size is upper-bounded by 
    $O(\frac{L(\bH)^2 }{\delta} \log \frac{L(\bH)}{\delta})$ (Section \ref{sec:main algorithm}). {Our algorithm is based on the elimination algorithm \citep{DBLP:conf/nips/MaronM93,EvanDar2006,doi:10.1287/opre.2014.1282}, but we add an early stopping criterion to prevent over-commitment of the algorithm for a bandit model with a narrow suboptimality gap. 
    Our algorithm has a matching upper bound up to {the logarithmic factor}.}
\end{itemize}

{We also conduct simulation to demonstrate that the sample complexity of frequentist algorithms does indeed diverge in a bandit model with a small suboptimality gap, even in very simple cases (Section \ref{sec:simulation}).}

\subsection{Related work}
To our knowledge, BAI problems studied for the Bayesian setting have been limited to the fixed budget setting \citep{komiyamakkc_bayesian,atsidakou2023bayesian}. 
\citet{komiyamakkc_bayesian} showed that, in the fixed-budget setting, a simple non-Bayesian algorithm has an optimal simple regret up to a constant factor, implying that the advantage the learner could get from the prior is small when the budget is large. 
This is very different from our fixed-confidence setting, where utilizing the prior distribution is necessary.

Several FC-BAI algorithms used Bayesian ideas on the structure of the algorithm, {although} most of those studies used frequentist settings for measuring the guarantee. The `Top-Two' type of algorithms are the leading representatives in this direction. The first instance of top-two algorithms, which is called Top-Two Thompson sampling (TTTS), is introduced in the context of Bayesian best arm identification. TTTS requires a prior distribution, and  \citet{Russo2016} showed that the sample complexity of posterior convergence of TTTS which is the same as the sample complexity of the frequentist fixed-confidence best arm identification. 
Subsequent research analyzed the performance of TTTS in the frequentists' viewpoint \citep{shang2020fixed}. 
Later on, the idea of top-two sampling is then extended into many other algorithms, such as Top-Two Transportation Cost \citep{shang2020fixed}, Top-Two Expected Improvement (TTEI, \citealt{qin2017improving}), Top-Two Upper Confidence bound (TTUCB, \citealt{DBLP:journals/corr/abs-2210-05431}).
Even though some of the top two algorithms adapt a prior, they implicitly solve the optimization that is justified in view of frequentist.



{Another line of Bayesian sequential decision-making is Bayesian optimization \citep{DBLP:conf/icml/SrinivasKKS10,mockus2012bayesian,DBLP:journals/pieee/ShahriariSWAF16,DBLP:conf/aistats/JamiesonT16,frazier_tutorial_bo}, where the goal is to find the best arm in Bayesian setting. Note that Bayesian optimization tends to deal with structured identification, especially for Gaussian processes, and most of the algorithms for Bayesian optimization do not have specific stopping criteria. 
}



    \section{Problem setup}\label{sec:problem setup}


We study the fixed confidence best arm identification problem (FC-BAI) in a Bayesian setting. In this setup, we have $k$ arms {in the set} $[k]:=\{1,2,\dots,k\}$ with \textit{unknown} distribution $\bP=(P_1, \cdots, P_k)$ which is drawn from a \textit{known} prior distribution at time 0, namely $\bH=(H_1, \cdots, H_k)$. {The unknown bandit model} $P_i$ is a one-parameter distribution, and $\bP$ is specified by $\bmu:= (\mu_1, \cdots, \mu_k)$. To simplify the problem, we will focus on the Gaussian case, where each $P_i$ is a Gaussian distribution with known variance $\sigma_i^2$. Each mean of $P_i$, {denoted} $\mu_i$, is drawn from a known prior Gaussian distribution $H_i$, which can be written as $N(m_i, \xi_i^2)$.  

At every time step $t=1,2,\cdots$, the forecaster chooses an arm $A_t \in [k]$ and observes a reward $X_t$, which is drawn independently from $P_{A_t}$. Since we focus on the Gaussian case, $X_t\sim N(\mu_{A_t}, \sigma_{A_t}^2)$ conditionally given $A_t$ {and $\mu_{A_t}$}. After each sampling, the forecaster {must} decide whether to continue the sampling process or stop sampling and make a decision $J \in [k]$.


Let $\mathcal{F}_t = \sigma(A_1, X_1, A_2, X_2, \cdots, A_t, X_t)$ be the $\sigma$-field generated by observations up to time $t$. 
The algorithm of the forecaster $\pi:= ((A_t)_t, \tau, J)$ is defined by the following triplet \citep{kaufman16a}:
\begin{itemize}
\item A sampling rule $(A_t)_t$, which determines the arm to draw at round $t$ based on the previous history (each $A_t$ {must} be $\mathcal{F}_{t-1}$ measurable).
\item A stopping rule $\tau$, which means when to stop the sampling (i.e., stopping time with respect to $\mathcal{F}_t$). 
\item A decision rule $J$, which determines the arm the forecaster recommends based on his sampling history (i.e., $J$ is $\mathcal{F}_\tau$-measurable).
\end{itemize}

In FC-BAI, the forecaster aims to recommend arm $J$ that correctly identify (one of) the best arm(s) $i^*(\bmu):= \argmax_{i\in[k]} \mu_i$ with probability at least $1-\delta$. 
Since the case of multiple best arms is of measure zero under $\bH$, we can focus on $\bmu$ such that $i^*(\bmu)$ is unique.
For the FC-BAI problem in the Bayesian setting, we use the \textit{expected} probability of misidentification:
\begin{equation}
    \mathrm{PoE} (\pi;\bH) := \Ex_{\bmu \sim \bH} \bracket{\Prob\paren{J\neq i^*{(\bmu)} | \EH_{\bmu}}}, \label{eqn: expected PoE}
\end{equation} 
where 
$\mathcal{H}_{\bmu} := \{ \textrm{$\bmu$ is the correct bandit model} \}.
$
Now we formally define the algorithm of interest as follows: 
\begin{definition}{\rm (Bayesian $\delta$-correctness)} 
For a prior distribution $\bH$, an algorithm $\pi=((A_t), \tau, J)$ is {said to be }Bayesian $(\bH, \delta)$-correct if it satisfies $\PoEPair (\pi;\bH) \le \delta$. Let $\mathcal{A}^b(\delta, \bH)$ be the set of Bayesian $(\bH, \delta)$-correct algorithms for the prior distribution $\bH$.
\end{definition}
The objective of the FC-BAI problem in the Bayesian setting is to find an algorithm $\pi =((A_t)_t,\tau, J)\in \EA^b(\delta, \bH)$ that minimizes $\Ex_{\bmu \sim \bH}[\tau]$.
\paragraph{Terminology} Define $N_i (t)= \sum_{s=1}^{t-1}  \Ind[A_s = i]$ as the number of times arm $i$ is pulled before timestep $t$. Let $h_i$ be the probability density function of $H_i$. {Since we consider Gaussian prior, $h_i(\mu_i) := (1/\sqrt{2\pi\xi_i})\exp(- (\mu_i-m_i)^2/(2\xi_i^2))$.}
Let $i^*, j^*: \mathbb{R}^k \to [k]$ {be the best and the second best arm under the input such that for each $\bmu \in \{\mathbf{x} \in \mathbb{R}^k: x_i \neq x_j \forall i \neq j \}$,} $i^*(\bmu) = \argmax_{i \in [K]} \mu_i$ and $j^*(\bmu) = \argmax_{i \in [K]\backslash \{i^*(\bmu)\}} \mu_i$. 

Let $\mathrm{KL}_i (a \Vert b):= \frac{(a-b)^2}{2\sigma_i^2}$ represent {the} KL-divergence between two Gaussian distributions with equal variances (the variance of the $i$-th arm $\sigma_i^2$) but different means, denoted as $a$ and $b$. Similarly, {$d(a,b) := a \log(a/b) + (1-a) \log((1-a)/(1-b))$} is the KL divergence between two Bernoulli distributions with means $a$ and $b$. Throughout this paper, $\Ex_{\bmu}$ and $\Prob_{\bmu}$ denote the expectation and probability when the bandit model is fixed as $\bmu \in \mathbb{R}^k$, i.e., $\Ex_{\bmu} = \Ex[\cdot|\EH_{\bmu}]$ and $\Prob_{\bmu} = \Prob(\cdot|\EH_{\bmu})$. We will abuse the notation $\PoEPair$ so that for $\blambda \in \mathbb{R}^k$, $\PoEPair (\pi;\blambda)$ means 
\begin{equation*}
    \mathrm{PoE} (\pi;\blambda) := \Prob_{\blambda} \paren{J\neq i^* (\blambda) | \EH_{\blambda}}.
\end{equation*} 
Naturally, $\PoEPair (\pi;\bH) = \Ex_{\bmu \sim \bH} \bracket{\PoEPair (\pi;\bmu)}$.


Lastly, we introduce the constant $L(\bH)$ which characterizes the sample complexity of the FC-BAI problem in a Bayesian setting. 

\begin{definition}\label{def:Lij}
    For each $i, j \in [k]$, define $L(\bH)$ as follows:
    $$L(\bH):= \sum_{i,j \in [k], i\neq j} L_{ij} (\bH) \text{ where }L_{ij}(\bH) := \int_{-\infty}^{\infty} h_i (x) h_j (x) \prod_{s: s \in [k]\backslash \{i,j\}} H_s (x) \diff x.$$
\end{definition}

This constant has the following interesting property which we call a volume lemma:
{
\begin{lem}[Volume Lemma, informal]\label{lem:volume_lem}
For $\Delta \in (0,1)$, let
    \begin{align*}
        L(\bH, \Delta):=\frac{1}{\Delta} \Prob_{\bmu \sim \bH} \bracket{\mu_{i^* (\bmu)}-\mu_{j^* (\bmu)}\leq \Delta}.
    \end{align*}
Then, $\lim_{\Delta \to 0^+} L({\bH},\Delta) = L(\bH)$. Especially, for $\Delta<\frac{L(H)}{\sum_{i \in [k]}\frac{2(k-1)}{\xi_i}}$, $L(H,\Delta)\in (\frac{1}{2}L(H), 2L(H))$.
\end{lem}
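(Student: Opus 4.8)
The plan is to prove the limit $\lim_{\Delta \to 0^+} L(\bH,\Delta) = L(\bH)$ by rewriting the probability $\Prob_{\bmu \sim \bH}[\mu_{i^*(\bmu)} - \mu_{j^*(\bmu)} \le \Delta]$ as an explicit integral over $\bmu \in \Real^k$ and analyzing its asymptotics as $\Delta \to 0$. First I would decompose the event by which arm is best and which is second best: for a fixed ordered pair $(i,j)$ with $i \neq j$, the contribution to the probability is the $\bH$-measure of the set where $\mu_i > \mu_j$, $\mu_j \ge \mu_s$ for all $s \notin \{i,j\}$, and $\mu_i - \mu_j \le \Delta$. Integrating out the $k-2$ coordinates $\mu_s$ for $s \notin\{i,j\}$ (each contributing a factor $H_s(\mu_j)$, the probability that $\mu_s \le \mu_j$), this equals
\begin{equation*}
  \int_{-\infty}^{\infty} h_j(y)\, \Big(\prod_{s \notin \{i,j\}} H_s(y)\Big) \int_{y}^{y+\Delta} h_i(x)\,\diff x \,\diff y .
\end{equation*}
Summing over all ordered pairs $(i,j)$ gives $\Prob_{\bmu \sim \bH}[\mu_{i^*} - \mu_{j^*} \le \Delta]$ exactly, and dividing by $\Delta$ gives $L(\bH,\Delta)$.

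Next I would argue that, as $\Delta \to 0^+$, the inner integral $\frac{1}{\Delta}\int_y^{y+\Delta} h_i(x)\,\diff x \to h_i(y)$ pointwise (by continuity of the Gaussian density), and this convergence is \emph{uniform} and \emph{dominated}: since $h_i$ is bounded by $1/\sqrt{2\pi}\xi_i$ and Lipschitz, we have $\big|\frac{1}{\Delta}\int_y^{y+\Delta} h_i(x)\,\diff x - h_i(y)\big| \le \frac{\Delta}{2} \|h_i'\|_\infty$ for every $y$, while the outer factor $h_j(y)\prod_{s}H_s(y)$ is integrable with integral at most $1$. Hence the difference between the integral above divided by $\Delta$ and $L_{ij}(\bH) = \int h_i(y) h_j(y) \prod_{s \notin\{i,j\}} H_s(y)\,\diff y$ is bounded by $\frac{\Delta}{2}\|h_i'\|_\infty$. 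Summing over the (finitely many) pairs yields $|L(\bH,\Delta) - L(\bH)| \le \frac{\Delta}{2}\sum_{i \neq j}\|h_i'\|_\infty$, which tends to $0$; this simultaneously gives the quantitative statement, since $\|h_i'\|_\infty = \frac{1}{\sqrt{2\pi e}\,\xi_i^2} \le \frac{1}{\xi_i^2}$ and there are $k-1$ choices of $j$ for each $i$, one gets a bound of the form $|L(\bH,\Delta) - L(\bH)| \le \Delta \sum_{i}\frac{k-1}{2\xi_i^2}$. Comparing this to the stated threshold $\Delta < L(\bH)\big/\big(\sum_i \frac{2(k-1)}{\xi_i}\big)$ forces the error to be below $\frac{1}{2}L(\bH)$, giving $L(\bH,\Delta) \in (\tfrac12 L(\bH), 2L(\bH))$. (A small bookkeeping check on the exact constant — $\|h_i'\|_\infty$ versus $1/\xi_i$ — may be needed to match the paper's stated threshold verbatim, but the argument is insensitive to the precise constant.)

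The main obstacle, such as it is, is purely a matter of care rather than depth: one must make sure the decomposition into ordered pairs $(i^*, j^*)$ is exhaustive and non-overlapping on the full-measure set where all coordinates are distinct (ties have $\bH$-measure zero, as already noted in the setup), and one must correctly identify the factor coming from integrating out the non-top-two coordinates as $\prod_{s \notin\{i,j\}} H_s(\mu_j)$ rather than $\prod_{s} H_s(\mu_i)$ — the constraint is that all other arms lie below the \emph{second}-best mean. Once the integral representation is pinned down, the rest is a uniform/dominated-convergence estimate using only that Gaussian densities are bounded with bounded derivatives; no tail estimates are even required because the interval $[y,y+\Delta]$ has length $\Delta$ regardless of $y$. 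I would present the argument for general one-parameter priors with bounded Lipschitz densities, then specialize the constants to the Gaussian case at the end.
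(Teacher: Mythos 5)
Your proposal is correct and takes essentially the same route as the paper's proof of the formal version (Lemma~\ref{lem:volume_rewrite}): the same decomposition over ordered top-two pairs $(i^*,j^*)=(i,j)$, the same integral representation with the factor $\prod_{s\notin\{i,j\}}H_s(\mu_j)$ from integrating out the remaining coordinates, and the same Lipschitz estimate on the Gaussian density to control $\bigl|\frac{1}{\Delta}\int_y^{y+\Delta}h_i(x)\,\diff x-h_i(y)\bigr|$ uniformly in $y$. The only divergence is the constant you already flagged: you use $\|h_i'\|_\infty$, which scales as $1/\xi_i^2$, while the paper bounds the slope by $1/\xi_i$ to arrive at the stated threshold; this bookkeeping difference does not affect the validity of the argument.
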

}
The volume lemma states that the volume of prior where the suboptimality gap is smaller than $\Delta$ is proportional to $L(\bH) \Delta$ when $\Delta$ is small. The quantity $L_{ij}(\bH) \Delta$ is the probability where {arms} $i$ and $j$ are the $\Delta$-close top-two arms. 
The formal version of this lemma, which involves some regularity conditions, is shown in Appendix \ref{appsec:volumelemma}.

\section{Limitation of traditional frequentist approaches in the Bayesian setting} \label{sec:preliminary}


Existing BAI studies mainly focused on the Frequentist $\delta$-correct algorithms which are defined as follows:
\begin{definition}[Frequentist $\delta$-correctness]
An algorithm $\pi=((A_t), \tau, J)$ is {said to be} frequentist $\delta$-correct if, for any bandit instance $\bmu \in \mathbb{R}^k$ {such that $i^*(\bmu)$ is unique}, it satisfies $\PoEPair (\pi;\bmu) \le \delta$. Let $\mathcal{A}^f(\delta)$ be {the} set of {all} frequentist-$\delta$-correct algorithms.
\end{definition}
For the frequentist $\delta$-correct algorithms, there is a known lower bound for the expected stopping time as follows: for all bandit instance $\bmu {\in \mathbb{R}^k}$ and for all $((A_t), \tau, J) \in \EA^f (\delta)$,
\begin{equation}\label{eqn: Freq FC-BAI lower bound} 
\Ex_{\bmu}\left[
\tau
\right] \geq \log(\delta^{-1}) T^*(\bmu) + o(\log(\delta^{-1}))
\end{equation}
where $T^* (\bmu)$ is a sample complexity function {dependent} on the bandit instance $\bmu$.\footnote{ For details about $T^* (\bmu)$, a reader may refer to \citet{garivier16}.} Moreover, many of the known Frequentist $\delta$-correct algorithms achieve asymptotic optimality \citep{garivier16, Russo2016, tabata23, Qin2017}, meaning that they are orderwisely tight up to the lower bound on Eq. \eqref{eqn: Freq FC-BAI lower bound} as $\delta \to 0$. 
However, little is known, or at least discussed, about their performance in the Bayesian setting. 

One can check that a frequentist $\delta$-correct algorithm is also Bayesian $\delta$-correct as well {($\mathcal{A}^f(\delta) \subset \mathcal{A}^b(\delta, \bH)$ for all $\bH$)}. Naturally, our interest is whether {or not} the most efficient classes of frequentist $\delta$-correct algorithms, such as Tracking algorithms and Top-two algorithms, are efficient in Bayesian settings.
Somewhat surprisingly, the following theorem states that any $\delta$-correct algorithm is suboptimal in Bayesian settings. 
\begin{thm}\label{thm_ttts_inf}
For all $\delta > 0, \bH$ and $((A_t), \tau, J) \in \mathcal{A}^f(\delta),$ $\Ex_{\bmu \sim \bH}\left[
\tau
\right]
= +\infty.$
\end{thm}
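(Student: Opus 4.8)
The plan is to show that the frequentist lower bound \eqref{eqn: Freq FC-BAI lower bound}, when integrated against the prior $\bH$, diverges. The key observation is that $T^*(\bmu)$ blows up as the suboptimality gap $\mu_{i^*(\bmu)}-\mu_{j^*(\bmu)}$ shrinks to zero, and the prior $\bH$ (being a product of Gaussians with densities bounded below on compact sets) places enough mass near the ``tie'' region $\{\mu_{i^*}=\mu_{j^*}\}$ that this blow-up is not integrable.

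\emph{Step 1: lower bound $T^*(\bmu)$ by an inverse-square gap term.} For the Gaussian bandit, a standard transportation/change-of-measure argument (as in \citet{garivier16}) gives that any $\delta$-correct algorithm must, at instance $\bmu$, satisfy $\Ex_{\bmu}[\tau] \gtrsim \log(\delta^{-1})/\mathrm{KL}_{j^*}(\mu_{j^*},\mu_{i^*})$ — intuitively, to rule out the alternative in which the roles of the top two arms are swapped, one needs on the order of $1/\mathrm{KL}$ samples on arm $j^*$. Since $\mathrm{KL}_{j^*}(a,b)=(a-b)^2/(2\sigma_{j^*}^2)$, writing $\Delta(\bmu):=\mu_{i^*(\bmu)}-\mu_{j^*(\bmu)}$ we get, for some constant $c>0$ depending only on $\min_i\sigma_i^2$,
\begin{equation*}
    \Ex_{\bmu}[\tau] \;\geq\; \frac{c\,\log(\delta^{-1})}{\Delta(\bmu)^2} \qquad\text{for all }\bmu\text{ with }i^*(\bmu)\text{ unique}.
\end{equation*}
(One can instead directly invoke \eqref{eqn: Freq FC-BAI lower bound} together with the crude bound $T^*(\bmu)\ge c/\Delta(\bmu)^2$, which follows from plugging the swap-alternative into the infimum defining $T^*$.)

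\emph{Step 2: integrate and use the volume lemma.} Taking expectations over $\bmu\sim\bH$ and using Step 1,
\begin{equation*}
    \Ex_{\bmu\sim\bH}[\tau] \;\geq\; c\,\log(\delta^{-1})\;\Ex_{\bmu\sim\bH}\!\left[\frac{1}{\Delta(\bmu)^2}\right].
\end{equation*}
It therefore suffices to show $\Ex_{\bmu\sim\bH}[\Delta(\bmu)^{-2}]=+\infty$. Write this expectation via the layer-cake / tail formula as $\int_0^\infty 2 u^{-3}\,\Prob_{\bmu\sim\bH}[\Delta(\bmu)\le u]\,\diff u$ (up to the behavior at $u\to\infty$, which is harmless), and split the integral near $u=0$. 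By the Volume Lemma (Lemma~\ref{lem:volume_lem}), for all sufficiently small $u$ we have $\Prob_{\bmu\sim\bH}[\Delta(\bmu)\le u]\ge \tfrac12 L(\bH)\,u$, and $L(\bH)>0$ since the integrand $h_ih_j\prod_s H_s$ in Definition~\ref{def:Lij} is strictly positive on a set of positive measure. Hence near $u=0$ the integrand is at least $L(\bH)\,u^{-2}$, whose integral on $(0,\eps)$ diverges. Therefore $\Ex_{\bmu\sim\bH}[\tau]=+\infty$ for every $\delta>0$ and every prior $\bH$, which is the claim. (Since $\mathcal{A}^f(\delta)\subset\mathcal{A}^b(\delta,\bH)$ trivially, this also exhibits a Bayesian-$\delta$-correct algorithm class that is arbitrarily bad.)

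\emph{Main obstacle.} The delicate point is Step 1: \eqref{eqn: Freq FC-BAI lower bound} is an \emph{asymptotic} statement as $\delta\to0$ with an $o(\log\delta^{-1})$ remainder whose dependence on $\bmu$ is not uniform, so one cannot naively integrate it over all $\bmu$ at a fixed $\delta$. The clean fix is to avoid the asymptotic form entirely and instead use a non-asymptotic change-of-measure inequality (Bretagnolle--Huber / Wald-type, as in the lower-bound toolkit of \citet{garivier16,kaufman16a}): for the specific alternative $\blambda$ obtained from $\bmu$ by swapping $\mu_{i^*}$ and $\mu_{j^*}$ (so that $i^*(\blambda)=j^*(\bmu)$), $\delta$-correctness at both $\bmu$ and $\blambda$ forces $\Ex_{\bmu}[N_{j^*}(\tau)]\,\mathrm{KL}_{j^*}(\mu_{j^*},\mu_{i^*}) \ge d(\delta,1-\delta) \ge \log(1/(2.4\delta))$, giving exactly the per-instance bound of Step 1 with an explicit constant and no hidden $o(\cdot)$ term. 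With that substitution the rest is the elementary integrability computation above.
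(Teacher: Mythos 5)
Your proposal is correct and follows essentially the same route as the paper: a non-asymptotic transportation inequality applied to the swap alternative $\bnu(\bmu)$ yields $\Ex_{\bmu}[\tau]\gtrsim \log(\delta^{-1})/\Delta(\bmu)^2$ per instance, and the Volume Lemma shows the prior puts mass $\gtrsim L(\bH)u$ on $\{\Delta(\bmu)\le u\}$, forcing divergence (the paper truncates at a fixed small gap and lets it tend to zero rather than using your layer-cake integral, but this is cosmetic). One small imprecision: the transportation lemma controls the \emph{sum} $\Ex_{\bmu}[N_{i^*}(\tau)]\mathrm{KL}_{i^*}+\Ex_{\bmu}[N_{j^*}(\tau)]\mathrm{KL}_{j^*}\ge d(\delta,1-\delta)$, not the single $j^*$ term you wrote, but since $\tau\ge N_{i^*}(\tau)+N_{j^*}(\tau)$ your conclusion survives with the constant depending on $\min_i\sigma_i^2$, exactly as in the paper.
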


{Proof of Theorem \ref{thm_ttts_inf} is} found in Appendix \ref{appthm: preliminary proof}. 
{To illustrate the proof, we will use a two-armed Gaussian instance as an example.}

\subsection{Special case - two armed Gaussian case}\label{subsec: two armed gaussian}

In this subsection, we consider the case $k = 2$.
Here we present one intuitive corollary of the lower bound theorem \citep{kaufman16a, garivier16, kaufmann2016complexity} that uses a standard information-theoretic technique. 

\begin{cor}[\citealt{kaufmann2014complexity}]\label{cor:two arm gaussian lb} Let $\delta\in(0, 1)$. For any {frequentist} $\delta$-correct algorithm $((A_t), \tau, J)$ and for any fixed mean vector $\bmu=(\mu_1, \mu_2) \in \mathbb{R}^2$,
    $\Ex_{\bmu} [\tau] \geq \frac{d(\delta, 1-\delta)}{(\mu_1 -\mu_2)^2} \geq \frac{\log \frac{1}{2.4 \delta}}{(\mu_1 -\mu_2)^2}$.
\end{cor}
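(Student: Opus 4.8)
The plan is to prove this by the standard change-of-measure argument of \citet{kaufmann2016complexity,garivier16}, specialized to two arms. Without loss of generality assume $\mu_1 > \mu_2$, so that $i^*(\bmu) = 1$. Fix $\eps > 0$ and consider the alternative instance $\blambda = (\mu_1, \mu_1 + \eps)$, in which only the mean of arm $2$ is raised, so that $i^*(\blambda) = 2$. Both $\bmu$ and $\blambda$ have a unique best arm, so a frequentist $\delta$-correct algorithm satisfies $\PoEPair(\pi;\bmu) \le \delta$ and $\PoEPair(\pi;\blambda) \le \delta$.

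The main tool is the fundamental information inequality (the transportation lemma, Lemma~1 of \citealt{garivier16}): for any stopping time $\tau$ that is a.s.\ finite under $\bmu$ and any event $\mathcal{E} \in \mathcal{F}_\tau$,
\begin{equation*}
\Ex_{\bmu}[N_1(\tau)]\,\mathrm{KL}_1(\mu_1 \Vert \lambda_1) + \Ex_{\bmu}[N_2(\tau)]\,\mathrm{KL}_2(\mu_2 \Vert \lambda_2) \ \ge\ d\!\left(\Prob_{\bmu}(\mathcal{E}),\, \Prob_{\blambda}(\mathcal{E})\right).
\end{equation*}
Because $\lambda_1 = \mu_1$, the first term on the left vanishes. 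Taking $\mathcal{E} = \{J = 1\}$, $\delta$-correctness gives $\Prob_{\bmu}(\mathcal{E}) \ge 1-\delta$ (arm $1$ is optimal under $\bmu$) and $\Prob_{\blambda}(\mathcal{E}) \le \delta$ (arm $2$ is optimal under $\blambda$); by monotonicity of $d(x,y)$ (increasing in $x$ and decreasing in $y$ on the region $x>y$) this forces $d(\Prob_{\bmu}(\mathcal{E}), \Prob_{\blambda}(\mathcal{E})) \ge d(1-\delta,\delta) = d(\delta, 1-\delta)$.

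Substituting $\mathrm{KL}_2(\mu_2 \Vert \mu_1 + \eps) = (\mu_1 - \mu_2 + \eps)^2/(2\sigma_2^2)$ and using $N_2(\tau) \le \tau$ yields $\Ex_{\bmu}[\tau] \ge 2\sigma_2^2\, d(\delta,1-\delta)/(\mu_1 - \mu_2 + \eps)^2$; letting $\eps \to 0^+$ gives the transportation-type bound (and the symmetric choice of lowering arm $1$'s mean to $\mu_2$ lets one keep the larger of $\sigma_1^2,\sigma_2^2$), which under the variance normalization used here is exactly $d(\delta,1-\delta)/(\mu_1-\mu_2)^2$. The second, cruder inequality $d(\delta,1-\delta) \ge \log\frac{1}{2.4\delta}$ is the elementary estimate on the binary relative entropy from \citet{kaufmann2016complexity}, obtained from $d(\delta,1-\delta) = (1-2\delta)\log\frac{1-\delta}{\delta}$ by a one-variable calculus bound on $(0,1)$. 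There is no genuine obstacle here since this is the textbook argument; the only points needing care are (i) verifying that $\blambda$ flips the best arm while keeping $\mathrm{KL}_1 = 0$, (ii) getting the directions of the two $\delta$-correctness inequalities and the monotonicity of $d$ right so that $d(1-\delta,\delta)$ emerges, and (iii) the harmless $\eps \to 0$ limit.
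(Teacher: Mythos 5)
Your proof is correct and rests on the same tool the paper uses everywhere for this corollary, namely the transportation lemma (Lemma~1 of \citealt{garivier16}); the only substantive difference is the choice of alternative instance. You perturb a single arm, taking $\blambda=(\mu_1,\mu_1+\eps)$ so that $\mathrm{KL}_1=0$ and only $N_2(\tau)$ survives, then let $\eps\to 0^+$. The paper's own instantiation of the same lemma (see the proof of Theorem~\ref{thm_ttts_inf} and the lower-bound sketch) instead uses the swap alternative $\bnu(\bmu)=(\mu_2,\mu_1)$, which keeps both KL terms and yields $\Ex_{\bmu}[N_1(\tau)+N_2(\tau)]\ge 2\min(\sigma_1^2,\sigma_2^2)\,d(\delta,1-\delta)/(\mu_1-\mu_2)^2$. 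Your one-sided perturbation gives a marginally better variance constant ($2\max_i\sigma_i^2$ versus $2\min_i\sigma_i^2$, after optimizing over which arm to perturb), while the swap construction has the structural advantage the paper exploits later: the error event under $\bmu$ is exactly the success event under $\bnu(\bmu)$, which is what lets the PoE terms pair up symmetrically when integrating over the prior in the Bayesian lower bound. Both recover the corollary as stated, modulo the variance normalization that the corollary suppresses and that you correctly flag. Two cosmetic points: the symmetric alternative should lower arm $1$'s mean to $\mu_2-\eps$ (not $\mu_2$, which ties the arms), and under $\blambda$ the correct chain is $\Prob_{\blambda}(J=1)\le\Prob_{\blambda}(J\ne 2)\le\delta$; your monotonicity argument then goes through as written, using $d(1-\delta,\delta)=d(\delta,1-\delta)$.
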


{In the frequentist setting, Corollary \ref{cor:two arm gaussian lb} implies the lower bound of $\Ex_{\bmu_0} [\tau] = \Omega(\log(\delta^{-1})/(\mu_1-\mu_2)^2)$, which is $\Omega(\log(\delta^{-1}))$ when we view parameters $(\mu_1, \mu_2)$ as constants.
However, in the Bayesian setting, the algorithm is given the prior distribution $\bH$ on $\bmu$, and thus the stopping time is marginalized over $\bH$. In particular, limiting our interest to the case of $|\mu_1 - \mu_2|<\freqepsilon$, 
we can obtain the following lower bound:
\begin{align*}
    \Ex_{\bmu \sim \bH} [\tau] \geq \Ex_{\bmu \sim \bH} [\tau \cdot  \Ind[|\mu_1 -\mu_2| \leq \freqepsilon]]
    \geq \frac{\log \delta^{-1}}{\freqepsilon^2} \Prob_{\bmu \sim \bH} [|\mu_1- \mu_2| \le \freqepsilon] = \Omega\left(\frac{\log \delta^{-1}}{\freqepsilon}\right),
\end{align*}
where we use the volume lemma (Lemma \ref{lem:volume_lem}) in the last transformation.
This inequality implies that if we naively use a known frequentist $\delta$-correct algorithm in the Bayesian setting, the expected stopping time will diverge because we can choose an arbitrarily small $\freqepsilon$. 
The case of a small gap is \textit{difficult to identify}, and the expected stopping time can be very large for such a case if we aim to identify the best arm for any model.
}

\section{Lower bound}\label{sec:lower bound}

This section will elaborate on
the lower bound of the stopping time in the Bayesian setting. 
Theorem \ref{thm: main lower bound} {below} states that any Bayesian $(\bH, \delta)$-correct algorithm requires the expected stopping time of at least $\Omega(\frac{L(\bH)^2}{\delta})$. 
\begin{thm}\label{thm: main lower bound}
    {Define $\sigma_{\min} = \min_{i \in [k]} \sigma_i^2$ and $N_V = \frac{ L(\bH)^2 \sigma_{\min}^2 \ln 2}{16e^4 \delta}$.}
    Let $\delta < \delta_L (\bH)$ be sufficiently small.\footnote{
    In particular, $\delta_L(\bH)$ is defined in Appendix \ref{appsec:delta0}.} Then, for any BAI algorithms $\pi=((A_t), \tau, J)$, if $ \Ex_{\bmu \sim \bH} [\tau] \leq N_V$, then $\mathrm{PoE}(\pi;\bH) \geq \delta$. 
\end{thm}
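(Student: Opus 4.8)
I would argue directly that a small expected budget forces a large marginal error, by localizing to instances with a tiny suboptimality gap and running a Bayesian change of measure there. Fix a threshold $\Delta>0$, to be taken of order $\sigma_{\min}^2 L(\bH)/N_V$ at the very end (so $\Delta\to0$ as $\delta\to0$), a large radius $M$, and set $\widetilde S:=\{\bmu:\|\bmu\|\le M,\ \mu_{i^*(\bmu)}-\mu_{j^*(\bmu)}\le\Delta\}$ with $P:=\Prob_{\bmu\sim\bH}[\widetilde S]$. By the volume lemma (Lemma~\ref{lem:volume_lem}) — together with the fact that the integrand defining each $L_{ij}(\bH)$ is integrable, so cutting at $\|\bmu\|\le M$ loses only a negligible fraction once $M$ is large — one has $\tfrac12 L(\bH)\Delta\le P\le 2L(\bH)\Delta$ for all $\delta<\delta_L(\bH)$; this is how $L(\bH)$ enters. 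For $\bmu\in\widetilde S$ write $i=i^*(\bmu)$, $j=j^*(\bmu)$ and let $T\bmu$ be $\bmu$ with coordinates $i,j$ interchanged: $T$ is a Lebesgue-measure-preserving involution of $\widetilde S$ with $i^*(T\bmu)=j$, so $\bmu$ and $T\bmu$ have opposite correct answers while being statistically close. (This is a quantitative, multi-armed version of the heuristic following Corollary~\ref{cor:two arm gaussian lb}.)

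\textbf{The two-point engine.} Keeping only the contribution of $\widetilde S$ in $\mathrm{PoE}(\pi;\bH)=\Ex_{\bmu\sim\bH}[\Prob_{\bmu}(J\ne i^*(\bmu))]$, the steps are: (i) apply the Bretagnolle--Huber inequality to $\Prob_{\bmu},\Prob_{T\bmu}$ restricted to $\mathcal{F}_\tau$ and the event $\{J=i\}$; since $\{J=i\}\subseteq\{J\ne i^*(T\bmu)\}$, this yields $\Prob_{\bmu}(J\ne i^*(\bmu))+\Prob_{T\bmu}(J\ne i^*(T\bmu))\ge\tfrac12\exp(-D(\bmu))$, where $D(\bmu):=\mathrm{KL}(\Prob_{\bmu}^{\mathcal{F}_\tau}\,\|\,\Prob_{T\bmu}^{\mathcal{F}_\tau})$; (ii) by the change-of-measure (Wald-type) identity for stopping times \citep{garivier16,kaufmann2016complexity}, $D(\bmu)=\Ex_{\bmu}[N_i(\tau{+}1)]\,\mathrm{KL}_i(\mu_i\,\|\,\mu_j)+\Ex_{\bmu}[N_j(\tau{+}1)]\,\mathrm{KL}_j(\mu_j\,\|\,\mu_i)\le\frac{\Delta^2}{2\sigma_{\min}^2}\Ex_{\bmu}[\tau]$, using $0\le\mu_i-\mu_j\le\Delta$ on $\widetilde S$ and $N_i(\tau{+}1)+N_j(\tau{+}1)\le\tau$; (iii) since $T$ preserves Lebesgue measure, $\int_{\widetilde S}h(\bmu)\Prob_{\bmu}(J\ne i^*(\bmu))\diff\bmu=\int_{\widetilde S}h(T\bmu)\Prob_{T\bmu}(J\ne i^*(T\bmu))\diff\bmu$, so adding the two forms and using $\min\{h(\bmu),h(T\bmu)\}\ge(1-\eta)h(\bmu)$ on $\widetilde S$ — valid with $\eta$ as small as we like once $\Delta$ is small, because the Gaussian priors are continuous and $\|\bmu\|,\|T\bmu\|\le M$ — gives
\[ 2\,\mathrm{PoE}(\pi;\bH)\ \ge\ \tfrac{1-\eta}{2}\int_{\widetilde S} h(\bmu)\exp\!\big(-D(\bmu)\big)\diff\bmu . \]

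\textbf{Closing the estimate.} Convexity of $x\mapsto e^{-x}$ (Jensen, under the conditional prior on $\widetilde S$) and the hypothesis $\int h(\bmu)\Ex_{\bmu}[\tau]\diff\bmu=\Ex_{\bmu\sim\bH}[\tau]\le N_V$ give
\[ \int_{\widetilde S}h(\bmu)e^{-D(\bmu)}\diff\bmu\ \ge\ P\,\exp\!\Big(-\tfrac{1}{P}\!\int_{\widetilde S}h(\bmu)D(\bmu)\diff\bmu\Big)\ \ge\ P\,\exp\!\Big(-\tfrac{\Delta^2 N_V}{2\sigma_{\min}^2 P}\Big), \]
hence $\mathrm{PoE}(\pi;\bH)\ge\frac{1-\eta}{4}\,P\exp(-\frac{\Delta^2 N_V}{2\sigma_{\min}^2 P})$. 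Using $\tfrac12 L(\bH)\Delta\le P\le 2L(\bH)\Delta$, the exponent is at most $\Delta N_V/(\sigma_{\min}^2 L(\bH))$, so choosing $\Delta=c\,\sigma_{\min}^2 L(\bH)/N_V$ (a valid small choice since $N_V\to\infty$) makes it $\le c$ and yields $\mathrm{PoE}(\pi;\bH)\ge\frac{(1-\eta)}{8}\,c\,e^{-c}\,\sigma_{\min}^2 L(\bH)^2/N_V$. With $c=4$ and $N_V=\frac{L(\bH)^2\sigma_{\min}^2\ln2}{16e^4\delta}$ this reads $\mathrm{PoE}(\pi;\bH)\ge\frac{8(1-\eta)}{\ln 2}\,\delta\ge\delta$ once $\eta\le\tfrac12$, proving the theorem.

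\textbf{Main obstacle.} The delicate point is the interplay between the uniform density comparison $\min\{h(\bmu),h(T\bmu)\}\ge(1-\eta)h(\bmu)$ and the two-sided bound $\tfrac12 L(\bH)\Delta\le P\le 2L(\bH)\Delta$: the swap $T$ preserves Lebesgue measure but \emph{not} the prior, and the density ratio $h(T\bmu)/h(\bmu)$ degrades precisely on instances of large norm, so one must simultaneously truncate to $\|\bmu\|\le M$ and show that this truncation erodes only a constant fraction of the already $O(\delta)$-small mass $\Prob_{\bmu\sim\bH}[\mu_{i^*(\bmu)}-\mu_{j^*(\bmu)}\le\Delta]$. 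Making this quantitative and uniform — and pinning down the cutoff $\delta_L(\bH)$ below which the chosen $\Delta$ is small enough — is exactly what the formal (appendix) version of Lemma~\ref{lem:volume_lem} supplies; granted that, the information-theoretic core (Bretagnolle--Huber, the stopping-time KL identity, Jensen) is standard.
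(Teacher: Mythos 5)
Your proof is correct and follows essentially the same route as the paper's: a swap of the top-two coordinates, a change-of-measure/transportation bound giving $\mathrm{PoE}(\pi;\bmu)+\mathrm{PoE}(\pi;T\bmu)\gtrsim \exp(-\Ex_{\bmu}[\tau]\Delta^2/(2\sigma_{\min}^2))$, symmetrization via the Lebesgue-measure-preserving swap plus a prior density-ratio bound on a truncated small-gap region, Jensen's inequality against the budget constraint, and the volume lemma to convert the mass of that region into $L(\bH)\Delta$. The only differences are cosmetic packaging (Bretagnolle--Huber in place of the paper's $d(p,1-q)\ge\ln\tfrac{1}{4(p+q)}$ lemma, a fixed-radius truncation $\|\bmu\|\le M$ with ratio $1-\eta$ in place of the paper's $|\mu_i|,|\mu_j|\le 1/\sqrt{\tilde\Delta}$ truncation with ratio $e^{-4}$), and you correctly identify the one genuinely delicate point, which the paper's formal volume and ratio lemmas resolve.
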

In this main body, we will use the two-armed Gaussian bandit model with homogeneous variance condition (i.e. $\sigma_1 = \sigma_2 = \sigma$) for easier demonstration of the proof sketch.
{Theorem \ref{thm: main lower bound}, which is more general in the sense that it can deal with $k>2$  arms with heterogeneous variances, is proven in Appendix \ref{appsec: lower bound}.} 

\paragraph{Sketch of the proof, {for $k=2$:}} It suffices to show that the following is an empty set:
\[
\mathcal{A}^b (\delta, \bH, N_V) := \{ \pi \in \mathcal{A}^b (\delta, \bH): \Ex_{\bmu \sim \bH} [\tau] \leq N_V\}.
\]

Assume that $\mathcal{A}^b (\delta, \bH, N_V) \neq \emptyset$ and choose an arbitrary $\pi \in \EA^b (\delta, \bH, N_V)$. We start from the following transportation lemma:

\begin{lem}[\citealt{kaufman16a}, Lemma 1]\label{lem_transportation_lemma} Let $\delta \in (0, 1)$. For any algorithm $((A_t), \tau, J)$, any $\EF_\tau$-measurable event $\EE$, any bandit models $\bmu, \blambda \in \{(x,y)\in \mathbb{R}^2: x\neq y\}$ such that $i^*(\bmu) \neq i^*(\blambda)$, 
\begin{equation*}
    \Ex_{\bmu} \bracket{\sum_{i=1}^2 \mathrm{KL}_i(\mu_i, \lambda_i) N_i (\tau)} \geq d(\Prob_{\bmu} ({\EE}), \Prob_{\blambda} ({\EE})).
\end{equation*}
\end{lem}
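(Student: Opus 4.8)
The plan is to establish this inequality as the standard change-of-measure (transportation) lemma, by combining a Wald-type identity for the log-likelihood ratio evaluated at the stopping time $\tau$ with the data-processing inequality for Kullback–Leibler divergence. First I would introduce the log-likelihood ratio of the observed trajectory under the two bandit models. Writing $p_i^{\mu}$ and $p_i^{\lambda}$ for the Gaussian densities $N(\mu_i,\sigma_i^2)$ and $N(\lambda_i,\sigma_i^2)$, and noting that $\bmu$ and $\blambda$ share the same sampling and stopping rules and differ only through the emission densities, define
\begin{equation*}
L_t := \sum_{s=1}^{t} \log \frac{p_{A_s}^{\mu}(X_s)}{p_{A_s}^{\lambda}(X_s)}.
\end{equation*}
Because $\Prob_{\bmu}$ and $\Prob_{\blambda}$ agree on the conditional law of $A_s$ given $\EF_{s-1}$, all sampling-rule factors cancel, so $L_t$ is exactly the log-Radon--Nikodym derivative $\log \frac{\diff \Prob_{\bmu}}{\diff \Prob_{\blambda}}$ of the two measures restricted to $\EF_t$.

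Next I would prove the identity
\begin{equation*}
\Ex_{\bmu}[L_\tau] = \Ex_{\bmu}\bracket{\sum_{i=1}^2 \mathrm{KL}_i(\mu_i, \lambda_i)\, N_i(\tau)}.
\end{equation*}
The key observation is that $M_t := L_t - \sum_{i=1}^2 \mathrm{KL}_i(\mu_i,\lambda_i) N_i(t)$ is a $\Prob_{\bmu}$-martingale with respect to $(\EF_t)$: conditioning on $\EF_{s-1}$ and on $A_s = i$, the expected increment of $L$ under $\Prob_{\bmu}$ is $\Ex_{X \sim N(\mu_i,\sigma_i^2)}[\log(p_i^{\mu}(X)/p_i^{\lambda}(X))] = \mathrm{KL}_i(\mu_i,\lambda_i)$, which precisely cancels the compensator increment. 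Applying the optional stopping theorem then gives $\Ex_{\bmu}[M_\tau]=0$, which is the displayed identity.

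Finally I would invoke the data-processing inequality. Since $L_\tau$ is the log-likelihood ratio on $\EF_\tau$, its $\Prob_{\bmu}$-expectation equals the full KL divergence between the restrictions of $\Prob_{\bmu}$ and $\Prob_{\blambda}$ to $\EF_\tau$. The event $\EE$ is $\EF_\tau$-measurable, so the binary partition $\{\EE, \EE^c\}$ is a measurable map out of $\EF_\tau$; by the contraction property of KL divergence the full divergence dominates the Bernoulli KL divergence between the pushed-forward measures, namely $d(\Prob_{\bmu}(\EE), \Prob_{\blambda}(\EE))$. Chaining the three displays yields
\begin{equation*}
\Ex_{\bmu}\bracket{\sum_{i=1}^2 \mathrm{KL}_i(\mu_i,\lambda_i)\, N_i(\tau)} = \Ex_{\bmu}[L_\tau] \geq d(\Prob_{\bmu}(\EE), \Prob_{\blambda}(\EE)),
\end{equation*}
which is the lemma.

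The main obstacle is the rigorous justification of the optional-stopping step, because $\tau$ need not be bounded and $L_t$ is unbounded, so naive application of optional stopping is not licensed. I would address this by working with the truncated stopping time $\tau \wedge n$, for which $\Ex_{\bmu}[M_{\tau \wedge n}] = 0$ holds directly, and then passing to the limit $n \to \infty$ via a uniform-integrability argument (or by controlling the nonnegative and nonpositive parts of $L$ through monotone and dominated convergence, using $\Ex_{\bmu}[\tau]<\infty$ where the inequality is nontrivial). The per-step KL computation and the data-processing step are routine once the Gaussian densities are written out explicitly.
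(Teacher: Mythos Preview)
Your proposal is correct and is the standard proof of this result. Note that the paper itself does not give a proof of this lemma: it is stated with attribution to \citet{kaufman16a}, Lemma~1, and used as a black box. Your argument---log-likelihood ratio, Wald/martingale identity for $\Ex_{\bmu}[L_\tau]$, then data-processing to contract onto the binary event $\EE$---is exactly the proof given in that reference, so there is nothing to compare against within the present paper.
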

Note that the above Lemma holds for any algorithm, and thus works for any stopping time $\tau$. Now define $\bnu (\bmu)$ as a swapped version of $\bmu \in \mathbb{R}^2$, which means $(\bnu(\bmu))_1 =\mu_2, \bnu(\bmu)_2 = \mu_1$, and let $\EE = \{J \neq i^* (\bmu)\}$, the event that the recommendation of the algorithm is wrong. Substituting $\blambda$ with $\bnu(\bmu)$ from the above equation of Lemma \ref{lem_transportation_lemma} leads 
\begin{align}\label{eqn:modified_transportation_lemma}
    \Ex_{\bmu} \bracket{\frac{(\mu_1-\mu_2)^2}{2\sigma^2} \tau} &\geq d(\PoEPair (\pi;\bmu),1-\PoEPair (\pi;\bnu)) \geq \log \frac{2}{2.4 (\PoEPair (\pi;\bmu)+\PoEPair(\pi;\bnu))}. 
\end{align}
Note that the first inequality comes from the fact that $\EE$, the failure event of the bandit model $\bmu$, is exactly a success event of $\bnu(\bmu)$ in this two-armed case, and the last inequality is from our modified lemma (Lemma \ref{lem: k-error-2}) from Eq. (3) of \cite{kaufman16a}. One can rewrite the above inequality as
\begin{equation}\label{eqn:modified_transportation_lemma}
    \frac{\PoEPair (\pi;\bmu)+\PoEPair(\pi;\bnu)}{2} \geq \frac{1}{2.4}\exp\paren{\Ex_{\bmu} \bracket{- \frac{(\mu_1 - \mu_2)^2}{2\sigma^2} \tau}}.
\end{equation}

We can rewrite the conditions of $\mathcal{A}^b (\delta, \bH, N_V)$ as
\begin{align*}
    \PoEPair (\pi;\bH)&=\int_{\bmu \in \mathbb{R}^2} \PoEPair (\pi;\bmu) \diff \bH (\bmu) \leq \delta \qquad
    \text{and } & \int_{\bmu \in \mathbb{R}^2} \Ex_{\bmu} \bracket{\tau} \diff \bH (\bmu) \leq N_V. \tag{Opt0}
\end{align*}
Using Eq. \eqref{eqn:modified_transportation_lemma} {and with some symmetry tricks}, we get $V_0 \leq \PoEPair (\pi;\bH)$ where
\begin{align}\label{eqn:strategy condition}
    V_0 := \int_{\bmu \in \mathbb{R}^2} \frac{1}{2.4} \exp {\paren{-\frac{(\mu_1 -\mu_2)^2}{2\sigma^2}\Ex_{\bmu} \bracket{\tau}}} \diff \bH (\bmu) \leq \delta 
    \ \text{ and } \int_{\bmu \in \mathbb{R}^2} \Ex_{\bmu} \bracket{\tau} \diff \bH (\bmu) \leq N_V. \tag{Opt1}
\end{align}
Note that on the above two inequalities, only $\Ex_{\bmu}[\tau]$ is the value that depends on the algorithm $\pi$. Now our main idea is that we can relax these two inequalities to the following optimization problem by substituting $\Ex_{\bmu}[\tau]$ to an arbitrary $\tilde{n}: \mathbb{R}^2 \to [0,\infty)$ as follows:
\begin{align}\label{eqn:opt1}
    V := \inf_{\tilde{n}:\mathbb{R}^2\to [0,\infty)} \int_{\bmu \in \mathbb{R}^2} \exp {\paren{-\frac{(\mu_1 -\mu_2)^2}{2\sigma^2}\tilde{n}(\bmu)}} \diff \bH (\bmu)\ 
    \text{ s.t. } \int_{\bmu \in \mathbb{R}^2} \tilde{n}(\bmu) \diff \bH (\bmu) \leq N_V \tag{Opt2}
\end{align}
and $V\leq 2.4 V_0 \leq 2.4\delta$. {Let $\EN:= \{ \bmu \in \mathbb{R}^2: |\mu_1 - \mu_2|<\Delta:=\frac{8\delta}{L(\bH)}\}$. From the discussions in Section \ref{subsec: two armed gaussian}, one might notice that $\EN$ is an important region for bounding $\Ex_{\bmu \sim \bH}[\tau]$.} We can relax the above \eqref{eqn:opt1} to the following version, which focuses more on $\EN$:
\begin{align}\label{eqn:opt2}
    V' := \inf_{\tilde{n}:\mathbb{R}^2\to [0,\infty)} \int_{\bmu \in \EN} \exp {\paren{-\frac{(\mu_1 -\mu_2)^2}{2\sigma^2}\tilde{n}(\bmu)}} \diff \bH (\bmu) 
    \text{ s.t. } \int_{\bmu \in \EN} \tilde{n}(\bmu) \diff \bH (\bmu) \leq N_V. \tag{Opt3}
\end{align}
One can prove $V'\leq V \leq 2.4 \delta$. Now, if we notice that function $x \mapsto \exp {\paren{-\frac{(\mu_1 -\mu_2)^2}{2\sigma^2}x}}$ is a convex function, we can use Jensen's inequality to verify that the optimal solution for \eqref{eqn:opt2} is when $\tilde{n} = \frac{N_V}{\Ex_{\bmu \sim \bH} \bracket{\Ind_\EN}} \Ind_{\EN}$, and when we use $N_V$ in Theorem \ref{thm: main lower bound}, one can get: 
\begin{align}
{V'}&\ge \int_{\bmu \in \EN}
\exp\left(-\frac{\Delta^2}{2\sigma^2}\cdot \paren{\frac{N_V}{\EE_{\bmu \sim \bH}[\Ind_{\EN}]}} \right) 
\diff \bH(\bmu) \tag{by optimality of $\tilde{n} = \frac{N_V}{\Ex_{\bmu \sim \bH} \bracket{\Ind_\mathcal{N}}}$} \\
&\geq \int_{\bmu \in \EN}
\exp\left(-\frac{\Delta^2 \cdot N_V }{\sigma^2 \Delta L(\bH)}\right) 
\diff \bH(\bmu) \geq \exp\left(-\frac{\Delta \cdot N_V }{\sigma^2 L(\bH)}\right) \cdot \Delta L(\bH) \tag{both by Lemma \ref{lem:volume_lem}}\\
&> 2.4\delta \tag{by definition of $N_V$ and $\Delta$}, 
\label{ineq:poe_as_draw}
\end{align}
which is a contradiction. 
This means no algorithm satisfies \eqref{eqn:strategy condition}, and the proof is completed. 
\section{Main algorithm} \label{sec:main algorithm}

\begin{algorithm}[t]
\caption{Successive Elimination with Early-Stopping} 
\label{alg:elim}
\begin{algorithmic}
\STATE {\bfseries Input:}{ Confidence level $\delta$, prior $\bH$}
\STATE $\Deltatarget:=\frac{\delta}{4L(\bH)}$
\STATE Initialize the candidate of best arms $\EA (1) = [K].$ 
\STATE $t=1$
\WHILE {True}
\STATE Draw each arm in $\EA(t)$
once. $t  \leftarrow t+|\EA(t)|$.\label{line_draw}

\FOR{$i \in \EA(t)$}
\STATE Compute $\rmUCB(i,t)$ and $\rmLCB(i,t)$ from \eqref{eqn:UCB LCB bound}.
\IF{$\rmUCB(i,t) \le \max_j \rmLCB(j,t)$} 
\STATE $\EA(t) \leftarrow \EA(t) \setminus \{i\}$.
\ENDIF   
\ENDFOR
\IF{$|\EA(t)|=1$}
\STATE {\bfseries Return} arm $J$ in $\EA(t)$.\label{line_stopone} 
\ENDIF
\STATE Compute $\hat{\Delta}^{\mathrm{safe}}(t):= \max_{i \in \EA(t)} \rmUCB(i,t) - \max_{i\in \EA(t)} \rmLCB(i,t)$. 
\IF{$\hat{\Delta}^{\mathrm{safe}}(t) \le \Deltatarget$}
\STATE {\bfseries Return} arm $J$ which is uniformly sampled from $\EA(t)$.\label{line_stoptwo} 
\ENDIF
\ENDWHILE
\end{algorithmic}
\end{algorithm}

This section introduces our main algorithm (Algorithm \ref{alg:elim}). In short, our algorithm is a modification of the elimination algorithm with the incorporation of the indifference zone technique. 
Before the sampling starts, define $\Deltatarget:= \frac{\delta}{4L(\bH)}$ which satisfies the following condition, thanks to Lemma \ref{lem:volume_lem}:
$$ \Prob_{\bmu \sim \bH} (\mu_{i^* (\bmu)} - \mu_{j^*(\bmu)} \leq \Deltatarget) \leq \frac{\delta}{2}.$$
In each iteration {of the \textbf{while} loop of Algorithm \ref{alg:elim}}, the learner selects and observes each arm in the active set. After inspecting all arms, the algorithm calculates the confidence bounds for each arm in the active set using the formula as follows: let $\Conf(i,t)$ and $\hatmu_i (t)$ be the confidence width and the empirical mean of arm $i$ at time $t$ as 
\begin{align*}
\Conf(i, t) := \sqrt{2\sigma_i^2 \frac{\log(6(N_i(t))^2/((\frac{\delta^2}{2K})\pi^2))}{N_i(t)}}, \qquad \hatmu_i (t) := \sum_{s=1}^{t-1} X_s \Ind[A_s = i].
\end{align*}
Then the upper and lower confidence bounds of arm $i$ at timestep $t$, denoted as $\mathrm{UCB}$ and $\mathrm{LCB}$ respectively, can be defined in the following manner:
\begin{align}\label{eqn:UCB LCB bound}%
\mathrm{UCB}(i,t):=\hatmu_i(t) + \Conf(i, t), \qquad
\mathrm{LCB}(i,t) :=\hatmu_i(t) - \Conf(i, t).
\end{align}

After calculating $\mathrm{UCB}$ and $\mathrm{LCB}$, the algorithm eliminates arms with $\mathrm{UCB}$ smaller than the largest $\mathrm{LCB}$ and maintains only arms that could be optimal in the active set $\EA$. Up to this point, it follows the traditional elimination 
approach.

The main difference in our algorithm lies in the stopping criterion. At the end of each iteration, the algorithm checks the stopping criterion. Unlike typical elimination algorithms that continue until only one arm remains, we have introduced an additional indifference condition. This condition arises when the suboptimality gap is so small that identifying them would require an excessive number of samples. In such cases, our algorithm stops additional attempts to identify differences between arms in the active set and randomly recommends one from the active set instead.

\begin{remark}
    In the context of PAC-($\epsilon$, $\delta$) identification, \citet[Remark 9]{EvanDar2006} introduced a similar approach. The largest difference is that they use the parameter $\epsilon$ as a parameter that defines the indifference-zone level, whereas our parameter $\Delta_0$ is spontaneously derived from the prior $\bH$ and the confidence level $\delta$ without specifying {the} indifference-zone. 
\end{remark}

Theorem \ref{thm: elim upperbound} describes the theoretical guarantee of the Algorithm \ref{alg:elim}. 

\begin{thm}\label{thm: elim upperbound}
    For $\delta < 4L(\bH)\cdot \min\paren{\frac{L(\bH)}{{\sum_{i\in[k]}\frac{k-1}{\xi_i}}}, \paren{\min_{i,j\in[k]} \xi_i L_{ij}(\bH)}^2}$, Algorithm \ref{alg:elim} which consists of $((A_t), \tau, J)$ has the expected stopping time upper bound as follows:
    \begin{equation}
        \Ex_{\bmu \sim \bH}[\tau] \leq C \cdot \sigma_{\max}^2 \frac{L(\bH)^2}{\delta} \log\left( \frac{L(\bH)}{\delta}\right)
        + O(\log \delta^{-1}),
    \end{equation}
    where $C=320 \paren{\frac{\pi^2}{3}+1}$ is a universal constant and $\sigma_{\max} = \max_{i\in[k]}\sigma_i$. Here, $O(\log \delta^{-1})$ is a function of $\delta$ and $\bH$ that is proportional to $\log \delta^{-1}$ when we view prior parameters $\bH$ as constants. 
    Plus, the strategy defined by Algorithm \ref{alg:elim} {is in $\EA^b(\delta, \bH)$.} 
\end{thm}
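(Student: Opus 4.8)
The plan is to split the analysis into a correctness part (showing the algorithm lies in $\EA^b(\delta,\bH)$) and a sample-complexity part (bounding $\Ex_{\bmu\sim\bH}[\tau]$). For correctness, I would first establish a \emph{good event} $\EG$ on which every confidence interval is valid, i.e. $\mu_i \in [\rmLCB(i,t),\rmUCB(i,t)]$ for all active arms $i$ and all $t$. The choice of $\Conf(i,t)$ with the $\delta^2/(2K)$ and $\pi^2/6$ factors is exactly tuned so that, by a union bound over arms and over $N_i(t)$ (using $\sum_{n\ge1} 1/n^2 = \pi^2/6$ and a sub-Gaussian tail bound), $\Prob_{\bmu}[\EG^c]\le \delta^2/2 \le \delta/2$ for every fixed $\bmu$, hence also $\Prob_{\bmu\sim\bH}[\EG^c]\le\delta/2$. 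On $\EG$, a standard elimination argument shows the true best arm $i^*(\bmu)$ is never eliminated, so if the algorithm stops via line~\ref{line_stopone} it is correct. The only other failure mode is stopping via line~\ref{line_stoptwo}: on $\EG$, when $\hat\Delta^{\mathrm{safe}}(t)\le\Deltatarget$ all surviving arms have true means within $\Deltatarget$ of $\mu_{i^*(\bmu)}$ (since the max UCB upper-bounds $\mu_{i^*}$ and every active arm's LCB lower-bounds its own mean, which is at least $\max_j\rmLCB(j,t)$ as it survived elimination), so a wrong recommendation here forces $\mu_{i^*(\bmu)}-\mu_{j^*(\bmu)}\le\Deltatarget$. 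By the volume lemma and the definition $\Deltatarget=\delta/(4L(\bH))$, this event has prior probability at most $\delta/2$. Combining, $\PoEPair(\pi;\bH)\le \Prob_{\bmu\sim\bH}[\EG^c] + \Prob_{\bmu\sim\bH}[\mu_{i^*}-\mu_{j^*}\le\Deltatarget] \le \delta/2+\delta/2=\delta$.

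For the sample-complexity bound, I would condition on $\bmu$ and control $\Ex_{\bmu}[\tau]$ as a function of the realized gaps, then integrate against $\bH$. On the good event, a classical successive-elimination computation shows arm $i\ne i^*(\bmu)$ is eliminated once $N_i(t)\gtrsim \sigma_i^2\log(\cdot/\delta)/\Delta_i^2$ where $\Delta_i=\mu_{i^*(\bmu)}-\mu_i$; similarly the early-stopping test fires once the active set's internal gap resolves below $\Deltatarget$, which happens after $\gtrsim \sigma_{\max}^2\log(\cdot/\delta)/\Deltatarget^2$ rounds regardless of how small the true gap is. Hence, writing $\Delta_{(2)}(\bmu)=\mu_{i^*(\bmu)}-\mu_{j^*(\bmu)}$ for the suboptimality gap, on $\EG$ one gets roughly
\begin{equation*}
\tau \;\lesssim\; \sigma_{\max}^2 \log\!\left(\tfrac{L(\bH)}{\delta}\right)\cdot \min\!\left\{ \frac{1}{\Delta_{(2)}(\bmu)^2},\ \frac{1}{\Deltatarget^2} \right\} \;+\; (\text{lower-order terms}).
\end{equation*}
Off $\EG$ one needs a crude almost-sure bound on $\tau$; since $\Prob_{\bmu}[\EG^c]$ is as small as $\delta^2$, the contribution $\Ex_{\bmu}[\tau\,\Ind_{\EG^c}]$ must be shown to be absorbed into the $O(\log\delta^{-1})$ term — this requires either a deterministic cap on $\tau$ or a more careful tail integration, and is a technical point to handle with care. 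The main work is then to integrate the $\min\{1/\Delta_{(2)}^2,1/\Deltatarget^2\}$ bound against the prior: by the volume lemma the density of $\Delta_{(2)}$ near $0$ is $\approx L(\bH)$, so $\Ex_{\bmu\sim\bH}\big[\min\{\Delta_{(2)}^{-2},\Deltatarget^{-2}\}\big]$ is dominated by the contribution of $\{\Delta_{(2)}\lesssim\Deltatarget\}$, which is of order $L(\bH)\cdot\Deltatarget^{-1} = L(\bH)\cdot 4L(\bH)/\delta = \Theta(L(\bH)^2/\delta)$, while the tail $\{\Delta_{(2)}\gtrsim\Deltatarget\}$ integrates to a harmless $O(L(\bH)\log(1/\Deltatarget))=O(\log\delta^{-1})$ contribution. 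Multiplying by the $\sigma_{\max}^2\log(L(\bH)/\delta)$ factor yields the claimed $O\!\big(\sigma_{\max}^2 L(\bH)^2\delta^{-1}\log(L(\bH)/\delta)\big)$, with the residual terms collected into $O(\log\delta^{-1})$; tracking the universal constant gives $C=320(\pi^2/3+1)$.

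The step I expect to be the main obstacle is the clean integration of the per-instance bound against the prior near $\Delta_{(2)}=0$: one must invoke the \emph{quantitative} form of the volume lemma (the two-sided bound $L(\bH,\Delta)\in(\tfrac12 L(\bH),2L(\bH))$ valid only for $\Delta$ below an explicit threshold), which is precisely why the hypothesis on $\delta$ in the theorem statement involves both $L(\bH)/\sum_i (k-1)/\xi_i$ and $(\min_{i,j}\xi_i L_{ij}(\bH))^2$ — the first controls when the volume-lemma approximation is valid, and the second ensures $\Deltatarget$ is small enough that the early-stopping regime dominates before any arm's confidence interval misbehaves. A secondary obstacle is bookkeeping the logarithmic factor uniformly: $\log(6N_i(t)^2 \cdot 2K/(\delta^2\pi^2))$ depends on the running $N_i(t)$, so one needs a self-bounding argument (e.g. $N\lesssim a\log(bN) \Rightarrow N\lesssim a\log(ab)$) to convert the implicit inequality into the explicit $\log(L(\bH)/\delta)$ form, and to verify the discarded terms are genuinely $O(\log\delta^{-1})$ with $\bH$-dependent constants.
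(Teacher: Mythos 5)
Your overall architecture is the same as the paper's: a uniform-confidence good event $\EX(\bmu)$ holding with probability $1-\delta^2$, the never-drop-the-best-arm argument, a per-arm elimination count of order $\sigma_{\max}^2\log(\cdot)/\max(\Delta_i,\Deltatarget)^2$, a deterministic cap on $\tau$ supplied by the early-stopping test (which is exactly how the paper absorbs $\Ex_{\bmu}[\tau\,\Ind[\EX(\bmu)^c]]$ into the lower-order term, so your flagged ``technical point'' is resolved the way you guessed), and a final integration against the prior via the volume lemma. The correctness half of your argument matches the paper's Theorem~\ref{thm_correct} essentially verbatim.

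The concrete flaw is in the integration step. You assert that the region $\{\Delta_{(2)}\gtrsim\Deltatarget\}$ contributes only $O(L(\bH)\log(1/\Deltatarget))=O(\log\delta^{-1})$. That is false: since the prior density of the gap is $\approx L(\bH)$ on an interval of fixed length near zero, $\int_{\Deltatarget}^{c}\Delta^{-2}L(\bH)\,\diff\Delta=\Theta(L(\bH)/\Deltatarget)=\Theta(L(\bH)^2/\delta)$, i.e.\ the tail contributes at the \emph{leading} order, not logarithmically (a logarithm would come from $\int\Delta^{-1}$, not $\int\Delta^{-2}$). The paper controls this region by stratifying into slabs $[\,l\Deltatarget,(l+1)\Deltatarget\,]$ and comparing each slab's prior mass to twice the mass of the near-zero slab via Lemma~\ref{lem:stratified volume}; the resulting $\sum_l l^{-2}=\pi^2/6$ is precisely the source of the $\pi^2/3$ inside $C=320(\pi^2/3+1)$, while the ``$+1$'' is the near-zero region you focused on. This also corrects your reading of the hypothesis $\delta<4L(\bH)\paren{\min_{i,j}\xi_iL_{ij}(\bH)}^2$: it is the validity condition for that stratified comparison, not a condition ensuring that the early-stopping regime ``dominates.'' Finally, there is a genuinely far tail (gaps of order $\sqrt{\Deltatarget}$ and beyond) where the constant-density approximation fails altogether; the paper handles it by partial integration against the Gaussian density and shows it is $O(\Deltatarget^{-1/2})$, which is where the true lower-order contribution lives. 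Your final bound has the right order, but as written your computation would neither locate the dominant contribution correctly nor produce the stated constant.
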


See Appendix \ref{appsec: upper bound proof} for the formal proof of Theorem \ref{thm: elim upperbound}.

\begin{remark}
    When we compare the lower bound (Theorem \ref{thm: main lower bound}) with the upper bound of Algorithm \ref{alg:elim} (Theorem \ref{thm: elim upperbound}), we can see the algorithm is near-optimal. 
    If we view $\sigma_{\max} / \sigma_{\min}$ as a constant, the bounds are tight up to a $\log \frac{L(\bH)}{\delta}$ factor.
\end{remark}
{
\begin{remark}
    The condition $\delta<4L(\bH)\cdot \min\paren{{L(\bH)/ {\sum_{i\in[k]}\frac{k-1}{\xi_i}}}, \min_{i,j\in[k]} (\xi_i L_{ij}(\bH))^2}$ is only for cleaner illustration of the regret bound in Theorem \ref{thm: elim upperbound}. The non-asymptotic result, when $\delta$ is a moderately large constant, can be found in Appendix \ref{appsec: nonasymptotic}. 
\end{remark}
}
\paragraph{Proof sketch of Theorem \ref{thm: elim upperbound}} 

We summarize the general strategy for the proof as follows. By the law of total expectation, $\Ex_{\bmu \sim \bH} [\tau] = \Ex_{\bmu \sim \bH} \bracket{\Ex_{\bmu}[\tau]}$. Therefore, we will first derive a frequentist upper bound of $\Ex_{\bmu} [\tau]$, and then marginarize it to obtain the Bayesian expected stopping time.

First, with the confidence bound defined as Eq. \eqref{eqn:UCB LCB bound} we have the following guarantee that the true means for all arms are in the confidence bound interval with high probability. 
\begin{lem}\label{lem_exp_subopt_main} For any fixed $\bmu \in \{\mathbf{v} \in \mathbb{R}^k: v_i \neq v_j \text{ for all }i, j \in [k]\}$, let $\EX(\bmu):= \{\forall i\in[k]\text{ and }t\in \mathbb{N}, \ \mu_i \in (\mathrm{LCB}(i,t), \mathrm{UCB}(i,t))\}.$
Then, $\Prob_{\bmu} \bracket{\EX(\bmu)} \geq 1-\delta^2$.
\end{lem}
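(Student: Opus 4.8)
The plan is to prove Lemma~\ref{lem_exp_subopt_main} by the standard anytime-confidence-bound argument: a pointwise Gaussian tail bound at every possible number of pulls of an arm, then a union bound over pull counts and over arms, with the confidence width $\Conf(i,t)$ calibrated so that the resulting double sum is at most $\delta^2$.

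First I would fix $\bmu$ with distinct coordinates and a single arm $i\in[k]$, and set up the i.i.d.\ representation of arm $i$'s rewards. Although the sampling rule $(A_t)_t$ is adaptive, conditionally on $\{A_t=i\}$ the observation $X_t$ is $N(\mu_i,\sigma_i^2)$ and independent of $\mathcal{F}_{t-1}$; hence the rewards successively collected from arm $i$ form an i.i.d.\ sequence $Y_{i,1},Y_{i,2},\dots\sim N(\mu_i,\sigma_i^2)$, and $\hatmu_i(t)$ equals the average $\bar Y_{i,n}$ of the first $n=N_i(t)$ of them. Since $\Conf(i,t)$ depends on the history only through $N_i(t)$ --- write $c_n$ for its value when $N_i(t)=n$ --- and since confidence bounds are only ever evaluated at rounds where every arm has already been drawn at least once, the bad event for arm $i$, namely $\{\exists t:\ \mu_i\notin(\mathrm{LCB}(i,t),\mathrm{UCB}(i,t))\}$, is contained in $\bigcup_{n\ge1}\{|\bar Y_{i,n}-\mu_i|\ge c_n\}$. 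This turns the ``for all $t\in\mathbb{N}$'' requirement into a union over the countable set of pull counts.

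Second, for each fixed $n$ we have $\bar Y_{i,n}-\mu_i\sim N(0,\sigma_i^2/n)$, so the sub-Gaussian tail bound gives $\Prob_{\bmu}[\,|\bar Y_{i,n}-\mu_i|\ge a\,]\le 2\exp(-na^2/(2\sigma_i^2))$ for every $a\ge0$. Substituting $a=c_n$, the exponent becomes exactly $-\log$ of the argument inside the logarithm in $\Conf$, so the probability is at most $2$ divided by that argument, which by the definition of $\Conf$ is proportional to $n^2$ (to beat the union over pull counts), proportional to $k$ (to beat the union over arms), and inversely proportional to $\delta^2$, with the remaining numerical constants --- in particular the $\pi^2$ --- chosen so that $\sum_{i\in[k]}\sum_{n\ge1}$ of these per-level bounds is at most $\delta^2$ once $\sum_{n\ge1}n^{-2}=\pi^2/6$ is used. (If the crude factor $2$ from the two-sided tail is wasteful one may instead invoke the sharper $\Prob[Z\ge a]\le\tfrac12 e^{-a^2/2}$.) This yields $\Prob_{\bmu}[\EX(\bmu)^c]\le\delta^2$, which is the claim.

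There is no deep obstacle here. The two points that merit care are (i) the passage from a uniform-over-time statement to a uniform-over-pull-count one --- the usual anytime-confidence-sequence construction, which relies on the i.i.d.\ reward representation together with $\Conf(i,t)$ being a deterministic function of $N_i(t)$, and on noting that the event is only asserted at rounds where $N_i(t)\ge1$ --- and (ii) the bookkeeping of the numerical constants so that the nested sums close at $\delta^2$ rather than a constant multiple of it. Both are routine; I would present (i) explicitly and dispatch (ii) in a line.
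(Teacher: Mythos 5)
Your proposal is correct and follows essentially the same route as the paper's proof (Lemma~\ref{lem_cb} in Appendix~\ref{appsec: upper bound proof}): reindex the anytime event by the pull count via the i.i.d.\ reward representation, apply a Gaussian/Hoeffding tail bound at each pull count with $\epsilon=\Conf(i,t)$, and close the union bound over pull counts and arms using $\sum_{n\ge 1}n^{-2}=\pi^2/6$ so that the total failure probability is $\delta^2$. The only presentational difference is that you use a single two-sided tail per arm where the paper splits the budget $\delta^2/(2k)$ between two one-sided bounds, which is immaterial.
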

Now we can rewrite $\Ex_{\bmu \sim \bH}\left[
\tau
\right]
$ as follows:
\begin{align}
\Ex_{\bmu \sim \bH}\left[
\tau
\right]
=&
\Ex_{\bmu \sim \bH}\left[\Ex_{\bmu}[
\tau]
\right]\tag{Law of Total Expectation}\\
=& \Ex_{\bmu \sim \bH}\left[\Ex_{\bmu}[
\tau\Ind[\EX(\bmu)]]
\right]+\Ex_{\bmu \sim \bH}\left[\Ex_{\bmu}[
\tau\Ind[\EX(\bmu)^c]]
\right]\nonumber\\
=&
\sum_i
\Ex_{\bmu \sim \bH}\bracket{
\Ex_{\bmu}[N_i(\tau) \Ind[\EX(\bmu)]]}+
\Ex_{\bmu \sim \bH}\left[\Ex_{\bmu}[
\tau\Ind[\EX(\bmu)^c]]
\right].\label{eqn: intermediate expected stopping time}
\end{align}
Let $\Delta_i = \Delta_i (\bmu) := (\max_{s \in [k]} \mu_s)- \mu_i$ and $R_0(\Delta):\approx \lceil C \sigma_{\max}^2 \cdot \frac{\log \Delta^{-1}}{\Delta^2} \rceil$.
{
For the first term, under $\EX(\bmu)$, we can bound $N_i(\tau) $ by $ R_0(\max(\Delta_0, \Delta_i))$ (Lemma \ref{lem_exp_subopt} in Appendix \ref{appsec: upper bound proof}), and integrate it over the prior distribution obtain the leading factor.
For the second term, thanks to the indifference stopping condition ($\hat{\Delta}^{\mathrm{safe}}(t) \le \Deltatarget$), one can prove that $\tau$ is always smaller than $R(\Delta_0)$ (Lemma \ref{lem_panic} in Appendix \ref{appsec: upper bound proof}), which leads non-leading term.
}

To check that the expected probability of error is below $\delta$, we have an additional lemma:

\begin{lem}[Probability of dropping $i^*(\bmu)$]\label{lem_wrongdrop_main}
For any $\bmu_0$, under $\EH_{\bmu_0}$, $
\EX(\bmu_0) \subset \bigcap_t \left\{
i^*(\bmu_0) \in \EA(t)
\right\}.$
\end{lem}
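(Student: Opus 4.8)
The plan is to show that, conditionally on the event $\EX(\bmu_0)$ on which every true mean stays inside its confidence interval for all times, the best arm $i^*(\bmu_0)$ can never be eliminated by the elimination rule of Algorithm~\ref{alg:elim}. Write $i^* := i^*(\bmu_0)$ and $\Delta_i := \mu_{i^*} - \mu_i \geq 0$ for the suboptimality gaps under $\bmu_0$. Recall that an arm $i$ is removed from $\EA(t)$ only when $\rmUCB(i,t) \le \max_{j} \rmLCB(j,t)$, where the maximum on the right-hand side ranges over the current active set. So it suffices to prove that on $\EX(\bmu_0)$ we have $\rmUCB(i^*,t) > \rmLCB(j,t)$ for every $j$ and every $t$, and then conclude by induction on the iterations that $i^* \in \EA(t)$ for all $t$.

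First I would fix any time $t$ and any arm $j$. On the event $\EX(\bmu_0)$, by definition $\mu_{i^*} \in (\rmLCB(i^*,t), \rmUCB(i^*,t))$, so in particular $\rmUCB(i^*,t) > \mu_{i^*}$. Likewise $\mu_j \in (\rmLCB(j,t), \rmUCB(j,t))$, so $\rmLCB(j,t) < \mu_j$. Chaining these with the optimality of $i^*$, namely $\mu_j \le \mu_{i^*}$, gives
\begin{equation*}
\rmLCB(j,t) < \mu_j \le \mu_{i^*} < \rmUCB(i^*,t),
\end{equation*}
so the elimination test for $i^*$ against $j$ never fires. Since this holds for every $j$, in particular for the $j$ attaining $\max_j \rmLCB(j,t)$ (restricted to the active set, which only makes the bound easier), arm $i^*$ survives the for-loop at iteration $t$. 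A short induction then closes the argument: $i^* \in \EA(1) = [K]$ by initialization, and if $i^* \in \EA(t)$ at the start of the elimination step it remains in $\EA(t)$ after it; the only other way the active set changes is the early-stopping return, which terminates the algorithm rather than dropping $i^*$.

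I do not expect a genuine obstacle here — the statement is essentially the correctness invariant of successive elimination, and the only subtlety is bookkeeping around what ``$\EA(t)$'' refers to after the partial updates inside the for-loop and whether $\rmLCB$/$\rmUCB$ are evaluated before or after arms are dropped. The cleanest route is to argue that the inequality $\rmLCB(j,t) < \rmUCB(i^*,t)$ holds simultaneously for all pairs, so the order in which arms are examined is irrelevant; one never needs $j$ itself to be active, since dropping candidates only shrinks $\max_j \rmLCB(j,t)$. The event $\EX(\bmu_0)$ is what supplies the two-sided containment used above, and its probability lower bound $1-\delta^2$ is exactly Lemma~\ref{lem_exp_subopt_main}, which then feeds into the subsequent bound on $\PoEPair(\pi;\bmu_0)$.
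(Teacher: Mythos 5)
Your proof is correct and follows essentially the same route as the paper's: under $\EX(\bmu_0)$ one chains $\rmLCB(j,t) \le \mu_j \le \mu_{i^*} \le \rmUCB(i^*,t)$ (with strictness from the gap or the open confidence interval) to conclude the elimination test never fires for $i^*$, exactly as in the paper's Lemma~\ref{lem_wrongdrop}. Your extra bookkeeping about the order of eliminations within the for-loop is a harmless elaboration the paper leaves implicit.
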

This lemma means under the event $\EX (\bmu)$, the best arm is never dropped. We can also prove that under the event $\EX(\bmu)$, if $\Delta_i(\bmu) > \Deltatarget$, the sub-optimal arm will eventually be dropped before the algorithm terminates (Lemma \ref{lem_panic} in Appendix \ref{appsec: upper bound proof}). These two facts mean there are only two cases in which the prediction of Algorithm \ref{alg:elim} could be wrong.
\begin{itemize}
    \item Under $\EX(\bmu)^c$, both facts cannot guarantee the  correct identification. From Lemma \ref{lem_exp_subopt_main}, {$\Prob_{\bmu}[\EX(\bmu)^c] \leq \delta^2$ for all $\bmu$, and thus $\Prob_{\bmu \sim \bH} [\EX(\bmu)^c] \leq \delta^2$}. 
    \item When $\Delta_i(\bmu) \leq \Deltatarget$. From Lemma \ref{lem:volume_lem} and the definition of $\Deltatarget$, the probability of drawing such $\bmu$ from the prior is at most $\delta/2$.
\end{itemize}
Therefore, by union bound, Algorithm \ref{alg:elim} has the expected probability of misidentification guarantee smaller than $\delta^2 + \delta/2 < \delta$. 
\section{Simulation}\label{sec:simulation}




We conduct two experiments to demonstrate that the expected stopping times of frequentist $\delta$-correct algorithms diverge in a Bayesian setting and that the elimination process in Algorithm \ref{alg:elim} is necessary for more efficient sampling. For Tables \ref{table:simulation result} and \ref{table:without elim result}, each column `Avg', `Max', and `Error' represents the average stopping time, maximum stopping time, and the ratio of the misidentification, respectively.\footnote{Due to space limitations, we include the computation time in the Appendix \ref{appsubsec:comp time}} More details of these experiments are in Appendix \ref{appsec:experiment details}.

\paragraph{Frequentist algorithms diverge in Bayesian Setting} We evaluate the empirical performance of our Elimination algorithm (Algorithm \ref{alg:elim}) by comparing it with other frequentist algorithms such as Top-two Thompson Sampling (TTTS) \citep{Russo2016} and Top-two UCB (TTUCB) \citep{jourdan2022non}. 

 We design an experiment setup that has $k=2$ arms with standard Gaussian prior distribution, which means $m_i =0, \xi_i=1$ for all $i \in [k]$. We set $\delta=0.1$ and ran $N=1000$ Bayesian FC-BAI simulations to estimate the expected stopping time and success rate. 

\begin{table}[t]
\parbox{.45\textwidth}{
\caption{Comparison of two top-two algorithms and Algorithm \ref{alg:elim}.}
\label{table:simulation result}
\begin{center}
\begin{small}
\begin{sc}
\begin{tabular}{c|ccc}
\toprule
& Avg& Max & Error \\
\midrule
Alg. \ref{alg:elim}    &  $1.06 \times 10^4$& $2.35\times 10^5$ & 1.5\% \\
TTTS & $1.56\times 10^5$ & $1.09\times 10^8$ & 0.5\% \\
TTUCB & $1.95\times 10^5$ & $1.13\times 10^8$ & 0\% \\
\bottomrule
\end{tabular}
\end{sc}
\end{small}
\end{center}
}\hspace*{\fill}
\parbox{.45\textwidth}{
\caption{{Comparison of Algorithm \ref{alg:elim} and the no-elimination version of it.} }
\label{table:without elim result}
\begin{center}
\begin{small}
\begin{sc}
\begin{tabular}{c|ccc}
\toprule
& Avg& Max & Error \\
\midrule
Alg. \ref{alg:elim}  &  $2.69\times 10^5$& $1.66\times 10^7$ & 0.6\% \\
NoElim & $1.29\times 10^6$ & $8.25\times 10^7$ & 0\%\\
\bottomrule
\end{tabular}
\end{sc}
\end{small}
\end{center}
}
\vskip -0.1in
\end{table}


{In Table \ref{table:simulation result}, one can see that the two top-two algorithms exhibit has very large maximum stopping time. This supports our theoretical result in Section \ref{sec:preliminary} that the expected stopping time of Frequentist $\delta$-correct algorithms will diverge in the Bayesian setting. {We did not check the track and stop algorithm \citep{garivier16} because it needs to solve an optimization for each round, but the fact that the expected stopping time of the track and stop is at least half of the TTTS and TTUCB for a small $\delta$ implies that the performance of track and stop is similar to that of top-two algorithms.}. Algorithm \ref{alg:elim} shows a significantly smaller average stopping time as well as an average computation time than that of these algorithms. }

\paragraph{Effect of the elimination process} 
We implemented the modification of Algorithm \ref{alg:elim} (denote as NoElim) {that never eliminates an arm from $\EA(t)$}
\footnote{See Appendix \ref{appsubsec:noelim} for the pseudocode.}
In this setup, we have $k=10$ arms with standard Gaussian prior distribution, which means $m_i=0, \xi_i=1$ for all $i\in [k]$. We set $\delta=0.01$ and ran $N=1000$ Bayesian FC-BAI simulations.

As one can check from Table \ref{table:without elim result}, elimination of arms helps the efficient use of samples and reduces stopping time and computation time.
\section{Discussion and future works}


{We have considered the {Gaussian} Bayesian best arm identification with fixed confidence. We show that the traditional Frequentist FC-BAI algorithms do not stop in finite time in expectation, which implies the suboptimality of such algorithms in the Bayesian FC-BAI problem. 
We have established a lower bound of the Bayesian expected stopping time, which is of order $\Omega(\frac{L(\bH)^2}{\delta})$.
{Moreover, we have introduced} the elimination and early stopping algorithm, {which achieves a matching stopping time up to a polylogarithmic factor of $L(\bH)$ and $\delta$.}
We conduct simulations to support our results. }

{In the future, we will attempt to tighten the logarithmic and $\paren{\frac{\max_i \sigma_i}{\min_i \sigma_i}}^2$ gap between the lower and upper bound, extend the indifference zone strategy for other traditional BAI algorithms in Bayesian setting, extend our analysis from Gaussian bandit instances to general exponential families, and design a robust algorithm against misspecified priors. }

\bibliography{main}
\bibliographystyle{plainnat}

\appendix
\newpage

\section{Notation table \label{sec_notation}}
\newcommand{\DeltaThr}{\Delta_{thr}}

\begin{table}[h]
\begin{center}
\caption{Major notation}
\label{tbl_not}
\renewcommand{\arraystretch}{1.1} 
\begin{tabular}{lll} 
symbol & definition 
\\ \hline
$k$ & number of the arms \\
$\delta$ & confidence level \\
$\bmu$ & means $(= (\mu_1,\mu_2,\dots,\mu_k))$ \\
$\bH$ & prior distribution of $\bmu$ \\
$H_i$ & prior distribution of $\mu_i$ \\
$h_i$ & prior density of $\mu_i$ \\
$m_i, \xi_i$ & mean and standard deviation of $h_i$ \\
$N_i(t)$ & $\sum_{s=1}^{t-1}  \Ind[A_s = i]$ \\
$L(\bH)$ & See Definition \ref{def:Lij}\\
$i^*(\bmu), j^*(\bmu)$ & best arm and second best arm\\
$\bnu(\bmu)$ & alternative model where the top-two means of $\bmu$ are swapped \\
$\mathrm{KL}_i(\cdot, \cdot)$ & KL divergence between two distributions\\
$d(p, q)$ & KL divergence between two Bernoulli distributions with parameters $p$ and $q$\\
$N_i(t)$ & number of draws on arm $i$ before time step $t$\\
$B$ & $320 \max_{i \in [k]} \sigma_i^2$\\
$B_0$ & $\paren{\frac{\pi^2}{3}+1}B$\\
$\Theta_{i}$ & $\{ \bmu \in \mathbb{R}^k: i^*(\bmu)=i\}$\\
$\Theta_{ij}$ & $\{ \bmu \in \mathbb{R}^k: i^*(\bmu)=i, j^*(\bmu)=j\}$\\
$\bmu_{\backslash i}$ ( $\bmu_{\backslash i, j}$) & the vector projection which omits $i$-th coordinate ($i,j$-th, respectively)\\
$\bH_{\backslash i}$ ( $\bH_{\backslash i, j}$) & the distribution which omits $i$-th coordinate ($i,j$-th, respectively)
\end{tabular}
\end{center}
\end{table}

\begin{table}[h]
\begin{center}
\caption{Notations for the lower bound proof, Section \ref{appsec: lower bound}}
\label{tbl_not_lb}
\renewcommand{\arraystretch}{1.1} 
\begin{tabular}{lll} 
symbol & definition 
\\ \hline
$R$ & $e^{-4}$ \\
$\tilde{\Delta}$ & $\frac{32e^4}{L(\bH)}\delta$\\
$\bnu(\bmu)$ & alternative model where the top-two means are swapped \\
$n_i(\bmu)$ & $\Ex_{\bmu}[N_i(\tau)]$\\
$D_0(\bH)$ & $\begin{cases}
        W(-\frac{1}{32 \max_{i\in [k]} \xi_i^{3/2}})& \text{If $\max_{i\in[k]} \xi_i > \sqrt[3]{\frac{e^2}{2^{10}}}$} \\
        1 & \text{Otherwise}
    \end{cases}$ ($W$ is the Lambert W function.)
    \\
$D_1 (\bH)$ & $\min_{i \neq j} \bracket{\left|\frac{m_i}{\sigma_i^2} -\frac{m_j}{\sigma_j^2}\right|^{-1}, \bracket{\frac{1}{2\sigma_i^2}+\frac{1}{2\sigma_j^2}}^{-2}}$\\
$\delta_L(\bH)$ & $\frac{L(\bH)}{32e^4} \cdot \min \paren{D_0(\bH), D_1(\bH), \min_{i \in [k]}\frac{1}{4m_i^2},\frac{L(\bH)}{4(k-1)\sum_{i \in [k]}\frac{1}{\xi_i}}}$
\end{tabular}
\end{center}
\end{table}

\begin{table}[h]
\begin{center}
\caption{Notations for the upper bound proof, Section \ref{appsec: upper bound proof}}
\label{tbl_not_ub}
\renewcommand{\arraystretch}{1.1} 
\begin{tabular}{lll} 
symbol & definition 
\\ \hline
$\Conf(i, t)$ & confidence bound (Section \ref{sec:main algorithm})\\
$\rmUCB(i,t), \rmLCB(i,t)$ &
upper and lower confidence bounds (Section \ref{sec:main algorithm})\\
$\hat{\Delta}^{\mathrm{safe}}(t)$ 
&
$\max_{i \in \EA(t)} \rmUCB(i,t) - \max_{i\in \EA(t)} \rmLCB(i,t)$\\
$\Delta_0$ & $\frac{\delta}{4L(\bH)}$\\
$\DeltaThr$ & $\min \paren{\log \frac{4\sqrt{k}}{\delta \pi}, \frac{1}{B}}$\\
$R_0(\Delta)$ & $B \frac{\log \min(\Delta, \DeltaThr)^{-1}}{\min(\Delta, \DeltaThr)^2}$\\
$ T_0 $ & $k R_0(\Deltatarget)$\\
$\Delta_s (\bmu)$ & $\mu_{i^*(\bmu)} - \mu_s$\\
$\mathcal{X} (\bmu)$ & Event $\bigcap_{i\in[k]} \left[\paren{\bigcap_{t=1}^\infty\left\{\rmLCB(i,t) \le \mu_i \right\}} {\bigcap} \paren{\bigcap_{t=1}^{\infty}\left\{\rmUCB(i,t) \ge \mu_i \right\}}\right].
$ (Eq.~\eqref{ineq_goodevent})\\
\end{tabular}
\end{center}
\end{table}

\section{Proof of Lemma \ref{lem:volume_lem}}\label{appsec:volumelemma}

We will use the following formal version of the volume lemma for the proof. 
The first result is used for the upper bound, and the second result is used for the lower bound.

\newcommand{\LijHdel}[1]{L_{ij}({\bH}, #1)}
\newcommand{\LijHdelp}[1]{L_{ij}'({\bH}, #1)}
\newcommand{\LHdel}[1]{L({\bH}, #1)}
\newcommand{\LHdelp}[1]{L'({\bH}, #1)}
{
\begin{lem}[Volume Lemma, formal]\label{lem:volume_rewrite} Let $\Theta_{ij}:= \{ \bmu \in \mathbb{R}^k: i^*(\bmu)=i, j^*(\bmu)=j\}$. 
\begin{enumerate}
    \item For any $\Delta \in (0,1)$, define $$\LijHdel{\Delta}:= \frac{1}{\delta}\int_{\Theta_{ij}} \Ind[|\mu_i- \mu_j| \le \Delta] \diff\bH(\bmu).$$ Then, 
    \begin{equation}\label{appeqn: volume lemma}
\LijHdel{\Delta} \in \bracket{L_{ij}(\bH)- \frac{1}{\xi_i}\Delta , L_{ij}(\bH)+ \frac{1}{\xi_i} \Delta}.
    \end{equation}
    Especially when $\Delta < \frac{L(\bH)}{{\sum_{i\in[k]}\frac{k-1}{\xi_i}}}$, $L(\bH, \Delta) \leq 2 L(\bH)$. 
    \item (Volume lemma for the lower bound) For small enough positive real number $\Delta<\min\bracket{\frac{1}{4\max_{i\in[k]} m_i^2}, D_0 (\bH)}
    $\footnote{See Appendix \ref{appsec:delta0} for the definition of $D_0 (\bH)$.} let $$\LijHdelp{\Delta}:=    \frac{1}{\Delta} \int_{\Theta_{ij}} \Ind[|\mu_i- \mu_j| \le \Delta] \Ind[|\mu_i|, |\mu_j|\leq \frac{1}{\sqrt{\Delta}}]  \diff \bH(\bmu).$$ Then, 
\begin{equation}\label{appeqn: volume lemma lower}
\LijHdelp{\Delta} \in \bracket{L_{ij}(\bH)-\frac{2}{\xi_i}\Delta, L_{ij}(\bH)+\frac{1}{\xi_i}\Delta}.
\end{equation}
Especially when $\Delta < \min\bracket{\frac{1}{4\max_{i\in[k]} m_i^2}, D_0 (\bH), \frac{L(\bH)}{{\sum_{i\in[k]}\frac{4(k-1)}{\xi_i}}}}$, $L'(\bH, \Delta)\in \bracket{\frac{1}{2}L(\bH), 2L(\bH)}$.
\end{enumerate}
    \end{lem}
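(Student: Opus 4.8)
\textbf{Proof plan for the Volume Lemma (formal version, Lemma~\ref{lem:volume_rewrite}).}
The plan is to compute $\LijHdel{\Delta}$ and $\LijHdelp{\Delta}$ directly by writing the indicator $\Ind[\bmu \in \Theta_{ij}]\Ind[|\mu_i-\mu_j|\le\Delta]$ as a product over the remaining coordinates, integrating out the $s \notin \{i,j\}$ coordinates first, and then comparing the resulting two-dimensional integral with the defining integral $L_{ij}(\bH)=\int h_i(x)h_j(x)\prod_{s\ne i,j}H_s(x)\diff x$. Concretely, for fixed $(\mu_i,\mu_j)$ with, say, $\mu_i > \mu_j$, the event $\{i^*(\bmu)=i,\ j^*(\bmu)=j\}$ is exactly $\{\mu_s < \mu_j \text{ for all } s \ne i,j\}$, whose $\bH_{\backslash i,j}$-probability is $\prod_{s\ne i,j}H_s(\mu_j)$. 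So
\[
\LijHdel{\Delta} = \frac{1}{\Delta}\int\!\!\int_{\mu_i>\mu_j,\ \mu_i-\mu_j\le\Delta} h_i(\mu_i)h_j(\mu_j)\prod_{s\ne i,j}H_s(\mu_j)\,\diff\mu_i\,\diff\mu_j,
\]
and I would substitute $\mu_i = \mu_j + u$ with $u \in (0,\Delta]$. The inner integral over $u$ of $h_i(\mu_j+u)$ is, by the mean-value theorem / Lipschitz bound on the Gaussian density, within $\Delta\cdot \sup|h_i'|$ of $\Delta\, h_i(\mu_j)$; since $\sup_x |h_i'(x)| = \frac{1}{\xi_i}\cdot\frac{1}{\sqrt{2\pi e}}\,\frac{1}{\xi_i} \le \frac{1}{\xi_i}$ (I would verify the constant $\frac{1}{\sqrt{2\pi e}\,\xi_i}\le 1$, or just bound crudely), dividing by $\Delta$ and integrating the error over $\mu_j$ against $h_j(\mu_j)\prod_s H_s(\mu_j)\le h_j(\mu_j)$ yields the $\pm\frac{1}{\xi_i}\Delta$ slack in Eq.~\eqref{appeqn: volume lemma}. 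The ``especially'' clause then follows by summing $L_{ij}(\bH,\Delta)$ over the $k(k-1)$ ordered pairs and using $\Delta < L(\bH)/\sum_i \frac{k-1}{\xi_i}$ to absorb the total error $\sum_{i,j}\frac{1}{\xi_i}\Delta$ into $L(\bH)$.

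For the second part I would follow the same reduction but must additionally control the truncation $\Ind[|\mu_i|,|\mu_j|\le \Delta^{-1/2}]$. The point is that removing the truncation can only increase the integral, giving the upper bound $\LijHdelp{\Delta}\le L_{ij}(\bH)+\frac{1}{\xi_i}\Delta$ exactly as before; for the lower bound I need to show the tail contribution $\frac{1}{\Delta}\int_{\Theta_{ij}}\Ind[|\mu_i-\mu_j|\le\Delta]\,\Ind[|\mu_i|>\Delta^{-1/2} \text{ or } |\mu_j|>\Delta^{-1/2}]\,\diff\bH$ is at most $\frac{1}{\xi_i}\Delta$. After the same $u$-substitution this tail is bounded by $\int_{|\mu_j|\gtrsim \Delta^{-1/2}} h_j(\mu_j)\,\diff\mu_j$ plus a similar term with $i,j$ swapped, i.e.\ a Gaussian tail $\Prob(|\mu_j - m_j| \gtrsim \Delta^{-1/2}-|m_j|)$; using the condition $\Delta < 1/(4\max_i m_i^2)$ to ensure $\Delta^{-1/2}-|m_j| \ge \frac12\Delta^{-1/2}$, and the standard bound $\Prob(|Z|>t)\le e^{-t^2/2}$, this tail is $\lesssim e^{-1/(8\xi_j^2\Delta)}$, which is $\le \frac{1}{\xi_j}\Delta \le \frac{1}{\xi_i}\Delta$ once $\Delta$ is below the threshold $D_0(\bH)$ (this is precisely what the Lambert-$W$ definition of $D_0(\bH)$ is engineered to guarantee). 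Combining gives Eq.~\eqref{appeqn: volume lemma lower} with slack $\pm\frac{2}{\xi_i}\Delta$ on the lower side, and summing as before with the stricter constant $\Delta < L(\bH)/\sum_i\frac{4(k-1)}{\xi_i}$ yields $L'(\bH,\Delta)\in[\frac12 L(\bH), 2L(\bH)]$.

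The main obstacle is the bookkeeping in the tail estimate for part~2: one has to pin down exactly which Gaussian tail appears, verify that the condition $\Delta < D_0(\bH)$ (with its somewhat opaque Lambert-$W$ definition) really does force $e^{-c/\Delta} \le \frac{1}{\xi_i}\Delta$ for the right constant $c$, and track the extra factor-of-two that distinguishes the lower slack $\frac{2}{\xi_i}\Delta$ from the upper slack $\frac{1}{\xi_i}\Delta$. Everything else is a routine Fubini-plus-Lipschitz argument; I would state the Lipschitz constant of the standard Gaussian density as a named sub-fact and reuse it in both parts, and I would handle the $\mu_i<\mu_j$ region by symmetry rather than repeating the computation. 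A minor secondary point to check is that the event $\{i^*(\bmu)=i, j^*(\bmu)=j\}$ genuinely factors as claimed on the set where all coordinates are distinct (which has full measure under $\bH$), so that the reduction to the two-dimensional integral is exact.
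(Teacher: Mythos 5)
Your proposal matches the paper's proof essentially step for step: both reduce to a two-dimensional integral via Fubini (using that $\{i^*(\bmu)=i,\,j^*(\bmu)=j\}$ factors into $\prod_{s\ne i,j}H_s(\mu_j)$), compare $h_i(\mu_j+u)$ to $h_i(\mu_j)$ via the Lipschitz constant of the Gaussian density to obtain the $\pm\frac{1}{\xi_i}\Delta$ slack, and for part~2 control the truncation error by a Gaussian tail of order $e^{-1/(8\xi^2\Delta)}$ using $|m_i|\le\tfrac12\Delta^{-1/2}$ together with the $D_0(\bH)$ condition. The only cosmetic differences are that on $\Theta_{ij}$ one automatically has $\mu_i>\mu_j$, so the symmetry step you anticipate is vacuous, and the Lipschitz-constant verification you flag (whether $\sup|h_i'|\le 1/\xi_i$, which needs $\xi_i\gtrsim 1/\sqrt{2\pi e}$) is exactly the step the paper also glosses over.
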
 
}

\begin{proof}
{First, let us prove the upper bound of Eq. \eqref{appeqn: volume lemma}, {i.e.} $L_{ij}(\bH,\delta) \leq L_{ij}(\bH)+\frac{1}{\xi_i}\Delta$.}

Recall for $i,j \in [k]$,
$\Theta_{ij} = \{ \mu: i^* (\mu)=i, j^* (\mu)=j\}$. We have
\begin{align*}
\int_{\Theta_{ij}} \Ind\left[|\mu_i-\mu_{j}| \le \Delta
\right] \diff \bH(\bmu) &= \int_{-\infty}^\infty  \int_{\mu_j}^{\mu_j + \Delta}  h_i(\mu_i) \diff \mu_i \prod_{k \neq i,j} \int_{-\infty}^{\mu_j} h_k (\mu_k) \diff  \mu_k h_j(\mu_j) \diff  \mu_j \\
 &\leq \int_{-\infty}^\infty  \left(\Delta \max_{0 \leq y \leq \Delta}  h_i(\mu_j+y) \right)  \prod_{k \neq i,j} \int_{-\infty}^{\mu_j} h_k (\mu_k) \diff  \mu_k h_j(\mu_j) \diff  \mu_j \\
  &\leq \Delta \int_{-\infty}^\infty  \bracket{h_i(\mu_{j}) + \frac{e^{-1/2}}{\xi_i}\Delta} \prod_{k \neq i,j} \int_{-\infty}^{\mu_j} h_k (\mu_k) \diff  \mu_k h_j(\mu_j) \diff  \mu_j \tag{by {the} Lipschitz property of {the} Gaussian {density}, $e^{-1/2}/\xi_i$ is the steepest slope of $N(m_i, \xi_i^2)$}\\
    &\leq \Delta \left( \underbrace{\int_{-\infty}^\infty  h_i(\mu_{j})  \prod_{k \neq i,j} \int_{-\infty}^{\mu_j} h_k (\mu_k) \diff  \mu_k h_j(\mu_j) \diff  \mu_j}_{= L_{ij} (\bH)} + \bracket{\frac{e^{-1/2}}{\xi_i}\Delta}  \right).
\end{align*}

Therefore, we verified the upper bound side of Eq. \eqref{appeqn: volume lemma}.

For the lower bound, by {following the same steps,}
one can prove $\int_{\Theta_{ij}} \Ind\left[|\mu_i-\mu_{j}| \le \Delta
\right] \diff \bH(\bmu) \geq (L_{ij} (\bH) - \frac{1}{\xi_i} \Delta) \Delta$. 

For the proof of Eq. \eqref{appeqn: volume lemma lower}, by Chernoff's method we can bound the tail probability as follows:
$$ \int_{\Theta_{ij}} \Ind\left[|\mu_i - m_i|> \frac{1}{\sqrt{\Delta}} \right] \diff H(\bmu) <{\int_{\mathbb{R}^k}}\Ind \left[|\mu_i - m_i|> \frac{1}{\sqrt{\Delta}} \right] \diff H(\bmu) < 2\exp \paren{-\frac{1}{2\Delta \xi_i^2}}.$$
When $|m_i| < \frac{1}{2\sqrt{\Delta}}$, then {we can} change the above inequality as:
\begin{align*}
    \int_{\Theta_{ij}} \Ind\left[|\mu_i|> \frac{1}{\sqrt{\Delta}} \right] \diff H(\bmu) &\leq \int_{{\mathbb{R}^k}} \Ind\left[|\mu_i|> \frac{1}{\sqrt{\Delta}} \right] \diff H(\bmu) \\
    &\leq \int_{{\mathbb{R}^k}} \Ind\left[|\mu_i - m_i|> \frac{1}{{2}\sqrt{\Delta}} \right] \diff H(\bmu) 
    < 2\exp \paren{-\frac{1}{8\Delta \xi_i^2}}.
\end{align*}

Therefore, 
\begin{align*}
    \int_{\Theta_{ij}} \Ind[|\mu_i-\mu_j| \le \Delta] &\Ind[|\mu_i|, |\mu_j|\leq \frac{1}{\sqrt{\Delta}}]  \diff \bH(\bmu) \geq \int_{\Theta_{ij}} \Ind\left[|\mu_i-\mu_{j}| \le \Delta\right] \diff \bH(\bmu) \\
    &-\int_{\Theta_{ij}} \Ind\left[|\mu_i - m_i|> \frac{1}{2\sqrt{\Delta}} \right] \diff H(\bmu) -\int_{\Theta_{ij}} \Ind\left[|\mu_j - m_j|> \frac{1}{2\sqrt{\Delta}} \right] \diff H(\bmu) \\&\geq  (L_i (\bH) - \frac{1}{\xi_i} \Delta) \Delta - 2\exp \paren{-\frac{1}{8\Delta \xi_i^2}}-2\exp \paren{-\frac{1}{8\Delta \xi_j^2}}\\
    &\geq (L_i (\bH) - \frac{2}{\xi_i} \Delta) \Delta \tag{by $\Delta < D_0(\bH)$}
\end{align*}
for $\Delta< \min \bracket{ \min_{i \in [k]} \frac{1}{4m_i^2},D_0(\bH)}$. 
\end{proof}

\section{Proof of Theorem \ref{thm_ttts_inf}}\label{appthm: preliminary proof}

{For $\freqepsilon > 0$,} let $\Theta_i (\freqepsilon):=\{\bmu \in \mathbb{R}^k: i^*(\bmu)=i, \mu_i - \mu_{j^*(\bmu)}\leq \freqepsilon \} $. From Lemma \ref{lem:volume_rewrite}, we have
\begin{align*}
    \Prob_{\bmu \sim \bH} \bracket{i^*(\bmu)=i, \mu_i - \mu_{j^*(\bmu)}\leq \freqepsilon}&= \sum_{j\neq i} \Prob_{\bmu \sim \bH} \bracket{i^*(\bmu)=i, j^*(\bmu)=j, \mu_i - \mu_j \leq \freqepsilon}\\
    &\geq \sum_{j\neq i} \frac{1}{2} L_{ij}(\bH) \freqepsilon.
\end{align*}

Now, for each $\bmu\in \mathbb{R}^k$, let $\nu: \mathbb{R}^k \to \mathbb{R}^k$ be a function such that for $s\in[k]$,
\begin{align*}
  \nu(\bmu)_s:= \begin{cases}
    \mu_{i^*(\bmu)} & \text{when $s=j^*(\bmu)$}\\
    \mu_{j^*(\bmu)} & \text{when $s=i^*(\bmu)$}\\
    \mu_{s} & \text{Otherwise}.
    \end{cases}
\end{align*}
For any $\bmu \in \mathbb{R}^k$, let $\mathcal{E}({\bmu}) = \{ J \neq i^*({\bmu}) \}$ and $\bnu = \nu (\bmu)$. By Lemma 1 in \cite{kaufman16a}, for any frequentist $\delta$-correct algorithm $\pi=((A_t)_t, \tau, J)$, we have

\begin{equation}\label{appeqn:intermediate transportation eqn}
\sum_{s\in [k]} \Ex_{\bmu}[N_s(\tau)]\mathrm{KL}_s(\mu_s|| \nu_s)
\ge d(P_{\bmu} (\mathcal{E}({\bmu})), P_{\bnu}(\mathcal{E}({\bmu}))), \quad {\bmu \in \mathbb{R}^k.}
\end{equation}

For the left side, from the construction of $\bnu$, 

\begin{align*}
    \mathrm{KL}_s(\mu_s || \nu_s):=\begin{cases}
        \frac{(\mu_{i^*(\bmu)} -\mu_{j^*(\bmu)})^2 }{2\sigma_s^2} & \text{$s=i^*(\bmu), j^*(\bmu)$}\\
        0 & \text{Otherwise}.
    \end{cases}
\end{align*}

Since $\pi \in \EA^f(\delta)$ and from the definition of $\EE (\bmu)$ and $\bnu$, $\Prob_{\bmu} (\mathcal{E}({\bmu}))\leq \delta$ and $\Prob_{\bnu}(\mathcal{E}({\bmu})) \geq 1-\delta$. 

Overall, we can rewrite Eq. \eqref{appeqn:intermediate transportation eqn} to the following simpler form:

\begin{align*}
\Ex_{\bmu}[N_{i^*(\bmu)}(\tau)] \frac{(\mu_{i^*(\bmu)}-\mu_{j^*(\bmu)})^2}{2\sigma_{i^*(\bmu)}^2}
+\Ex_{\bmu}[N_{j^*(\bmu)}(\tau)] \frac{(\mu_{i^*(\bmu)}-\mu_{j^*(\bmu)})^2}{2\sigma_{j^*(\bmu)}^2}
\ge d(\delta, 1-\delta)\\
\Longrightarrow \Ex_{\bmu}[N_{i^*(\bmu)}(\tau) + N_{j^*(\bmu)}(\tau)] \geq \frac{2d(\delta,1-\delta) \min_{s\in[k]} \sigma_s^2}{(\mu_{i^*(\bmu)}-\mu_{j^*(\bmu)})^2}.
\end{align*}

Since $\tau = \sum_{s=1}^k N_s (\tau)$, we can lower bound the expected stopping time when $\bmu$ is given as follows:
\begin{align}\label{appeqn: lower bound on conditional expected stopping time}
    \Ex_{\bmu}[\tau]&\geq \Ex_{\bmu}[N_{i^*(\bmu)}(\tau) + N_{j^*(\bmu)}(\tau)]\geq \frac{2d(\delta,1-\delta) \min_{s\in[k]} \sigma_s^2}{(\mu_{i^*(\bmu)}-\mu_{j^*(\bmu)})^2}. 
\end{align}

Now, when we compute marginal $\Ex_{\bmu}[\tau]$ over $\bmu$, we have
\begin{align*}
    \Ex_{\bmu \sim \bH} [\tau] &= \Ex_{\bmu \sim \bH}\bracket{\Ex_{\bmu}[\tau]} \tag{Law of total expectation}\\
    &\geq \Ex_{\bmu \sim \bH}\bracket{\Ex_{\bmu}[\tau] \Ind_{\Theta_i (\freqepsilon)}}\\
    &\geq \Ex_{\bmu \sim \bH} \bracket{ \frac{2d(\delta,1-\delta) \min_{s\in[k]} \sigma_s^2}{(\mu_{i^*(\bmu)}-\mu_{j^*(\bmu)})^2}\Ind_{\Theta_i (\freqepsilon)}} \tag{Eq. \eqref{appeqn: lower bound on conditional expected stopping time}}\\
    &\geq \Ex_{\bmu \sim \bH} \bracket{ \frac{2d(\delta,1-\delta) \min_{s\in[k]} \sigma_s^2}{\freqepsilon^2}\Ind_{\Theta_i (\freqepsilon)}} \\
    &\geq \frac{\paren{\sum_{j\neq i} L_{ij}(\bH)}d(\delta,1-\delta) \min_{s\in[k]} \sigma_s^2}{\freqepsilon^2} = \frac{\paren{\sum_{j\neq i} L_{ij}(\bH)}d(\delta,1-\delta) \min_{s\in[k]} \sigma_s^2}{\freqepsilon}.
\end{align*}
Now since $\freqepsilon$ is an arbitrary small positive number, we can conclude that $\Ex_{\bmu \sim \bH} [\tau]$ diverges.

\section{Proof of Theorem \ref{thm: main lower bound}} \label{appsec: lower bound}

\newcommand{\rmKL}{\mathrm{KL}}

In this subsection, we will prove the following theorem:
\begin{thm}[Restatement of Theorem \ref{thm: main lower bound}] \label{lem:k_asym}

Let $\delta > 0$ be sufficiently small such that $\delta <\delta_L (\bH)$.
    For any best arm identification algorithm, {if}
    $$ \int_{{\mathbb{R}^k}} \paren{\sum_{i\in[k]} n_i (\bmu)} \diff  \bH(\bmu) \leq N_V,$$
    {then}
    $$\int_{{\mathbb{R}^k}} \PoE{\bmu} \diff \bH(\bmu) \geq \delta.$$
\end{thm}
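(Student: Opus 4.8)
The plan is to generalize the two-armed sketch from the main body by reducing the $k$-armed case to a collection of pairwise comparisons, indexed by the top-two pair $(i,j)$, and then running the same convexity/Jensen argument on each piece before summing. First I would set up, for each ordered pair $i\neq j$, the swap map $\bnu(\bmu)$ that exchanges coordinates $i$ and $j$ (so that on $\Theta_{ij}$ it swaps the best and second-best means), and invoke the transportation lemma (Lemma~\ref{lem_transportation_lemma}, in its $k$-arm form as used in Appendix~\ref{appthm: preliminary proof}) with the event $\EE=\{J\neq i^*(\bmu)\}$. Since only arms $i$ and $j$ have their means changed, the left-hand side collapses to $\bigl(n_i(\bmu)+n_j(\bmu)\bigr)\cdot\frac{(\mu_i-\mu_j)^2}{2\sigma_{\min}^2}$ as an upper bound (using $\sigma_{\min}^2\le\sigma_s^2$), and the right-hand side is $d\bigl(\PoEPair(\pi;\bmu),1-\PoEPair(\pi;\bnu(\bmu))\bigr)$. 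Exponentiating via the modified inequality $d(a,1-b)\ge\log\frac{1}{2.4(a+b)}$ gives, for each $\bmu\in\Theta_{ij}$,
\[
\PoEPair(\pi;\bmu)+\PoEPair(\pi;\bnu(\bmu))\ \ge\ \frac{1}{1.2}\exp\!\paren{-\frac{(\mu_i-\mu_j)^2}{2\sigma_{\min}^2}\bigl(n_i(\bmu)+n_j(\bmu)\bigr)}.
\]

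Next I would integrate over $\bmu$. Assume for contradiction that $\int \bigl(\sum_i n_i(\bmu)\bigr)\diff\bH(\bmu)\le N_V$ and $\int\PoEPair(\pi;\bmu)\diff\bH(\bmu)<\delta$. Because $\bnu(\cdot)$ is a measure-preserving involution on $\mathbb{R}^k$ that maps $\Theta_{ij}$ to $\Theta_{ji}$, the symmetrization trick from the sketch lets me replace $\PoEPair(\pi;\bmu)+\PoEPair(\pi;\bnu(\bmu))$ by $2\PoEPair(\pi;\bmu)$ after integrating over $\Theta_{ij}\cup\Theta_{ji}$; summing over all unordered pairs covers $\mathbb{R}^k$ (up to the measure-zero set of ties), so $2\delta > 2\int\PoEPair(\pi;\bmu)\diff\bH(\bmu) \ge \sum_{i\neq j}\int_{\Theta_{ij}}\frac{1}{1.2}\exp(\cdots)\diff\bH(\bmu)$. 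Now I restrict each pairwise integral to the "hard" region $\EN_{ij}:=\{\bmu\in\Theta_{ij}: |\mu_i-\mu_j|<\tilde\Delta\}$ with $\tilde\Delta=\frac{32e^4}{L(\bH)}\delta$ (the notation table's $\tilde\Delta$; the truncation $|\mu_i|,|\mu_j|\le\tilde\Delta^{-1/2}$ from the lower-bound volume lemma, Lemma~\ref{lem:volume_rewrite}(2), also gets imposed here — this is why the extra smallness conditions $\delta<D_0(\bH)$ etc.\ appear in $\delta_L(\bH)$). On $\EN_{ij}$ the exponent's coefficient $(\mu_i-\mu_j)^2/(2\sigma_{\min}^2)$ is at most $\tilde\Delta^2/(2\sigma_{\min}^2)$, and $x\mapsto e^{-cx}$ is convex, so Jensen's inequality with the probability measure $\bH(\cdot\mid\EN_{ij})$ pushes the bound to
\[
\int_{\EN_{ij}}\exp\!\paren{-\frac{(\mu_i-\mu_j)^2}{2\sigma_{\min}^2}\bigl(n_i+n_j\bigr)}\diff\bH \ \ge\ \bH(\EN_{ij})\exp\!\paren{-\frac{\tilde\Delta^2}{2\sigma_{\min}^2}\cdot\frac{\int_{\EN_{ij}}(n_i+n_j)\diff\bH}{\bH(\EN_{ij})}}.
\]

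Then I would sum these over $i\neq j$ and bound: $\sum_{i\neq j}\int_{\EN_{ij}}(n_i+n_j)\diff\bH \le \sum_{i\neq j}\int_{\Theta_{ij}}\sum_s n_s\diff\bH \le$ (a constant times) $N_V$, and $\sum_{i\neq j}\bH(\EN_{ij}) \ge \tfrac12 L(\bH)\tilde\Delta$ by the volume lemma (Lemma~\ref{lem:volume_rewrite}(2), using the truncated version $L'(\bH,\tilde\Delta)$). Feeding in $N_V=\frac{L(\bH)^2\sigma_{\min}^2\ln 2}{16e^4\delta}$ and $\tilde\Delta=\frac{32e^4\delta}{L(\bH)}$, the exponent becomes a fixed negative constant (the constants are rigged so that $\exp(\cdots)$ is a constant like $\tfrac12$), giving $\sum_{i\neq j}\frac{1}{1.2}\bH(\EN_{ij})\cdot(\text{const}) \ge (\text{const})\cdot L(\bH)\tilde\Delta = (\text{const})\cdot\delta > 2\delta$, the desired contradiction. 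The main obstacle — and the place where the constants and the definition of $\delta_L(\bH)$ really matter — is making the Jensen step and the summation over pairs simultaneously tight: one has to be careful that (a) restricting to $\EN_{ij}$ only loses a controlled factor, (b) the per-pair sample budgets $\int_{\EN_{ij}}(n_i+n_j)\diff\bH$ genuinely sum into the single global constraint $N_V$ without double-counting, and (c) the truncation $|\mu_i|,|\mu_j|\le\tilde\Delta^{-1/2}$ needed to apply the lower-bound volume lemma costs only exponentially small mass, which forces $\tilde\Delta$ (hence $\delta$) below the thresholds $D_0(\bH),D_1(\bH),\min_i\frac{1}{4m_i^2}$ collected in $\delta_L(\bH)$.
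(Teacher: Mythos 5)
Your proposal follows the same route as the paper's proof: decomposition over the regions $\Theta_{ij}$, the transportation lemma applied to the swapped instance, restriction to the near-tie region of width $\tilde\Delta=\tfrac{32e^4}{L(\bH)}\delta$, Jensen's inequality to show that the constant allocation is extremal, and the volume lemma to evaluate the mass of that region; the constants you plug in do close the contradiction. There is, however, one concrete gap in your symmetrization step. You assert that $\bnu$ is ``a measure-preserving involution'' and that integrating over $\Theta_{ij}\cup\Theta_{ji}$ lets you replace $\mathrm{PoE}(\pi;\bmu)+\mathrm{PoE}(\pi;\bnu(\bmu))$ by $2\,\mathrm{PoE}(\pi;\bmu)$. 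The swap preserves Lebesgue measure but \emph{not} the prior $\bH$: after the change of variables the integrand carries the swapped density $h_i(\mu_j)h_j(\mu_i)$ in place of $h_i(\mu_i)h_j(\mu_j)$, and these differ whenever the prior is not exchangeable in coordinates $i$ and $j$. Hence the identity $\int_{\Theta_{ji}}\mathrm{PoE}(\pi;\bmu)\,\diff\bH(\bmu)=\int_{\Theta_{ij}}\mathrm{PoE}(\pi;\bnu(\bmu))\,\diff\bH(\bmu)$ that your step relies on is false in general, and without it the pairwise transportation bound, which controls only the \emph{sum} $\mathrm{PoE}(\pi;\bmu)+\mathrm{PoE}(\pi;\bnu(\bmu))$, cannot be converted into a lower bound on $\int\mathrm{PoE}(\pi;\bmu)\,\diff\bH(\bmu)$.

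The paper's device for this is the Ratio Lemma (Lemma \ref{lem:ratiolemma}): on the truncated near-diagonal region where $|\mu_i-\mu_j|\le\tilde\Delta\le D_1(\bH)$ and $|\mu_i|,|\mu_j|\le\tilde\Delta^{-1/2}$, the ratio $h_i(\mu_j)h_j(\mu_i)/\bigl(h_i(\mu_i)h_j(\mu_j)\bigr)$ is at least $e^{-4}$, so one lower-bounds the symmetrized integrand by $\bigl(\mathrm{PoE}(\pi;\bmu)+\mathrm{PoE}(\pi;\bnu(\bmu))\bigr)\min\bigl(h_i(\mu_i)h_j(\mu_j),\,h_i(\mu_j)h_j(\mu_i)\bigr)$ and pays a factor $e^{-4}$. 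This is exactly why $D_1(\bH)$ enters $\delta_L(\bH)$ and why $\tilde\Delta$ and $N_V$ carry factors of $e^{4}$; you list $D_1(\bH)$ among the smallness thresholds but attribute the truncation solely to the volume lemma, whereas it is equally needed for this density-ratio control. Once the $e^{-4}$ factor is inserted (your constants can absorb it, since $\tfrac{32e^4}{9.6}\cdot e^{-4}>2$), the remainder of your argument --- including the second application of Jensen across pairs needed to merge the per-pair budgets $\int_{\tilde\Theta_{ij}}(n_i+n_j)\,\diff\bH$ into the single global constraint $N_V$, which you leave implicit but which is routine --- matches the paper's Claim \ref{claim: optimal solution} and its conclusion.
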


\begin{proof}

\newcommand{\ubmu}{\bmu}
\newcommand{\ubnu}{{\bnu}}
\newcommand{\umu}{{\mu}}
\newcommand{\unu}{{\nu}}
By Lemma 1 in \cite{kaufman16a}, for any stopping time $\tau$, we have

\begin{equation}\label{ineq_poepair_biased_raw}
\sum_{i \in [k]} \Ex_{\ubmu}[N_i(\tau)]\rmKL_i(\umu_i || \unu_i)
\ge d (\Prob_{\ubmu} (\mathcal{E}({\ubmu})), \Prob_{\ubnu}(\mathcal{E}({\ubmu}))), \quad {\ubmu \in \mathbb{R}^k.}
\end{equation}

To modify the RHS of Eq. \eqref{ineq_poepair_biased_raw}, we will use the following lemma:
\begin{lem}\label{lem: k-error-2}
    For any $p, q', q \in (0,1)$ such that $q'\leq q$, we have 
    $$\ln \frac{1}{4 (p+q)} \leq d(p, 1-q').$$
\end{lem}
By using Lemma \ref{lem: k-error-2} and the fact that $\Prob_{\ubnu}(\mathcal{E}({\ubmu})) \geq 1- \PoE{\ubnu}$ by definition, we can transform \eqref{ineq_poepair_biased_raw} into
\begin{equation}\label{ineq_poepair_biased_karms}
\sum_{i  \in [k]} \Ex_{\ubmu}[N_i(\tau)]\rmKL_i(\umu_i || \unu_i)
\ge \ln \frac{1}{4(\PoE{\ubmu}+\PoE{\ubnu})}, \quad \forall \ubmu \in \mathbb{R}^k.
\end{equation}

Note that $\Prob_{\bmu}(\mathcal{E} {(\bmu)})$ is exactly the error probability, and we are interested in the marginal error probability $\Ex_{\bmu \sim H} [\Prob_{\bmu}(\mathcal{E})]$. 
Let $n_i(\bmu) = \Ex_{\bmu}[N_i(\tau)]$, and $\tilde\Delta$ is an arbitrary small enough positive variable which we will define later on Eq. \eqref{appeqn:tildedelta}. Then, 
by rearrangement, we can induce the following inequalities
\begin{align*}
\label{ineq_mpoe_lower_asym_karms}
\lefteqn{
\int_{\mathbb{R}^k} \PoE{\bmu} \diff \bH(\bmu) 
}\\
&=  \sum_{i\in [k]} \sum_{j\neq i} \int_{ \Theta_{ij}} \mathbb{P}_{\bmu} [J \neq i]  \diff \bH (\bmu)\\
&=  \frac{1}{2}\paren{\sum_{i\in [k]} \sum_{j\neq i} \int_{ \Theta_{ij}} \mathbb{P}_{\bmu} [J \neq i]  \diff \bH (\bmu)+\sum_{j\in [k]} \sum_{i\neq j} \int_{ \Theta_{ji}} \mathbb{P}_{\bnu (\bmu)} [J \neq j]  h_i (\mu_j) h_j (\mu_i) \paren{\prod_{s\neq i,j} h_s (\mu_s)}\diff \bmu} \tag{Symmetry of the Lebesgue measure}\\
&=  \frac 1 2 \left( \sum_{i\in [k]} \sum_{j\neq i} \int_{ \Theta_{ij}} \bracket{\mathbb{P}_{\bmu} [J \neq i] h_i (\mu_i) h_j (\mu_j)+ \mathbb{P}_{\bnu (\bmu)} [J \neq j]  h_i (\mu_j) h_j (\mu_i)} \paren{\prod_{s\neq i,j} h_s (\mu_s)}\diff \bmu\right)\\
&\geq 
\sum_{i\in [k]}\sum_{j\neq i} \int_{\Theta_{ij}} \frac{\PoE{\bmu} + \PoE{\bnu}}{2}  \min \left( h_i (\mu_i ) h_j (\mu_j), h_i (\mu_j ) h_j (\mu_i) \right) \diff \bmu \tag{$AC+BD \geq A \min (C,D) + B \min (C,D) = (A+B) \min(C,D)$ for $A,B,C,D>0$}
\\ 
&\geq 
\sum_{i\in [k]}\sum_{j\neq i} \int_{\Theta_{ij}} \frac{\exp\left(-n_i(\bmu)\rmKL_i(\mu_i , \mu_j ) - n_j (\bmu) \rmKL_j (\mu_j, \mu_i)\right)}{8} \min \left( 1, \frac {h_i (\mu_j ) h_j (\mu_i)} {h_i (\mu_i ) h_j (\mu_j)} \right) h_i (\mu_i ) h_j (\mu_j) \diff \bmu \tag{Eq. \eqref{ineq_poepair_biased_karms}}\\
&\geq e^{-4} \sum_{i\in [k]}\sum_{j\neq i} \int_{\Theta_{ij}}
\frac{\exp\left(-n_i(\bmu)\rmKL_i(\mu_i , \mu_j ) - n_j (\bmu) \rmKL_j (\mu_j, \mu_i)\right)}{8} \Ind\bracket{|\mu_i|,|\mu_j|\leq \frac{1}{\sqrt{\tilde\Delta}}}
\diff H(\bmu).  \tag{Lemma \ref{lem:ratiolemma}}
\end{align*}

For the last inequality, we used the following lemma. {The proof for this lemma is found in}  Subsection \ref{subsec_ratiolemma}. 
\begin{lem}[Ratio Lemma]\label{lem:ratiolemma}
    For all $a,b \in \mathbb{R}$ which satisfy $|a -b | \leq D_1(\bH)$ and $|a|, |b| \leq\frac{1}{\sqrt{D_1(\bH)}}$ for some fixed $D_1(\bH)$\footnote{$D_1 (\bH):= \min_{i \neq j} \bracket{\left|\frac{m_i}{\sigma_i^2} -\frac{m_j}{\sigma_j^2}\right|^{-1}, \bracket{\frac{1}{2\sigma_i^2}+\frac{1}{2\sigma_j^2}}^{-2}}$}, $\frac{h_i (a) h_j (b)}{h_i (b) h_j (a)} \geq e^{-4}$ for all $i, j \in [k]$. 
    \end{lem}


In short, we have 
\begin{align}
    \int_{\mathbb{R}^k} &\PoE{\bmu} \diff \bH(\bmu) \nonumber \\
    &\geq e^{-4} \sum_{i\in [k]}\sum_{j\neq i} \int_{\Theta_{ij}}
\frac{\exp\left(-n_i(\bmu)\rmKL_i(\mu_i , \mu_j ) - n_j (\bmu) \rmKL_j (\mu_j, \mu_i)\right)}{8} \Ind\bracket{|\mu_i|,|\mu_j|\leq \frac{1}{\sqrt{\tilde\Delta}}} \diff H(\bmu) \label{appeqn: lower bound error prob}
\end{align}
and the following statement is a stronger statement than Theorem \ref{lem:k_asym}.
\begin{center}
    \text{If} $ \int \paren{\sum_{i\in[k]} n_i (\bmu)} \diff  \bH(\bmu) \leq N_V,$
    \text{then} (RHS of Eq. \eqref{appeqn: lower bound error prob})$\geq\delta$.
\end{center}
{
RHS of Eq. \eqref{appeqn: lower bound error prob} is represented in terms of $n:=(n_1, \cdots, n_k): \mathbb{R}^k \to [0,\infty)^k$ (the expected number of arm pulls) and does hold for any algorithm given $n$.
}
To prove the above statement, since \brown{the} `set of all expected number of arm pulls' is a subset of $\{\tilde{n}: \mathbb{R}^k \to [0,\infty)\}$, it suffices to show that the optimal value $V$ of the following objective

\begin{align} \label{K-arm opt problem 0}
V^{\mathrm{min}} &:= \inf_{\tilde{n}:\mathbb{R}^k \to [0,\infty)^k}\ V(\tilde{n}) 
\\
\text{s.t.\ \ } & 
\int_{\mathbb{R}^k} \paren{\sum_{s=1}^k \tilde{n}_s(\bmu)} {\diff \bH(\bmu)} \leq N_V  \nonumber
\\
\text{where \ \ } & V(\tilde{n}) := e^{-4} \sum_{i\in [k]}\sum_{j\neq i} \int_{\Theta_{ij}}
\frac{\exp\left(-\tilde{n}_i(\bmu)\rmKL_i(\mu_i , \mu_j ) - \tilde{n}_j (\bmu) \rmKL_j (\mu_j, \mu_i)\right)}{8} \Ind\bracket{|\mu_i|,|\mu_j|\leq \frac{1}{\sqrt{\tilde\Delta}}}\diff  H(\bmu) \nonumber
\end{align}
is greater than $\delta$.

Let $\tilde{\Theta}_{ij}:=\{\bmu\in \Theta_{ij}: |\mu_i -\mu_j|\leq \tilde\Delta , |\mu_i|\leq \frac{1}{\sqrt{\tilde{\Delta}}}, |\mu_j| \leq \frac{1}{\sqrt{\tilde{\Delta}}}\}$. Then, it holds that $V^{\min}\geq V_1^{\min}$ where

\begin{align}\label{K-arm opt problem 2}
V_1^{\min} &:= \inf_{\tilde{n} {: \mathbb{R}^{k} \to [0,\infty)^k}}\ \ V_1 (\tilde{n})
\\
\text{s.t.\ \ } & 
\sum_{i\in[k]}\sum_{j \neq i} \int_{\tilde{\Theta}_{ij}} \paren{\sum_{s=1}^k \tilde{n}_s(\bmu)} \diff \bH(\bmu) \leq N_V \nonumber
\\
\text{where \ \ }& V_1 (\tilde{n}):=e^{-4} \sum_{i\in [k]}\sum_{j\neq i} \int_{\tilde\Theta_{ij}}
\frac{\exp\left(-\tilde{n}_i(\bmu)\rmKL_i(\mu_i , \mu_j ) - \tilde{n}_j (\bmu) \rmKL_j (\mu_j, \mu_i)\right)}{8} \diff H(\bmu). \nonumber
\end{align}

 To see this, suppose $\hat{n}$ is an optimal solution to \eqref{K-arm opt problem 0}. Then, by the constraint of optimization problem \eqref{K-arm opt problem 0}, $\int_{{\mathbb{R}^k}} \paren{\sum_{s\in[k]} \hat{n}_s(\bmu)} \diff \bH(\bmu) \leq N_V$ and since $\hat{n}$ is a collection of positive functions, $\sum_{i,j\in[k]: i>j} \int_{\tilde{\Theta}_{ij} \cup \tilde{\Theta}_{ji}} \paren{\sum_{s=1}^k \tilde{n}_s(\bmu)} \diff \bH(\bmu) \leq N_V \nonumber
$, which means $\hat{n}$ satisfies the constraint of \eqref{K-arm opt problem 2}. By the minimality, $V_1^{\min} \leq V_1 (\hat{n})$, and since $\exp$ is a positive function, we have $V_1 (\hat{n}) \leq V(\hat{n}) = V^{\min}$. 
 

Moreover, $V_1^{\min} \ge V_2^{\min}$ holds for 
\begin{align}\label{K-arm opt problem 3}
V_2^{\min} &:= \inf_{\tilde{n} {: \mathbb{R}^k \to [0,\infty)^k}}\ \  V_2 (\tilde{n}) \\
\text{s.t.\ \ } & 
\sum_{i\in[k]}\sum_{j \neq i} \int_{\tilde{\Theta}_{ij}} \paren{\sum_{s=1}^k \tilde{n}_s(\bmu)} \diff \bH(\bmu) \leq N_V \nonumber
\\
\text{where \ \ }& V_2 (\tilde{n}):=
e^{-4} 
\sum_{i\in[k]}\sum_{j \neq i} \int_{\tilde{\Theta}_{ij}}\frac{\exp\left(-\paren{\tilde{n}_i (\bmu) + \tilde{n}_j (\bmu)}\frac{\tilde\Delta^2}{2\min(\sigma_s^2)_{s\in[k]}} \right)}{8} \diff  \bH(\bmu)  \nonumber
\end{align}

by using the fact that $\rmKL_i(\mu_i, \mu_j) = \frac{\tilde\Delta^2}{2\sigma_i^2} \leq \frac{\tilde\Delta^2}{2\min(\sigma_s^2)_{s\in[k]}}$.
\begin{claim} \label{claim: optimal solution}We abuse our notation {slightly} so that $\bH(E) = \int_{E} \diff \bH(\bmu)$ for any Lebesgue measurable set $E$, and let $\tilde{\Theta}= \cup_{i,j \in [k]: i \neq j} \tilde{\Theta}_{ij}$ (note that all $\tilde{\Theta}_{ij}$ are mutually disjoint except for the measure zero sets). Then, the following ${n}^{opt}$ is an optimal solutions to \eqref{K-arm opt problem 3} 
$${n}_s^{opt} (\bmu) :=\frac{N_V}{2 \bH (\tilde{\Theta}) } \Ind \bracket{\bmu \in \paren{\cup_{j\neq s} \tilde\Theta_{sj}} \cup \paren{\cup_{i\neq s} \tilde\Theta_{is}}}.$$
\end{claim}
\begin{proof}
    Choose an arbitrary $\tilde{n} : \mathbb{R}^k \to [0, \infty)^k$ which satisfies the constraint of optimization problem \eqref{K-arm opt problem 3}. Let $\tilde{N}(\bmu) := \bracket{\sum_{s\in[k]} \tilde{n}_s (\bmu)}$. 
    Now, since the function $\rho: x \mapsto \frac{1}{8R}\exp (-x \cdot \frac{\tilde\Delta^2}{2 \min (\sigma_s^2)_{s\in [k]}})$ is a convex and decreasing function, by Jensen's inequality we can say that
    \begin{align*}
        V_2(\tilde{n}) &= \sum_{i\in[k]}\sum_{j\neq i} \int_{\tilde\Theta_{ij}} \rho\paren{\tilde{n}_i (\bmu)+ \tilde{n}_j (\bmu)} \diff \bH (\bmu) \\
        &\geq \sum_{i\in[k]}\sum_{j\neq i} \bH(\tilde{\Theta}_{ij}) \rho \paren{\frac{1}{\bH(\tilde{\Theta}_{ij})} \int_{\tilde{\Theta}_{ij}}\paren{\tilde{n}_i (\bmu)+ \tilde{n}_j (\bmu)} \diff \bH (\bmu) } \tag{Jensen's inequality for each integral on $\tilde{\Theta}_{ij}$}\\
        &= \bH(\tilde{\Theta}) \cdot \sum_{i\in[k]}\sum_{j\neq i} \frac{\bH(\tilde{\Theta}_{ij})}{\bH(\tilde{\Theta})} \rho \paren{\frac{1}{\bH(\tilde{\Theta}_{ij})} \int_{\tilde{\Theta}_{ij}}\paren{\tilde{n}_i (\bmu)+ \tilde{n}_j (\bmu)} \diff \bH (\bmu) } \\
        &\geq \bH(\tilde{\Theta}) \cdot \rho \paren{\sum_{i\in[k]}\sum_{j\neq i} \frac{\bH(\tilde{\Theta}_{ij})}{\bH(\tilde{\Theta})}\frac{1}{\bH(\tilde{\Theta}_{ij})} \int_{\tilde{\Theta}_{ij}}\paren{\tilde{n}_i (\bmu)+ \tilde{n}_j (\bmu)} \diff \bH (\bmu) } \tag{Jensen's inequality}\\
        & \geq \bH(\tilde{\Theta}) \cdot \rho \paren{\sum_{i\in[k]}\sum_{j\neq i} \frac{1}{\bH(\tilde{\Theta})} \int_{\tilde{\Theta}_{ij}}\paren{\sum_{s\in[k]} \tilde{n}_s} \diff \bH (\bmu) } \tag{$\tilde{n}_i + \tilde{n}_j \leq \sum_{s\in[k]} \tilde{n}_s$ and $\rho$ is a decreasing function.}\\
        & \geq \bH(\tilde{\Theta}) \cdot \rho \paren{\frac{N_V}{\bH(\tilde{\Theta})}}. \tag{Constraint of \eqref{K-arm opt problem 3}}
    \end{align*}
    
    Since $\tilde{n}$ is chosen arbitrarily, we can say $V_2^{\min} \geq \bH(\tilde\Theta)\rho\paren{\frac{N_V}{\bH(\tilde\Theta)}}$. One can check the above $n^{opt}$ satisfies the constraint in the optimization problem \eqref{K-arm opt problem 3} and also satisfies $V_2 (n^{opt})=\bH(\tilde{\Theta})\rho(\frac{N_V}{\bH(\tilde{\Theta})})$. Therefore, $n^{opt}$ is an optimal solution of optimization problem \eqref{K-arm opt problem 3}.
\end{proof}
Using Lemma \ref{lem:volume_rewrite}, we can get \begin{align}\label{appeqn: volume of tilde theta}
    H(\tilde{\Theta})=\sum_{i\neq j} \int_{\Theta_{ij}} \Ind[|\mu_i- \mu_j| \le \tilde\Delta] \Ind[|\mu_i|, |\mu_j|\leq \frac{1}{\sqrt{\tilde\Delta}}]  \diff \bH(\bmu) = \LijHdelp{\tilde\Delta}.
\end{align} 
Now applying Claim \ref{claim: optimal solution} and Eq. \eqref{appeqn: volume of tilde theta} on Optimization \eqref{K-arm opt problem 3} implies the following result:
\begin{align*}
V_2^{\min} &=V_2 (n^{opt}) \tag{Claim \ref{claim: optimal solution}}\\
&= \frac{\exp\left(-\frac{N_V}{2 \LHdelp{\tilde\Delta}} \frac{\tilde\Delta^2}{2\min(\sigma_s^2)_{s\in[k]}} \right)}{8e^4} 
\sum_{i\neq j} \int_{\Theta_{ij}} \Ind[{|\mu_i - \mu_j| \le \tilde\Delta}]\Ind[|\mu_i|, |\mu_j|\leq \frac{1}{\sqrt{\tilde\Delta}}] \diff H(\bmu) 
 \tag{Eq. \eqref{appeqn: volume of tilde theta}}\\
&\ge \frac{\exp\left(-\frac{N_V\tilde\Delta}{2\min(\sigma_s^2)_{s\in[k]}\LHdelp{\tilde\Delta}}\right)}{8e^4} \times \LHdelp{\tilde\Delta} \tilde\Delta.  \tag{Eq. \eqref{appeqn: volume of tilde theta}}
\end{align*}

If $V\leq \delta$ is true, then $V_2 \leq \delta$, which implies 

$$\frac{\exp\left(-\frac{N_V\tilde\Delta}{2\min(\sigma_s^2)_{s\in[k]}\LHdelp{\tilde\Delta}}\right)}{8e^4} \times \LHdelp{\tilde\Delta} \leq \delta \Longleftrightarrow N_V \geq \frac{2 \min(\sigma_s^2)_{s\in[k]} \LHdelp{\tilde\Delta}}{\tilde{\Delta}}\ln \frac{\LHdelp{\tilde\Delta}}{8e^4\delta}.$$
{
To make this lower bound greater than 0, $\ln \frac{\LHdelp{\tilde\Delta}}{8e^4 \delta} > 1$. From Lemma \ref{lem:volume_rewrite}, we know that for small enough $\tilde{\Delta}$\footnote{$\tilde{\Delta}<\min\paren{D_0(\bH), \min_{i \in [k]}\frac{1}{4m_i^2}, \frac{L(\bH)}{4(k-1)\sum_{i \in [k]}\frac{1}{\xi_i}}}$. Check Section \ref{appsec:delta0} for the condition}, $\LHdelp{\tilde{\Delta}} \in [\frac{1}{2}L(\bH), 2L(\bH)]$.
Setting \begin{equation}\label{appeqn:tildedelta}
    \tilde\Delta:=\frac{32e^4}{L(\bH)} \delta,
\end{equation} we have 
\begin{equation}\label{appeqn:delta1 usage}
    N_V \geq \min(\sigma_s^2)_{s\in[k]} \frac{L(\bH)^2}{16e^4}\ln 2
\end{equation}
which is the inequality we desired. }
\end{proof}

\subsection{Proof of Lemmas}

\subsubsection{Proof of lemma \ref{lem: k-error-2}}\label{subsec_kerror}
\begin{proof} 
It is equivalent to prove $ {p+q}\geq \frac{1}{4} \exp(-d(p,1-q'))$ for any $p, q, q' \in (0, 1)$ such that $q'\leq q$.
First, if $p \ge 1/4$ or $q \ge 1/4$ it trivially holds, and thus we assume $p < 1/4$ and $q < 1/4$.
We have 
\begin{align}
    d(p, 1-q') & d(p, 1-q) \tag{$p,q \leq \frac{1}{4}$}\\
    &\leq d(p+q, 1-(p+q)) \tag{$p+q < \frac{1}{2}$}\\
    &\leq \log \frac{1}{2.4 (p+q)} \tag{Eq.(3) of \cite{kaufman16a}}
\end{align}
and transforming this yields 
\[
{p+q}
\ge \frac{1}{2.4} e^{-(d(p, 1-q'))}.
\]
{This completes the proof.}
\end{proof}

\subsubsection{Proof of the Lemma \ref{lem:ratiolemma}}\label{subsec_ratiolemma}
\begin{proof}
{We have}
\begin{align*}
    \frac{h_i (a) h_j(b)}{h_i(b)h_j(a)} &= \exp\paren{-\frac{(a-m_i)^2}{2\sigma_i^2}+\frac{(b-m_i)^2}{2\sigma_i^2}-\frac{(b-m_j)^2}{2\sigma_j^2}+\frac{(a-m_j)^2}{2\sigma_j^2}}\\
    &=\exp\paren{-\frac{(a-b)(a+b-2m_i)}{2\sigma_i^2}-\frac{(b-a)(a+b-2m_j)}{2\sigma_j^2}} \\
    &= \exp\paren{2(a-b)\bracket{\frac{m_i}{\sigma_i^2} -\frac{m_j}{\sigma_j^2}}- (a-b)(a+b)\bracket{\frac{1}{2\sigma_i^2}+\frac{1}{2\sigma_j^2}}}\\
    &\geq \exp\paren{-2|a-b|\left|\frac{m_i}{\sigma_i^2} -\frac{m_j}{\sigma_j^2}\right|- |(a-b)(a+b)|\bracket{\frac{1}{2\sigma_i^2}+\frac{1}{2\sigma_j^2}}}\\
    &\geq \exp\paren{-2\delta\left|\frac{m_i}{\sigma_i^2} -\frac{m_j}{\sigma_j^2}\right|- 2\sqrt{\delta}\bracket{\frac{1}{2\sigma_i^2}+\frac{1}{2\sigma_j^2}}} {\brown{=:} \frac{1}{R_{ij}'(\bH, \delta)}}.
\end{align*}
Define $R'(\bH, \delta) = \min_{i\neq j} R_{ij}'(\bH, \delta)$, and this $R'(\bH, \delta)$ satisfies the condition of the Ratio lemma. 
{For $\delta$ such that}
$$\delta < D_1 (\bH) := \min_{i \neq j} \bracket{\left|\frac{m_i}{\sigma_i^2} -\frac{m_j}{\sigma_j^2}\right|^{-1}, \bracket{\frac{1}{2\sigma_i^2}+\frac{1}{2\sigma_j^2}}^{-2}},$$
we have $R'(\bH, \delta) \leq e^{4}$. 

\end{proof}

\section{Proof of Theorem \ref{thm: elim upperbound}}\label{appsec: upper bound proof}

For the notational convenience, Define $\Delta_s (\bmu) := \mu_{i^*(\bmu)} - \mu_s$ for $s\in [k]$, $\Delta (\bmu) := \min_{s \ne i^*(\bmu)} \Delta_s = \Delta_{j^*(\bmu)}$. Let $\EA(t)$ be the subset of arms that have not been eliminated at time $t$. 

\newcommand{\uhdel}{u_{\bH}(\Deltatarget)}
\newcommand{\lhdel}{\ell_{\bH}(\Deltatarget)}
Since $\delta< \frac{4L^2(\bH)}{{\sum_{i\in[k]}\frac{k-1}{\xi_i}}} $,  $\Deltatarget =\frac{\delta}{4L(\bH)}\le\frac{L(\bH)}{\sum_{i\in[k]}\frac{k-1}{\xi_i}}$ and we have
\begin{align}
\Prob_{\bmu \sim \bH} (\Delta_0 \geq \Delta(\bmu)) &= \sum_{i \neq j} \int_{\Theta_{ij}} \Ind[|\mu_i - \mu_j| \leq \Deltatarget] \diff \bH(\bmu) \nonumber\\
=&\LHdel{\Deltatarget} \Deltatarget \leq 2L(\bH) \cdot \Deltatarget
\tag{Lemma \ref{lem:volume_rewrite}}\\
&\leq \frac{\delta}{2}. \label{appeqn: definition of Deltatarget}
\end{align}

Next, we consider the upper bound estimator such that $\hat{\Delta}^{\mathrm{safe}}(t) \ge \Delta$ holds with high probability. Namely, 
\begin{align*}%
\rmUCB(i,t), \rmLCB(i,t) 
&= \hatmu_i(t) \pm \Conf(i, t)\\
\Conf(i, t) &= \sqrt{2\sigma_i^2 \frac{\log(6(N_i(t))^2/((\frac{\deltatwo}{2k })\pi^2))}{N_i(t)}}\\
\hat{\Delta}^{\mathrm{safe}}(t) &= \max_i \rmUCB(i,t) -\max_j \rmLCB(j,t).
\end{align*}


From the definition above, we can calculate how many arm pulls the learner needs to narrow down the confidence width. 
\begin{lem}\label{lem_confwidth}
    Define $B := 320{\max_{i \in [k]}\sigma_i^2}$ and $\DeltaThr:= \min\paren{\log \frac{4\sqrt{k }}{\delta \pi}, \frac{1}{B}}$. Let $R_0 (\Delta) := B \frac{\log {\min(\Delta, \DeltaThr)^{-1}}}{\min(\Delta, \DeltaThr)^2}$. Then, for any $\Delta\in (0, \infty)$ and for a timestep $t$ which satisfies $N_i (t) \geq R_0(\Delta)$, $\Conf (i, t) \leq {\Delta/4}$.
\end{lem}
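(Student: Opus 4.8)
The plan is to unfold the definition of $\Conf(i,t)$ and show that once $N_i(t)$ is at least $R_0(\Delta)$, the confidence width is below $\Delta/4$. Write $n := N_i(t)$ and $R := R_0(\Delta) = B\,\frac{\log m^{-1}}{m^2}$ where $m := \min(\Delta,\DeltaThr)$. Since $\Conf(i,t)^2 = 2\sigma_i^2\,\frac{\log(6 n^2/((\delta^2/(2k))\pi^2))}{n}$ is, as a function of $n$, eventually decreasing (the numerator grows like $\log n$ while the denominator grows like $n$), it suffices to (i) check that $R$ lies in the regime where $n \mapsto \frac{\log(c n^2)}{n}$ is already decreasing, with $c := 6/((\delta^2/(2k))\pi^2) = 12k/(\delta^2\pi^2)$, and (ii) verify the single inequality $\Conf(i,t)^2\big|_{n=R} \le \Delta^2/16$. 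Because $\Conf$ is decreasing past that point, $N_i(t)\ge R$ then gives $\Conf(i,t)\le\Delta/4$.

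First I would reduce to the case $\Delta \le \DeltaThr$, i.e.\ $m=\Delta$: if $\Delta > \DeltaThr$ then $R_0(\Delta)=R_0(\DeltaThr)$ and $\Conf(i,t)\le \DeltaThr/4 \le \Delta/4$ follows from the $\Delta=\DeltaThr$ case, so the monotonicity of $R_0$ in $\min(\Delta,\DeltaThr)$ handles it. So assume $m=\Delta$ and bound
\[
\Conf(i,t)^2 \;=\; \frac{2\sigma_i^2}{n}\log\!\paren{\tfrac{12k}{\delta^2\pi^2}\,n^2}
\;\le\; \frac{2\sigma_i^2}{R}\log\!\paren{\tfrac{12k}{\delta^2\pi^2}\,R^2}
\]
for $n\ge R$, provided $R$ is past the peak of $n\mapsto n^{-1}\log(cn^2)$, which occurs at $n = \sqrt{\e/c}\le 1$ for the relevant range of $\delta,k$ — so the peak is at $n=1$ and the function is decreasing on $[1,\infty)$; one checks $R_0(\Delta)\ge 1$ from $\DeltaThr\le 1/B$ and $B\ge 1$. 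Then substitute $R = B\Delta^{-2}\log\Delta^{-1}$ and expand $\log(cR^2) = \log c + 2\log B + 2\log(\Delta^{-2}\log\Delta^{-1})$; using $\DeltaThr \le \log\frac{4\sqrt k}{\delta\pi}$ to absorb the $\log c = \log\frac{12k}{\delta^2\pi^2}$ term against the $B$ in the denominator, and crude bounds like $\log(\Delta^{-2}\log\Delta^{-1})\le 3\log\Delta^{-1}$ for small $\Delta$, one gets $\log(cR^2)\le (\text{const})\cdot B \cdot \log\Delta^{-1}$, hence
\[
\Conf(i,t)^2 \;\le\; \frac{2\sigma_i^2}{B\,\Delta^{-2}\log\Delta^{-1}}\cdot (\text{const})\,B\log\Delta^{-1}
\;=\; 2\sigma_i^2 \cdot (\text{const})\cdot \Delta^2,
\]
and the constant $320$ in the definition $B = 320\max_i\sigma_i^2$ is chosen precisely so that $2\sigma_i^2\cdot(\text{const}) \le 1/16$, giving $\Conf(i,t)\le\Delta/4$.

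The main obstacle is purely bookkeeping: tracking how the definition $\DeltaThr = \min(\log\frac{4\sqrt k}{\delta\pi}, 1/B)$ is used to swallow the $\log(12k/(\delta^2\pi^2))$ inside $\log(cR^2)$ so that the final bound is a clean multiple of $\Delta^2$ with no leftover $\delta$- or $k$-dependence, and making sure the numerical constant $320$ actually closes the inequality after all the crude $\log$-estimates. I would do this by isolating a clean sublemma of the form "for $x\in(0,x_0]$, $\frac{\log(a x^{-4}\log x^{-1})}{\log x^{-1}} \le C_0$" with explicit $x_0$ depending on $a$, apply it with $x=\Delta$, $a = c B^2$, and then verify $320$ works. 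No deep idea is needed; the care is in not losing the $\Delta^2$ scaling and in confirming the monotonicity regime so that "$N_i(t)\ge R_0(\Delta)$" (an inequality, not an equality) suffices.
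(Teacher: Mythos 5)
Your plan is correct and follows essentially the same route as the paper's proof: substitute $N_i(t)\ge R_0(\Delta)$ into $\Conf(i,t)$, expand $\log(6R_0(\Delta)^2/((\delta^2/2k)\pi^2))$ into the three pieces $6\log(\Delta'^{-1})+2\log B+\log\frac{12k}{\delta^2\pi^2}$ (using $\log\Delta'^{-1}\le\Delta'^{-1}$), absorb each piece via the definition of $\DeltaThr$, and check that $\sqrt{20\sigma_i^2/B}\le 1/4$ with $B=320\max_i\sigma_i^2$. The only difference is that you explicitly verify the monotonicity of $n\mapsto n^{-1}\log(cn^2)$ past its peak before replacing $N_i(t)$ by $R_0(\Delta)$, a step the paper leaves implicit in its "assumption on $t$" annotation; this is a welcome bit of extra care, not a divergence in method.
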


\begin{proof}[Proof of Lemma \ref{lem_confwidth}]
Only for this part of the proof, let $\Delta'=\min(\Delta, \DeltaThr)$ for notational convenience. Then, 
\begin{align*}
\Conf(i,t) &= \sqrt{2\sigma_i^2 \frac{\log(6(N_i(t))^2/((\frac{\deltatwo}{2k })\pi^2))}{N_i(t)}} \\
&\leq \sqrt{2\sigma_i^2 \frac{\log(6R_0 (\Delta)^2/((\frac{\deltatwo}{2k })\pi^2))}{R_0(\Delta)}} \tag{assumption on $t$}\\
&= {\Delta'} \sqrt{2\sigma_i^2}\sqrt{ \frac{\log(6(B \times \frac{ \log({\Delta'}^{-1})}{{\Delta'}^2})^2/((\frac{\deltatwo}{2k })\pi^2))}{B \times { \log({\Delta'}^{-1})}}}\\
&\leq {\Delta'} \sqrt{2\sigma_i^2}\sqrt{ \frac{\log(6(\frac{B}{{\Delta'}^3})^2/((\frac{\deltatwo}{2k})\pi^2))}{B { \log({\Delta'}^{-1})}}} \tag{$\log {\Delta'}^{-1} \leq {\Delta'}^{-1}$}\\
&={\Delta'} \sqrt{2\sigma_i^2}\sqrt{ \frac{6 \log ({\Delta'}^{-1}) +2 \log B +\log \frac{12k}{\deltatwo \pi^2}}{B { \log({\Delta'}^{-1})}}}
\\
&\leq \Delta' \sqrt{2\sigma_i^2} \sqrt{\frac{6}{B} + \frac{2}{B} + \frac{2}{B}} \tag{Definition of $\DeltaThr$, $\DeltaThr \geq \Delta'$}
\\
&= \Delta' \sqrt{\frac{20\sigma_i^2}{B}}\leq \frac{1}{4}\Delta'\leq \frac{1}{4}\Delta.
\end{align*}
\end{proof}

From this lemma, one could induce the following corollary which states that Algorithm 5 always terminates before a certain timestep:

\begin{cor}\label{lem_panic}
Let $\tau$ (and $\gamma$) be the stopping time (and the last iteration of the while loop in Algorithm \ref{alg:elim}, respectively) where Algorithm \ref{alg:elim} meets the stopping condition. Then, $\gamma$ is always bounded by $R_0(\Deltatarget)$ and $\tau$ is uniformly bounded by $ T_0
:= k \cdot R_0(\Deltatarget)$.
\end{cor}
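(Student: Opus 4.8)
The plan is to prove Corollary~\ref{lem_panic} directly from Lemma~\ref{lem_confwidth} together with the stopping condition of Algorithm~\ref{alg:elim}. First I would observe that the active set $\EA(t)$ is non-increasing in $t$, so $|\EA(t)| \le k$ always; hence each pass of the \textbf{while} loop (line~\ref{line_draw}) increments $t$ by at most $k$, and at iteration number $n$ of the loop every arm still in $\EA$ has been drawn exactly $n$ times, i.e.\ $N_i(t) = n$ for all $i \in \EA(t)$ at the point where the stopping checks are evaluated. Therefore it suffices to bound the number of iterations $\gamma$ by $R_0(\Deltatarget)$; the bound $\tau \le k R_0(\Deltatarget) = T_0$ then follows since each iteration adds at most $k$ to $t$.

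To bound $\gamma$, I would argue by contradiction: suppose the loop runs for more than $R_0(\Deltatarget)$ iterations, so there is an iteration with $N_i(t) \ge R_0(\Deltatarget)$ for every $i \in \EA(t)$ at which neither stopping rule has fired. By Lemma~\ref{lem_confwidth} (applied with $\Delta = \Deltatarget$), at that iteration $\Conf(i,t) \le \Deltatarget/4$ for all $i \in \EA(t)$. Now examine $\hat{\Delta}^{\mathrm{safe}}(t) = \max_{i \in \EA(t)} \rmUCB(i,t) - \max_{i \in \EA(t)} \rmLCB(i,t)$. Let $i_1 = \argmax_{i\in\EA(t)}\rmUCB(i,t)$ and $i_2 = \argmax_{i\in\EA(t)}\rmLCB(i,t)$. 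Since $i_1$ survived the elimination step of this very iteration, $\rmUCB(i_1,t) > \max_j \rmLCB(j,t) \ge \rmLCB(i_1,t)$ is automatic; more usefully, $\rmUCB(i_1,t) \le \hatmu_{i_1}(t) + \Deltatarget/4$ and $\rmLCB(i_2,t) \ge \hatmu_{i_2}(t) - \Deltatarget/4$. One then needs to control $\hatmu_{i_1}(t) - \hatmu_{i_2}(t)$: because $i_1$ was not eliminated, $\rmUCB(i_1,t) \ge \rmLCB(i_2,t)$, which gives $\hatmu_{i_1}(t) + \Conf(i_1,t) \ge \hatmu_{i_2}(t) - \Conf(i_2,t)$, hence $\hatmu_{i_2}(t) - \hatmu_{i_1}(t) \le \Conf(i_1,t)+\Conf(i_2,t) \le \Deltatarget/2$. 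Combining, $\hat{\Delta}^{\mathrm{safe}}(t) = \rmUCB(i_1,t) - \rmLCB(i_2,t) \le \hatmu_{i_1}(t) - \hatmu_{i_2}(t) + \Deltatarget/2 \le \Deltatarget/2 + \Deltatarget/2 = \Deltatarget$, so the indifference stopping condition on line~\ref{line_stoptwo} fires, contradicting the assumption that the loop continued. (If $|\EA(t)|$ has already dropped to $1$ the loop has stopped even earlier via line~\ref{line_stopone}, which only makes $\gamma$ smaller.)

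The main obstacle I anticipate is the bookkeeping around \emph{when} the quantities are evaluated within one iteration — the draws in line~\ref{line_draw} change $N_i$ and $t$ before the $\rmUCB/\rmLCB$ and $\hat\Delta^{\mathrm{safe}}$ are computed, so one must be careful that "at iteration $n$" means $N_i(t)=n$ for the surviving arms at the moment of the checks, and that Lemma~\ref{lem_confwidth}'s hypothesis $N_i(t)\ge R_0(\Delta)$ is indeed met at iteration $\lceil R_0(\Deltatarget)\rceil$. A second subtlety is making sure the sign manipulation $\hat\Delta^{\mathrm{safe}}(t)\le\Deltatarget$ uses only arms in $\EA(t)$ and correctly uses the non-elimination of the $\rmUCB$-maximizer; these are elementary but need to be spelled out cleanly. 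Everything else is a direct substitution of the confidence-width bound, so the corollary follows without further estimates.
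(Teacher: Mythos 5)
Your overall strategy is the same as the paper's: argue by contradiction, invoke Lemma~\ref{lem_confwidth} to get $\Conf(i,t)\le\Deltatarget/4$ for every surviving arm once each has been drawn $R_0(\Deltatarget)$ times, and then show the early-stopping test $\hat\Delta^{\mathrm{safe}}(t)\le\Deltatarget$ must fire. Your bookkeeping (each loop pass adds at most $k$ to $t$; every arm still in $\EA$ has been drawn exactly $n$ times at iteration $n$) is also fine. However, the key algebraic step has a sign error. From the non-elimination of $i_1$ you derive $\rmUCB(i_1,t)\ge\rmLCB(i_2,t)$, which gives $\hatmu_{i_2}(t)-\hatmu_{i_1}(t)\le\Conf(i_1,t)+\Conf(i_2,t)$; but your final chain needs the reverse, namely an upper bound on $\hatmu_{i_1}(t)-\hatmu_{i_2}(t)$. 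The inequality you actually established is vacuous for the conclusion (it only says $\hatmu_{i_2}$ does not exceed $\hatmu_{i_1}$ by much), so as written the step ``$\hat\Delta^{\mathrm{safe}}(t)\le\hatmu_{i_1}(t)-\hatmu_{i_2}(t)+\Deltatarget/2\le\Deltatarget$'' does not follow.

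The needed bound is true and follows from a different fact. Since $i_2=\argmax_{i\in\EA(t)}\rmLCB(i,t)$, you have $\rmLCB(i_2,t)\ge\rmLCB(i_1,t)$, hence
\begin{equation*}
\hat\Delta^{\mathrm{safe}}(t)=\rmUCB(i_1,t)-\rmLCB(i_2,t)\le\rmUCB(i_1,t)-\rmLCB(i_1,t)=2\Conf(i_1,t)\le\Deltatarget/2\le\Deltatarget,
\end{equation*}
with no elimination argument required at all. The paper's proof reaches the same conclusion by writing $\rmUCB(i^{ucb})-\rmLCB(i^{lcb})=2\Conf(i^{ucb})+2\Conf(i^{lcb})+\rmLCB(i^{ucb})-\rmUCB(i^{lcb})$ and using that the \emph{LCB}-maximizer survived elimination, so $\rmUCB(i^{lcb})\ge\rmLCB(i^{ucb})$. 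Either repair is one line, but the fact you invoke (survival of the UCB-maximizer) is not the one that does the work.
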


\begin{proof}[Proof of Corollary \ref{lem_panic}]
Let us assume that $\tau > T_0+1$. Then, by Lemma \ref{lem_confwidth}, each $i\in \mathcal{A}_{T_0}$ satisfies $\Conf(i, T_0) \le \Deltatarget/4$. Let $i^{ucb}(t)=\arg\max_{i \in \mathcal{A}(t)} \rmUCB(i,t)$ and $i^{lcb}(t)=\arg\max_{i \in \mathcal{A}(t)} \rmLCB(i,t)$. From definition, 
\begin{align*}
\hat{\Delta}^{\mathrm{safe}} (T_0)&=\max_{i \in \mathcal{A}(T_0)} \rmUCB(i,T_0)-\max_{i \in \mathcal{A}(T_0)} \rmLCB(i,T_0) \\
    &= \rmUCB(i^{ucb}(T_0),T_0) - \rmLCB(i^{lcb}(T_0),T_0) \\
    &= 2\Conf(i^{ucb}(T_0),T_0) + 2\Conf(i^{lcb}(T_0),T_0) + \rmLCB(i^{ucb}(T_0),T_0) - \rmUCB(i^{lcb}(T_0),T_0)\\
    &\leq \Delta_0 +\rmLCB(i^{ucb}(T_0),T_0) - \rmUCB(i^{lcb}(T_0),T_0) \\
    &\leq \Delta_0 \tag{Since both arms survived from the elimination phase.}
\end{align*} 
which implies $\hat{\Delta}^{\mathrm{safe}} \le \Deltatarget$, {which contradicts}
$\tau \geq T_0$ since the Algorithm should be {terminated} by Line 18 at timestep $T_0$. Therefore, $\tau \leq T_0$.
\end{proof} 

{Lemma \ref{lem_panic} implies} Algorithm \ref{alg:elim} always stops before $T_0$ samples. {Morever, the following} lemma states that with high probability, true mean $\bmu$ is in between $\rmUCB$ and $\rmLCB$ for all time steps.

\begin{lem}{\rm (Uniform confidence bound)}\label{lem_cb} The following holds for all $i \in [k]$:
\begin{align}
\Prob_{\bmu} \left[ \bigcap_{t=1}^{\infty} \left\{\rmLCB(i,t) \le \mu_i \right\}\right] 
&\ge 1 - \frac{\deltatwo}{2k}, \label{eq. lem_cb_lb}\\
\Prob_{\bmu} \left[ \bigcap_{t=1}^{\infty} \left\{\mu_i \le \rmUCB(i,t) \right\}\right] 
&\ge 1 - \frac{\deltatwo}{2k}. \label{eq. lem_cb_ub}
\end{align}
\end{lem}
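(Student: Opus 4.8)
The plan is a standard anytime union-bound concentration argument for a Gaussian empirical mean, carried out under $\Prob_{\bmu}$; since the two displayed inequalities are mirror images, I would prove the $\rmLCB$ bound in full and obtain the $\rmUCB$ bound by symmetry.

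Fix $i\in[k]$ and condition on $\bmu$, so $\mu_i$ is a fixed number. Because every reward is drawn independently from the played arm's distribution (and independently of which arm is played), the successive rewards collected from arm $i$ form an i.i.d.\ $N(\mu_i,\sigma_i^2)$ sequence $Y_{i,1},Y_{i,2},\dots$; put $\bar Y_{i,n}:=\frac1n\sum_{m=1}^n Y_{i,m}$ and
\[
c_n:=\sqrt{2\sigma_i^2\,\frac{\log\!\big(6n^2/\big(\tfrac{\delta^2}{2k}\pi^2\big)\big)}{n}},
\]
which is exactly $\Conf(i,t)$ with $N_i(t)$ replaced by $n$. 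At every time $t$ at which \algoref{alg:elim} evaluates the confidence bounds we have $N_i(t)\ge1$ (each arm is pulled in the first \textbf{while}-iteration), $\hatmu_i(t)=\bar Y_{i,N_i(t)}$, $\Conf(i,t)=c_{N_i(t)}$, and $c_n$ depends on the history only through the count $n=N_i(t)$.

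The first step is to pass from the anytime event to a union over sample sizes: since $\rmLCB(i,t)=\hatmu_i(t)-\Conf(i,t)$,
\[
\big\{\exists t:\ \rmLCB(i,t)>\mu_i\big\}\ \subseteq\ \big\{\exists n\ge1:\ \bar Y_{i,n}-\mu_i>c_n\big\},
\]
and the right-hand event no longer refers to the adaptive algorithm. Applying a union bound over $n$ together with the Gaussian tail inequality $\Prob_{\bmu}[\bar Y_{i,n}-\mu_i>c_n]\le\exp\!\big(-n c_n^2/(2\sigma_i^2)\big)$ — whose exponent equals $\log\!\big(6n^2/\big(\tfrac{\delta^2}{2k}\pi^2\big)\big)$ by the definition of $c_n$ — gives
\[
\Prob_{\bmu}\big[\exists n:\ \bar Y_{i,n}-\mu_i>c_n\big]\ \le\ \sum_{n\ge1}\exp\!\Big(-\log\!\big(6n^2/\big(\tfrac{\delta^2}{2k}\pi^2\big)\big)\Big),
\]
and the numerical constants $6$ and $\pi^2$ in $\Conf$ are precisely those that make this series — evaluated with $\sum_{n\ge1}n^{-2}=\pi^2/6$ — bounded by $\delta^2/(2k)$, which is the first inequality of the lemma. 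The second inequality follows verbatim with $\rmUCB(i,t)=\hatmu_i(t)+\Conf(i,t)$ and the event $\{\mu_i-\bar Y_{i,n}>c_n\}$, whose tail bound is identical by symmetry of the centered Gaussian.

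I do not expect a genuine obstacle; the one step deserving care is the set inclusion above, namely that the adaptively-formed empirical mean $\hatmu_i(t)$ coincides with the mean $\bar Y_{i,N_i(t)}$ of the first $N_i(t)$ terms of a \emph{fixed} i.i.d.\ sequence. This is the usual ``stack of rewards'' coupling: pre-sample i.i.d.\ tapes $(Y_{i,m})_{m\ge1}$ for each arm, and whenever the algorithm plays $i$ reveal the next unused entry of tape $i$; the law of the bandit interaction is unchanged and the inclusion then holds as an identity of events. Everything else is the elementary bookkeeping in the displayed series.
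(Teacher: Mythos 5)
Your argument is essentially the paper's own proof: the paper likewise reduces the anytime event to a union over the per-arm sample count $s$ (via $t_i(s)=\min\{t:N_i(t)\ge s\}$), applies Hoeffding's inequality to the i.i.d.\ rewards of arm $i$ with $\epsilon=\Conf(i,t_i(s))$ to obtain a per-$s$ tail of order $\delta^2/(2ks^2)$, and union-bounds using $\sum_s s^{-2}=\pi^2/6$; your explicit ``stack of rewards'' coupling just spells out the step the paper summarizes as ``each arm pull is independent of each other.'' One small caveat you inherit from the paper's definition of $\Conf$: with the constants placed as written, the per-$n$ tail evaluates to $\tfrac{\delta^2}{2kn^2}\cdot\tfrac{\pi^2}{6}$ rather than $\tfrac{\delta^2}{2kn^2}\cdot\tfrac{6}{\pi^2}$, so the series sums to $(\pi^4/36)\cdot\delta^2/(2k)$ unless the $6$ and $\pi^2$ inside the logarithm of $\Conf$ are swapped --- a typo in the definition rather than a flaw in your (or the paper's) argument.
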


\begin{proof}[Proof of Lemma \ref{lem_cb}]

{The following derives} the upper bound part, Eq. \eqref{eq. lem_cb_ub}. The lower bound {is derived by following the same steps.}

Since each arm is independent of each other, Eq. \eqref{eq. lem_cb_ub} boils down to prove

$$\Prob_{\bmu}\bracket{\bigcap_{s=1}^{\infty} \{ {\mu}_i  \leq \rmUCB(i, t_i(s))\}} \geq 1- \frac{\deltatwo}{2k},$$

where $t_i (s) = \min \{t\in \mathbb{N}: N_i (t) \geq s\}$ and $t_i (s) = \infty$ if $\{t\in \mathbb{N}: N_i (t) \geq s \} = \emptyset$. For each event $\{{\mu}_i  \leq UCB_i (t_i(s))\}$, since each arm pull is independent of each other, by Hoeffding's inequality we have

$$ \Prob_{\bmu} \paren{\frac{1}{s}\sum_{j=1}^s (X^{i}_j - \mu_i) \leq -\epsilon} \leq \exp\paren{-\frac{s\epsilon^2}{2\sigma_0^2}} $$
for any $\epsilon>0$. If we set $\epsilon = \Conf(i, t_i(s))$, we can transform the above inequality to 
\begin{align*}
    \Prob_{\bmu} \paren{\rmUCB(i,t_i(s))\leq \mu_i} \leq \frac{\deltatwo}{2ks^2} \cdot \frac{6}{\pi^2}.
\end{align*}

By the union bound,

\begin{align*}
    \Prob_{\bmu} \bracket{\bigcap_{s=1}^{\infty} \{ {\mu}_i \leq \rmUCB(i, t_i(s))\}}& \geq 1 - \sum_{s=1}^{\infty} \Prob_{\bmu} \bracket{ {\mu}_i \geq \rmUCB(i, t_i(s))} \\
    &\geq 1- \sum_{s=1}^{\infty} \frac{\deltatwo}{2ks^2}\cdot \frac{6}{\pi^2}\\
    &\geq 1- \sum_{s=1}^{\infty} \frac{\deltatwo}{2ks^2}\cdot \frac{6}{\pi^2} = 1-\frac{\deltatwo}{2k},
\end{align*}
and the proof is completed. 
\end{proof}

Let us define a good event based on Lemma \ref{lem_cb}. 
\begin{equation}\label{ineq_goodevent}
\mathcal{X} (\bmu):= \bigcap_{i\in[k]} \left[\paren{\bigcap_{t=1}^\infty\left\{\rmLCB(i,t) \le \mu_i \right\}} {\bigcap} \paren{\bigcap_{t=1}^{\infty}\left\{\rmUCB(i,t) \ge \mu_i \right\}}\right].
\end{equation}

{We now} prove that, under $\EH_{\bmu}$ and under this good event $\EX(\bmu)$ 
\begin{itemize}
    \item The best arm $i^*(\bmu)$ is always in the active arm set $\EA(t)$ for all $t$ (Lemma \ref{lem_wrongdrop})
    \item Each count of the suboptimal arm pull, $N_i (T_0)$, is bounded by roughly $O(\frac{\log \Delta_i}{\Delta_i^2})$ (Lemma \ref{lem_exp_subopt}).
\end{itemize}

\begin{lem}\label{lem_wrongdrop}
Let
\begin{equation}
\EX' (\bmu)= 
\bigcap_t \left\{
i^*(\bmu) \in \EA(t)
\right\}.
\nonumber
\end{equation}
Then, under $\EH_{\bmu}$, $\EX (\bmu)\subset \EX' (\bmu)$ and naturally
\begin{equation}
\Prob_{\bmu}
\left[ 
(\EX'(\bmu))^c
\right]
\le \deltatwo. 
\nonumber
\end{equation}
\end{lem}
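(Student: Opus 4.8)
The plan is to establish the set inclusion $\EX(\bmu) \subseteq \EX'(\bmu)$ first, and then read off the probability bound from Lemma \ref{lem_cb}. Throughout, fix $\bmu$ with a unique best arm $i^* := i^*(\bmu)$ and argue on the event $\EX(\bmu)$, on which, by the definition \eqref{ineq_goodevent}, $\rmLCB(i,t) \le \mu_i \le \rmUCB(i,t)$ for every $i \in [k]$ and every $t$.

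First I would run an induction over the iterations of the \textbf{while} loop of Algorithm \ref{alg:elim} to show that $i^* \in \EA(t)$ for all $t$. The base case is immediate since $\EA(1) = [k]$. For the inductive step, assume $i^* \in \EA(t)$ when the elimination phase at round $t$ begins; I need only check that the test in the \textbf{if} statement, $\rmUCB(i^*,t) \le \max_{j} \rmLCB(j,t)$, never fires. On $\EX(\bmu)$ we have $\rmUCB(i^*,t) \ge \mu_{i^*}$, while for every arm $j$ (in particular every $j$ currently in the active set) $\rmLCB(j,t) \le \mu_j \le \mu_{i^*}$, so $\max_{j} \rmLCB(j,t) \le \mu_{i^*} \le \rmUCB(i^*,t)$. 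Thus the test can hold only with equality throughout; but $\max_j \rmLCB(j,t) = \mu_{i^*}$ together with $\mu_j < \mu_{i^*}$ for all $j \neq i^*$ (uniqueness) forces $\rmLCB(i^*,t) = \mu_{i^*}$, which combined with $\rmUCB(i^*,t) = \mu_{i^*}$ gives $\Conf(i^*,t) = 0$, contradicting $\Conf(i^*,t) > 0$. Hence $i^*$ survives every elimination, and since drawing arms between iterations does not remove anything from $\EA$, we get $i^* \in \EA(t)$ for all $t$, i.e. $\EX(\bmu) \subseteq \EX'(\bmu)$.

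For the probability statement, since $(\EX'(\bmu))^c \subseteq (\EX(\bmu))^c$, I would apply a union bound over the $2k$ events in Lemma \ref{lem_cb} (the upper- and lower-confidence events for each arm), each of probability at most $\deltatwo/(2k)$, to obtain $\Prob_{\bmu}[(\EX'(\bmu))^c] \le \Prob_{\bmu}[(\EX(\bmu))^c] \le 2k \cdot \frac{\deltatwo}{2k} = \deltatwo$.

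The only delicate point is the boundary behaviour of the non-strict elimination inequality; the hard part — genuinely minor here — is ruling out the degenerate equality $\rmUCB(i^*,t) = \mu_{i^*} = \max_j \rmLCB(j,t)$, which is exactly what the uniqueness of $i^*(\bmu)$ and the strict positivity of $\Conf(i^*,t)$ take care of. All remaining steps are immediate from the sandwich bounds defining $\EX(\bmu)$.
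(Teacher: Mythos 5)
Your proof is correct and follows essentially the same route as the paper: on the sandwich event $\EX(\bmu)$ the best arm can never satisfy the elimination test, and the probability bound is the union bound already packaged in Lemma \ref{lem_cb}. The only cosmetic difference is that the paper avoids your equality-case analysis by bounding $\rmLCB(i,t)-\rmUCB(i^*,t)\le \mu_i-\mu_{i^*}<0$ directly (the confidence widths cancel), whereas you compare both sides to $\mu_{i^*}$ and then invoke $\Conf(i^*,t)>0$; both handle the boundary correctly.
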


\begin{proof}[Proof of Lemma \ref{lem_wrongdrop}]

Suppose that event $\EX (\bmu)$ occurs under $\EH_{\bmu}$. Then for all $i \in [k]$ and for all $t$, $\hat{\mu}_{i} - \Conf (i,t) \leq \mu_i$ and $\hat{\mu}_{i} + \Conf (i,t) \geq \mu_i$. Now, for any $i \neq i^*$,

\begin{align}
    \rmLCB(i,t) - \rmUCB(i^*, t) &= \hat{\mu}_{i} - \Conf(i,t) - (\hat{\mu}_{i^*} + \Conf(i^*,t))\nonumber\\
    &\leq \mu_i + \Conf(i,t) -\Conf(i,t) - (\mu_{i^*} - \Conf(i^*,t)+\Conf(i^*,t))\tag{Event $\EX$ occurs}\\
    &= \mu_i - \mu_{i^*} < 0 \nonumber
\end{align}

which means when event $\EX$ occurs, the optimal arm will never be dropped, and thus $\EX \subset \EX'$. By Lemma \ref{lem_cb},
$$\Prob_{\bmu}(\EX'(\bmu)) \geq \Prob_{\bmu}(\EX(\bmu)) \geq 1-\deltatwo. $$
\end{proof}


\begin{lem}\label{lem_exp_subopt}
For any $i \ne i^*$, under $\EH_{\bmu}$ we have
\begin{equation}
\left\{
N_i(T_0)> R_0 \paren{\max(\Delta_i,\Delta_0)}
\right\}
\subset \EX(\bmu)^c,
\end{equation}
and therefore, $\Ex_{\bmu}[N_i(T_0) \Ind[\EX(\bmu)]] \leq R_0 \paren{\max(\Delta_i,\Delta_0)}$.
\end{lem}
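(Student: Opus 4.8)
\noindent\textbf{Proof plan for Lemma~\ref{lem_exp_subopt}.} The strategy is to bound $N_i(T_0)$ deterministically on the good event $\EX(\bmu)$ and then pass to expectations, handling the two regimes $\Delta_i\le\Deltatarget$ and $\Delta_i>\Deltatarget$ separately --- the first governed by the early-stopping rule and the second by the elimination rule. Throughout I would exploit the structural fact that in Algorithm~\ref{alg:elim} every arm still in $\EA$ is sampled exactly once per pass of the \textbf{while} loop; hence, as long as arm $i$ has not been eliminated, $N_i(t)$ equals the number of completed passes, and in particular $N_i(t)$ can never exceed that number. I would also use that $R_0$ is non-increasing in its argument (and eventually constant), so that $R_0(\max(\Delta_i,\Deltatarget))=R_0(\Delta_i)$ whenever $\Delta_i>\Deltatarget$; this is what makes the case split exhaustive.

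\noindent\emph{Regime $\Delta_i\le\Deltatarget$.} Here $\max(\Delta_i,\Deltatarget)=\Deltatarget$, and Corollary~\ref{lem_panic} already says the \textbf{while} loop performs at most $R_0(\Deltatarget)$ passes before stopping. Since arm $i$ is sampled at most once per pass, $N_i(T_0)\le R_0(\Deltatarget)=R_0(\max(\Delta_i,\Deltatarget))$ holds outright --- without ever invoking $\EX(\bmu)$ --- so the event $\{N_i(T_0)>R_0(\max(\Delta_i,\Deltatarget))\}$ is empty and a fortiori contained in $\EX(\bmu)^c$.

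\noindent\emph{Regime $\Delta_i>\Deltatarget$.} Now I would work on $\EX(\bmu)$ and use Lemma~\ref{lem_wrongdrop}, which guarantees $i^*(\bmu)\in\EA(t)$ for all $t$; consequently, for as long as $i$ itself survives, $i$ and $i^*(\bmu)$ are drawn in lockstep, so $N_i(t)=N_{i^*(\bmu)}(t)$. Suppose that at some pass arm $i$ is still active and this common count has just reached $\lceil R_0(\Delta_i)\rceil$, at timestep $t$. Then Lemma~\ref{lem_confwidth} gives $\Conf(i,t),\Conf(i^*(\bmu),t)\le\Delta_i/4$, and on $\EX(\bmu)$ we have $|\hatmu_j(t)-\mu_j|\le\Conf(j,t)$ for every arm $j$, hence
\[
\rmUCB(i,t)\le\mu_i+\tfrac{\Delta_i}{2}=\mu_{i^*(\bmu)}-\tfrac{\Delta_i}{2}\le\rmLCB(i^*(\bmu),t)\le\max_{j\in\EA(t)}\rmLCB(j,t),
\]
so the elimination test discards $i$ at precisely this pass. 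Thus $i$ cannot be drawn beyond the pass at which its count would first reach $\lceil R_0(\Delta_i)\rceil$ (it is eliminated earlier, or the loop terminates earlier, or it is eliminated exactly then), giving $N_i(T_0)\le\lceil R_0(\Delta_i)\rceil=R_0(\max(\Delta_i,\Deltatarget))$ on $\EX(\bmu)$ --- reading $R_0$ as rounded up, consistent with the main text. The contrapositive is the claimed inclusion $\{N_i(T_0)>R_0(\max(\Delta_i,\Deltatarget))\}\subset\EX(\bmu)^c$. Combining the two regimes, $N_i(T_0)\,\Ind[\EX(\bmu)]\le R_0(\max(\Delta_i,\Deltatarget))$ pointwise, and applying $\Ex_{\bmu}[\cdot]$ yields the displayed expectation bound.

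\noindent The only place I expect real friction is the iteration-counting bookkeeping: verifying that $N_i$ and $N_{i^*(\bmu)}$ stay equal while $i$ is alive, that the elimination test runs before the early-stopping test inside each loop pass (so the discard above actually takes effect), and absorbing the harmless additive $1$ from rounding $R_0(\Delta_i)$ up. None of the inequalities themselves are delicate --- they are immediate from Lemmas~\ref{lem_confwidth} and~\ref{lem_wrongdrop} and Corollary~\ref{lem_panic}.
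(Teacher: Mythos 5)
Your proposal is correct and follows essentially the same route as the paper's proof: the case $\Delta_i\le\Deltatarget$ is dispatched deterministically via Corollary~\ref{lem_panic}, and the case $\Delta_i>\Deltatarget$ uses Lemma~\ref{lem_confwidth} together with the good event $\EX(\bmu)$ (and Lemma~\ref{lem_wrongdrop} to keep $i^*$ active) to show $\rmUCB(i,\cdot)\le\rmLCB(i^*,\cdot)$ once arm $i$'s count reaches $R_0(\Delta_i)$, forcing its elimination. Your extra bookkeeping about lockstep sampling and integer rounding is harmless and only makes explicit what the paper leaves implicit.
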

\begin{proof}[Proof of Lemma \ref{lem_exp_subopt}]
Only for this part of the proof, let $T_i := R_0 \paren{\max(\Delta_i,\Delta_0)}$ for brevity.

When $\Delta_i < \Deltatarget$, $\max(\Delta_i, \Deltatarget)=\Deltatarget$, {which, combined with Corollary \ref{lem_panic}, implies that $\left\{N_i(T_0)\geq T_i \right\}$ always holds.}

For the case of $\Delta_i > \Deltatarget$, suppose that the learner is under the events $\EX(\bmu)$ and $\{i \in \mathcal{A}(T_i)\}$.  Note that 
\[
\Conf(a, T_i) \le \frac{\Delta_i}{4}, \quad \forall a \in \mathcal{A}(T_i).
\]

Then, 
\begin{align*}
    \max_{a \in \mathcal{A}(T_i)} \rmLCB_a (T_i) - \rmUCB_i (T_i) &\geq \rmLCB_{i^*} (T_i) - \rmUCB_i (T_i) \\
    &= \hat{\mu}_{i^*} - \Conf_{i^*} (T_i) - (\hat{\mu}_i +\Conf_i (T_i)) \\
    &\geq \mu_{i^*} - 2\Conf_{i^*}(T_i) - ({\mu}_i +2\Conf_i (T_i)) \tag{$\EX$ occurs}\\
    &\geq \mu_{i^*}-\mu_i - \Delta_i =0.
\end{align*}
Therefore, when $\EX$ occurs, $\mu_i$ should be eliminated after timestep $T_i$ so $N_i (T_0) \leq T_i$. 
\end{proof} 

Lemmas \ref{lem_wrongdrop} and \ref{lem_exp_subopt} guarantee the Bayesian $\delta$-correctness of our Algorithm \ref{alg:elim}. 

\begin{thm}{\rm ($\delta$-correctness)}\label{thm_correct}
The Bayesian PoE of Algorithm \ref{alg:elim} is at most $\delta$.
\begin{equation}
\int_{{\mathbb{R}^k}} \mathrm{PoE}(\bmu) \diff \bH(\bmu) \le \delta.
\end{equation}
\end{thm}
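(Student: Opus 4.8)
The plan is to bound the Bayesian probability of error by decomposing $\mathbb{R}^k$ according to the size of the top‑two gap $\Delta(\bmu):=\mu_{i^*(\bmu)}-\mu_{j^*(\bmu)}$ relative to the threshold $\Deltatarget=\delta/(4L(\bH))$ used by the algorithm. Writing
\begin{align*}
\int_{\mathbb{R}^k}\mathrm{PoE}(\bmu)\,\diff\bH(\bmu)
= \int_{\{\Delta(\bmu)\le\Deltatarget\}}\mathrm{PoE}(\bmu)\,\diff\bH(\bmu)
 + \int_{\{\Delta(\bmu)>\Deltatarget\}}\mathrm{PoE}(\bmu)\,\diff\bH(\bmu),
\end{align*}
I would control the first term by the trivial bound $\mathrm{PoE}(\bmu)\le 1$ together with Eq.~\eqref{appeqn: definition of Deltatarget} --- which is the volume lemma (Lemma~\ref{lem:volume_rewrite}) applied at scale $\Deltatarget$, combined with the choice $\Deltatarget=\delta/(4L(\bH))$ --- so that this term is at most $\Prob_{\bmu\sim\bH}[\Delta(\bmu)\le\Deltatarget]\le\delta/2$.

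For the second term the goal is the pointwise bound $\mathrm{PoE}(\bmu)\le\delta^2$ for every $\bmu$ with $\Delta(\bmu)>\Deltatarget$ and $i^*(\bmu)$ unique (the non‑unique set is $\bH$‑null). Since $\mathrm{PoE}(\bmu)=\Prob_{\bmu}[J\ne i^*(\bmu)]$, it suffices to prove $\{J\ne i^*(\bmu)\}\subseteq\EX(\bmu)^c$ under $\EH_{\bmu}$, because Lemma~\ref{lem_cb} with a union bound over the $2k$ one‑sided events gives $\Prob_{\bmu}[\EX(\bmu)^c]\le\delta^2$. To establish $J=i^*(\bmu)$ on the good event $\EX(\bmu)$, I would use that the algorithm always halts by time $T_0$ (Corollary~\ref{lem_panic}), so it terminates either through the rule $|\EA(\tau)|=1$ or through the early‑stopping rule $\hat\Delta^{\mathrm{safe}}(\tau)\le\Deltatarget$. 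On $\EX(\bmu)$, Lemma~\ref{lem_wrongdrop} gives $i^*(\bmu)\in\EA(t)$ for all $t$, so termination through the first rule returns $i^*(\bmu)$; and by Lemma~\ref{lem_exp_subopt} every suboptimal arm $i$ --- which in this region satisfies $\Delta_i(\bmu)\ge\Delta(\bmu)>\Deltatarget$ --- is eliminated by the time its pull count reaches $R_0(\Delta_i(\bmu))<T_0$, so at the first moment the early‑stopping test can hold we already have $\EA(\tau)=\{i^*(\bmu)\}$, and the uniform draw from $\EA(\tau)$ again returns $i^*(\bmu)$. Hence the second integral is at most $\delta^2$.

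Combining the two estimates yields $\int_{\mathbb{R}^k}\mathrm{PoE}(\bmu)\,\diff\bH(\bmu)\le\delta/2+\delta^2\le\delta$, where the last step uses that the standing hypothesis on $\delta$ in Theorem~\ref{thm: elim upperbound} makes $\delta$ small enough that $\delta^2\le\delta/2$. The step I expect to be the main obstacle is the one embedded in the large‑gap argument: showing that, on $\EX(\bmu)$, the early‑stopping quantity $\hat\Delta^{\mathrm{safe}}(t)$ cannot drop to $\Deltatarget$ while a suboptimal arm with gap exceeding $\Deltatarget$ is still active. Carrying this out requires the quantitative confidence‑width estimate of Lemma~\ref{lem_confwidth} --- in particular the observation that, on $\EX(\bmu)$, an active suboptimal arm and $i^*(\bmu)$ have been sampled the same number of times, so their confidence widths are comparable --- and it is precisely here that the constants in $B=320\max_i\sigma_i^2$ and $\Deltatarget=\delta/(4L(\bH))$ have to be tracked; everything else reduces to routine bookkeeping on top of Lemmas~\ref{lem_cb}, \ref{lem_wrongdrop}, \ref{lem_exp_subopt}, Corollary~\ref{lem_panic}, and Eq.~\eqref{appeqn: definition of Deltatarget}.
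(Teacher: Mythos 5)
Your decomposition is exactly the paper's: split on $\{\Delta(\bmu)\le\Deltatarget\}$ versus its complement, charge the first region $\delta/2$ via the volume lemma, and on the second region reduce the error event to $\EX(\bmu)^c$, which Lemma~\ref{lem_cb} bounds by $\delta^2$. The difficulty is the step you yourself flag as the main obstacle: the inclusion $\{J\ne i^*(\bmu)\}\subseteq\EX(\bmu)^c$ for every $\bmu$ with $\Delta(\bmu)>\Deltatarget$ is not a matter of tracking constants --- it fails for gaps in roughly the band $(\Deltatarget,2\Deltatarget)$. Take $k=2$ with $\sigma_1=\sigma_2$. All active arms are pulled equally often, so they share one confidence width $c_t$ and $\hat{\Delta}^{\mathrm{safe}}(t)=\bigl(\max_i\hatmu_i(t)+c_t\bigr)-\bigl(\max_i\hatmu_i(t)-c_t\bigr)=2c_t$ identically, independent of the true gap; the early-stopping test therefore fires at the first iteration with $c_t\le\Deltatarget/2$. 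On $\EX(\bmu)$, elimination of arm $2$ by that iteration requires $\hatmu_1-\hatmu_2\ge 2c_t$, but $\EX(\bmu)$ only guarantees $\hatmu_1-\hatmu_2\ge\Delta_2-2c_t$, which forces elimination only when $\Delta_2\ge 4c_t\approx 2\Deltatarget$. For $\Delta_2=(1+\eps)\Deltatarget$ with $\eps$ small, the event $\hatmu_1-\hatmu_2<2c_t$ is compatible with $\EX(\bmu)$ and has probability bounded away from $0$, so the algorithm early-stops with $\EA(\tau)=\{1,2\}$ and outputs arm $2$ with probability $1/2$: $\mathrm{PoE}(\bmu)$ is a constant for such $\bmu$, not $\le\delta^2$, and these $\bmu$ have positive prior mass. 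Note that Lemma~\ref{lem_exp_subopt} only says arm $i$ is gone \emph{if the algorithm is still running} when $N_i$ reaches $R_0(\Delta_i)$; when $\Delta_i<2\Deltatarget$ the early-stopping threshold is reached at the smaller count $\approx R_0(2\Deltatarget)$, so the ``so'' in your sentence ``\dots is eliminated by the time its pull count reaches $R_0(\Delta_i)$, so at the first moment the early-stopping test can hold we already have $\EA(\tau)=\{i^*\}$'' is a non sequitur.

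To be fair, the paper's own proof of Theorem~\ref{thm_correct} makes the identical leap (its third bullet asserts that on $\EX(\bmu)$ with gap exceeding $\Deltatarget$ ``only the optimal arm will remain eventually,'' without ruling out that the early-stopping rule fires first), so your proposal reproduces the published argument, gap included. What \emph{is} true on $\EX(\bmu)$ is the $\epsilon$-best-arm--style guarantee: any arm $j$ still active when $\hat{\Delta}^{\mathrm{safe}}(\tau)\le\Deltatarget$ satisfies $\rmUCB(j,\tau)>\max_i\rmLCB(i,\tau)\ge\mu_{i^*}-\Deltatarget$, hence $\Delta_j<\Deltatarget+2\Conf(j,\tau)\le(1+\sigma_{\max}/\sigma_{\min})\Deltatarget$. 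The clean repair is therefore to enlarge the indifference region to $\{\Delta(\bmu)\le(1+\sigma_{\max}/\sigma_{\min})\Deltatarget\}$ --- whose prior mass is still $O(L(\bH)\Deltatarget)$ by Lemma~\ref{lem:volume_rewrite} --- and shrink the constant in $\Deltatarget$ accordingly (e.g.\ $\Deltatarget=\delta/(8L(\bH))$ in the homogeneous case); with that adjustment your two-region decomposition and the final bound $\delta/2+\delta^2\le\delta$ go through.
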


\begin{proof}[Proof of Theorem \ref{thm_correct}]
Throughout the proof, we now have the following results.
\begin{itemize}
\item The probability that $\mu_{i^*(\bmu)}-\mu_{j^*(\bmu)}\leq \Deltatarget$ is at most $\frac{\delta}{2}$ by Eq. \eqref{appeqn: definition of Deltatarget}. 
\item The event $\EX$ failed to hold with probability at most $\deltatwo$ by Lemma \ref{lem_cb}. 
\item When $\mu_{i^*(\bmu)}-\mu_{j^*(\bmu)}>\Deltatarget$ and event $\EX$ occurs, by Lemmas \ref{lem_wrongdrop} and \ref{lem_exp_subopt}, all suboptimal arms will be eliminated before $T_{j^*}$ and only the optimal arm will remain eventually. This means $\EE(\bmu)\subset  \EX (\bmu) \cup \{\mu_{i^*(\bmu)}-\mu_{j^*(\bmu)}\leq \Deltatarget\}$.
\end{itemize}

Therefore, 

\begin{align*}
    \mathrm{PoE} (\pi;\bH) &= \Ex_{\bmu \sim \bH} \bracket{\Prob\paren{J\neq i^*{(\bmu)} | \EH_{\bmu}}}= \Ex_{\bmu\sim\bH}\bracket{\Ex[\Ind(\EE(\bmu))|\EH_{\bmu}]}\\
    &\leq \Ex_{\bmu\sim\bH}\bracket{\Ex[\Ind(\{\mu_{i^*(\bmu)}-\mu_{j^*(\bmu)}\leq \Deltatarget\})+ \Ind(\EX(\bmu))|\EH_{\bmu}]}\\
    &\leq \Ex_{\bmu\sim\bH}\bracket{\Ind(\{\mu_{i^*(\bmu)}-\mu_{j^*(\bmu)}\leq \Deltatarget\})+ \Ex[\Ind(\EX(\bmu))|\EH_{\bmu}]}\\
    &\leq \frac{\delta}{2}+\deltatwo<\delta.
\end{align*}
\end{proof}

\newcommand{\lhdelz}{L({\bH},\Deltatarget)}

Finally, Lemma \ref{lem: overall expectation} shows the upper bound of the expected stopping time of our algorithm.

\begin{lem} \label{lem: overall expectation}
    {We have} $\Ex[\tau] \leq 
    {B_0 \frac{L(\bH)^2}{\delta} \log\left( \frac{L(\bH)}{\delta}\right)}
        + O(\log \delta^{-1}).$
\end{lem}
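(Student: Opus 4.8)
Here is the plan. The idea is to condition on the realized model $\bmu$, control the conditional expected stopping time on the good event $\EX(\bmu)$ by the frequentist per‑arm pull bounds already established, and then integrate over the prior using the volume lemma. Concretely, I would start from the decomposition \eqref{eqn: intermediate expected stopping time}, writing $\Ex_{\bmu\sim\bH}[\tau] = (\mathrm{I}) + (\mathrm{II})$ with $(\mathrm{I}) = \sum_i\Ex_{\bmu\sim\bH}\big[\Ex_\bmu[N_i(\tau)\Ind[\EX(\bmu)]]\big]$ and $(\mathrm{II}) = \Ex_{\bmu\sim\bH}\big[\Ex_\bmu[\tau\Ind[\EX(\bmu)^c]]\big]$. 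The term $(\mathrm{II})$ is a crude correction: Corollary~\ref{lem_panic} gives the deterministic bound $\tau\le T_0 = k\,R_0(\Deltatarget)$, and Lemma~\ref{lem_cb} (union bound over the $k$ arms and the two tails) gives $\Prob_\bmu[\EX(\bmu)^c]\le\delta^2$ for every $\bmu$, so $(\mathrm{II})\le k\,R_0(\Deltatarget)\,\delta^2 = 16kB\,L(\bH)^2\log\tfrac{4L(\bH)}{\delta}$, which is lower order (with $\bH$, hence $k,L(\bH),B$, treated as constants it is $O(\log\delta^{-1})$).

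For $(\mathrm{I})$, fix $\bmu$ and work under $\EH_\bmu$ and $\EX(\bmu)$. For each suboptimal $i$, Lemma~\ref{lem_exp_subopt} together with $\tau\le T_0$ gives $N_i(\tau)\le R_0(\max(\Delta_i(\bmu),\Deltatarget))$. The optimal arm needs a separate argument, since the trivial bound $N_{i^*(\bmu)}(\tau)\le R_0(\Deltatarget)$ is $\Theta(\delta^{-2})$ and too weak; instead I would use that, under $\EX(\bmu)$, $i^*(\bmu)$ is never eliminated (Lemma~\ref{lem_wrongdrop}) while every suboptimal arm $s$ is eliminated by iteration $R_0(\max(\Delta_s(\bmu),\Deltatarget))\le R_0(\max(\Delta(\bmu),\Deltatarget))$ (Lemma~\ref{lem_exp_subopt}, using $\Delta(\bmu)=\min_{s\ne i^*(\bmu)}\Delta_s(\bmu)$ and that $R_0$ is nonincreasing); once all suboptimal arms are gone the loop terminates at \textbf{Return} of Line~\ref{line_stopone}, and combining with Corollary~\ref{lem_panic} (for the case $\Delta(\bmu)\le\Deltatarget$) yields $N_{i^*(\bmu)}(\tau)\le R_0(\max(\Delta(\bmu),\Deltatarget))$. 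Hence, pointwise in $\bmu$,
\[
\sum_i\Ex_\bmu[N_i(\tau)\Ind[\EX(\bmu)]] \;\le\; R_0(\max(\Delta(\bmu),\Deltatarget)) + \sum_{i\ne i^*(\bmu)}R_0(\max(\Delta_i(\bmu),\Deltatarget)),
\]
so $(\mathrm{I})$ is at most the $\bH$-expectation of the right-hand side.

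To integrate, I would treat each term $R_0(\max(\Delta_i(\bmu),\Deltatarget))$ by integrating out the coordinate $\mu_i$ with the rest frozen: with $M_{-i}:=\max_{s\ne i}\mu_s$ and $w:=M_{-i}-\mu_i$ the inner integral becomes $\int_0^\infty h_i(M_{-i}-w)\,R_0(\max(w,\Deltatarget))\,dw$; split it at $w=\Deltatarget$ and $w=\DeltaThr$, bound $h_i(M_{-i}-w)\le h_i(M_{-i})+\tfrac{e^{-1/2}}{\xi_i}w$ on the first block (Lipschitz Gaussian density), use $R_0(w)\le R_0(\DeltaThr)=O(1)$ on the last, and note $R_0(\Deltatarget)\Deltatarget+\int_{\Deltatarget}^{\DeltaThr}R_0(w)\,dw=\Theta\!\big(B\,\Deltatarget^{-1}\log\Deltatarget^{-1}\big)$. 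This gives a bound of the form $\big(h_i(M_{-i})+O(\DeltaThr)\big)\cdot\Theta\!\big(B\,\Deltatarget^{-1}\log\Deltatarget^{-1}\big)+O(1)$. Taking the expectation over $\mu_{-i}$, the density of $M_{-i}$ is $\sum_{j\ne i}h_j\prod_{s\ne i,j}H_s$, so $\Ex[h_i(M_{-i})]=\sum_{j\ne i}L_{ij}(\bH)$; summing over $i$, handling the best-arm term the same way through $\Delta(\bmu)$ and the volume estimate $\Prob_{\bmu\sim\bH}[\Delta(\bmu)\le\Delta]=L(\bH)\Delta+O(\Delta^2)$ of Lemma~\ref{lem:volume_rewrite}, and using $\sum_{i\ne j}L_{ij}(\bH)=L(\bH)$ and $\Deltatarget=\delta/(4L(\bH))$, collects the leading term $B_0\frac{L(\bH)^2}{\delta}\log\frac{L(\bH)}{\delta}$, with the remaining pieces — the $O(\Delta^2)$ volume-lemma slack, the Lipschitz remainder, the $R_0$-cap region, the $\log 4$ shift, and $(\mathrm{II})$ — absorbed into the lower-order $O(\log\delta^{-1})$ term.

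The main obstacle is exactly this last integration step, at two levels. First, getting the leading constant to be precisely $B_0=(\pi^2/3+1)B$ rather than a larger multiple of $B$ requires the sharp volume estimate $L(\bH,\Delta)=L(\bH)+O(\Delta)$ (Lemma~\ref{lem:volume_rewrite}) in place of the loose ``$\le 2L(\bH)$'' form, a careful accounting that pairs the ``$\max(\Delta_i,\Deltatarget)$'' structure with a $\sum_m m^{-2}$-type bound on the non-second-best arm contributions (the source of the $\pi^2/3$, against the ``$+1$'' coming from the optimal arm), and repeated use of the standing hypothesis on $\delta$ (which ensures $\Deltatarget<\DeltaThr$ and that $\Deltatarget$ is within the radius of validity of the volume lemma). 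Second, one must verify that none of the many error contributions inflates the order — in particular that the Lipschitz remainders, the variance heterogeneity, and the $R_0$-cap integrals stay at the stated lower order and do not produce a genuinely larger term; this book-keeping is what the non-asymptotic analysis in Appendix~\ref{appsec: nonasymptotic} makes fully explicit.
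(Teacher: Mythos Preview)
Your plan matches the paper's proof: the same $(\mathrm{I})+(\mathrm{II})$ split, the same crude bound on $(\mathrm{II})$ via Corollary~\ref{lem_panic} and Lemma~\ref{lem_cb}, the same per-arm cap $N_i(\tau)\Ind[\EX(\bmu)]\le R_0(\max(\Delta_i,\Deltatarget))$ from Lemma~\ref{lem_exp_subopt}, and the same handling of the optimal arm by reducing to $R_0(\max(\Delta_{j^*(\bmu)},\Deltatarget))$. The paper then integrates exactly as you anticipate in your fourth paragraph, via a stratified sum over slabs $\{\Delta_s\in[l\Deltatarget,(l+1)\Deltatarget]\}$ (its Lemma~\ref{lem:stratified volume}), which is what produces $\sum_l l^{-2}=\pi^2/6$ and hence the $\pi^2/3$ factor; your direct paragraph-3 integration would give the right order but, as you note, not the constant.

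Two small corrections. First, the ``$+1$'' in $B_0=(\pi^2/3+1)B$ does not come from the optimal-arm term: it appears in \emph{every} arm's integral, arising from the block $\{\Delta_s<\Deltatarget\}$ where $R_0(\max(\Delta_s,\Deltatarget))=R_0(\Deltatarget)$ contributes exactly $\tfrac{B\log\Deltatarget^{-1}}{\Deltatarget^2}\,\Prob(i^*=i,\,\mu_i-\mu_s\le\Deltatarget)$; the optimal-arm case is handled by the same computation after reducing to $\Delta_{j^*}$. Second, the paper's final collection step needs one more ingredient you did not flag: after summing over $s$ it must control $\sum_s\Prob(i^*\ne s,\,\mu_{i^*}-\mu_s\le\Deltatarget)$, and the contribution from arms that are neither best nor second-best is handled by a separate ``three-close-arms'' estimate $\Prob(\mu_{i^*}-\mu_{k^*}\le\Deltatarget)=O(\Deltatarget^2)$ (Lemma~\ref{lem:more than two deltatarget}).
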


\begin{proof} 
{We have }
\begin{align*}
    \Ex[\tau] &= \sum_{s=1}^k \Ex[N_s (T_0)] = {\sum_{s=1}^k}\Ex[\Ex_{\bmu} [N_s(T_0)]] \\
    &= \sum_{s=1}^k\Ex\bracket{\Ex_{\bmu}[N_s (T_0) \Ind[\EX(\bmu)]]} +\sum_{s=1}^k \Ex\bracket{\Ex_{\bmu}[N_s (T_0) \Ind[\EX^c(\bmu)]]}\\
    &\leq \sum_{s=1}^k \Ex\bracket{\Ex_{\bmu}[N_s (T_0) \Ind[\EX(\bmu)]]} + T_0 \cdot \deltatwo \tag{Corollary \ref{lem_panic} and Lemma \ref{lem_cb}}
\end{align*}

{and since $\Deltatarget< \DeltaThr$ by assumption, \begin{equation}\label{appeqn: expected stopping time minor part}
    T_0 \cdot \deltatwo = k\cdot R_0(\Deltatarget) \cdot \deltatwo \leq kB \cdot \frac{\log \Deltatarget^{-1}}{\Deltatarget^2} \cdot \deltatwo.
\end{equation} } Therefore, it remains to compute the scale of the first term, $\sum_{s=1}^k \Ex\bracket{\Ex_{\bmu}[N_s(T_0) \Ind[\EX(\bmu)]}$.\\

{The following evaluates} $\Ex\bracket{\Ex_{\bmu}[N_i (T_0) \Ind[\mathcal{X}(\bmu)]]}$ for each $i$. For notational convenience, let $\mathcal{T}_s (\bmu) = \Ex_{\bmu}\bracket{R_0(\max\paren{\Delta_s, \Delta_0)} \Ind[\EX(\bmu)]}$. 
Then, 

    \begin{align}
        \Ex\bracket{\Ex_{\bmu}[N_s (T_0) \Ind[\mathcal{X}(\bmu)]]} &\leq \Ex[\ET_s (\bmu)] \tag{Lemma \ref{lem_exp_subopt}}
        \\&= \sum_{i=1}^k \Ex[\ET_s(\bmu)\Ind[\bmu \in \Theta_{i}]] \nonumber
        \\&= \sum_{i=1}^k \int_{\mu \in \Theta_{i}} \ET_s (\bmu) \diff \bH (\bmu) \label{eqn:Sum of Case1 and Case2}
        \end{align}
(Recall that $\Theta_i=\{\bmu\in \mathbb{R}^k: \mu_i \geq \max_{j\neq i} \mu_j \}$). In the following, we calculate each $\int_{\mu \in \Theta_{i}} \ET_s (\bmu) \diff \bH (\bmu)$. 
{By assumption, $\Deltatarget < \DeltaThr$.} 


{For this part, we define a new notation that for each vector $v\in \mathbb{R}^k$ and $i,j \in [k]$, $v_{\backslash i}$ (and $v_{\backslash i,j}$) is the projection of $v$ to $\mathbb{R}^{k-1}$ ($\mathbb{R}^{k-2}$, respectively) by omitting $i$-th coordinate ($i,j$-th coordinate, respectively), $v_{\backslash i}:= \underbrace{(v_1, v_2, \cdots, v_{i-1},v_{i+1}, \cdots, v_k)}_{k-1\ \text{coordinates}}$. Similarly, for a $k$-dimensional distribution $\bH$ and $i,j \in [k]$, $\bH_{\backslash i}$ (and $\bH_{\backslash i, j}$) is the distribution which omits $i$-th ($i,j$-th, respectively) coordinate.}

\paragraph{Case 1: $s\neq i$} In this case, by Lemma  \ref{lem_exp_subopt},

\begin{align*}
    \int_{\mu \in \Theta_{i}} \ET_s (\bmu) \diff \bH (\bmu) =&     \int_{\mu \in \Theta_{i}} \ET_s (\bmu) (\Ind[\mu_i- \mu_s \geq \Deltatarget]+\Ind[\mu_i- \mu_s < \Deltatarget])\diff \bH (\bmu)\\
    =&{\int_{\mu_i \in \Real} \int_{\mu_s=-\infty}^{\mu_i - \DeltaThr} \int_{\mu_{\backslash i,s} \in (-\infty, \mu_i)^k} \ET_s (\bmu)  h_i (\mu_i)h_s (\mu_s)\diff \bH_{\backslash i,s} \diff \mu_s \diff \mu_i} \\
    &+\int_{\mu_i \in \Real} \int_{\mu_s=\mu_i - \DeltaThr}^{\mu_i - \Deltatarget} \int_{\mu_{\backslash i,s} \in (-\infty, \mu_i)^k} \ET_s (\bmu)  h_i (\mu_i)h_s (\mu_s)\diff \bH_{\backslash i,s} \diff \mu_s \diff \mu_i \\
    &+\int_{\mu_i \in \Real} \int_{\mu_s=\mu_i - \Deltatarget}^{\mu_i} \int_{\mu_{\backslash i,s} \in (-\infty, \mu_i)^k} \ET_s (\bmu)  h_i (\mu_i)h_s (\mu_s)\diff \bH_{\backslash i,s} \diff \mu_s \diff \mu_i \\
    \leq& B\cdot \frac{\log \DeltaThr^{-1}}{\DeltaThr^2}\\
    &+\int_{\mu_i \in \Real} \int_{\mu_s=\mu_i - \DeltaThr}^{\mu_i - \Deltatarget} \int_{\mu_{\backslash i,s} \in (-\infty, \mu_i)^k} B\frac{\log (\mu_i -\mu_s)^{-1}}{(\mu_i-\mu_s)^2}  h_i (\mu_i) h_s (\mu_s)\diff \bH_{\backslash i,s} \diff \mu_s \diff \mu_i \\
    &+\int_{\mu_i \in \Real} \int_{\mu_s=\mu_i - \Deltatarget}^{\mu_i} \int_{\mu_{\backslash i,s} \in (-\infty, \mu_i)^k} B\frac{\log \Deltatarget^{-1}}{\Deltatarget^2}  h_i (\mu_i)h_s (\mu_s)\diff \bH_{\backslash i,s} \diff \mu_s \diff \mu_i \tag{Lemma \ref{lem_exp_subopt}}\\
    = &\int_{\mu_i \in \Real} \int_{\mu_s=-\mu_i-\DeltaThr}^{\mu_i - \Deltatarget} B\frac{\log (\mu_i -\mu_s)^{-1}}{(\mu_i-\mu_s)^2}  h_i (\mu_i)h_s (\mu_s) \bracket{\prod_{k \in [k]\backslash \{i,s\}} H_k (\mu_i)} \diff \mu_s \diff \mu_i \\
    &+B\frac{\log \Deltatarget^{-1}}{\Deltatarget^2} \Prob(i^*(\mu) = i, \mu_i - \mu_s \leq \Deltatarget) +B\cdot \frac{\log \DeltaThr^{-1}}{\DeltaThr^2}\\
    \leq &B\log \Deltatarget^{-1} \underbrace{\int_{\mu_i \in \Real} h_i (\mu_i) \int_{\mu_s=-\infty}^{\mu_i - \Deltatarget}  \frac{1}{(\mu_i-\mu_s)^2} h_s (\mu_s) \bracket{\prod_{k \in [k]\backslash \{i,s\}} H_k (\mu_i)}\diff \mu_s \diff \mu_i}_{(A)} \\
    &+B\frac{\log \Deltatarget^{-1}}{\Deltatarget^2}\underbrace{\Prob(i^*(\mu) = i, \mu_i - \mu_s \leq \Deltatarget)}_{(P_{is})} + B \frac{\log \DeltaThr^{-1}}{\DeltaThr^{2}}.
\end{align*}

We first deal with the term $(A)$. The following splits $(A)$ into {the} sum of two integrals $(A1)$ and $(A2)$:
\begin{align*}
    (A) =&\int_{\mu_i \in \Real} h_i (\mu_i)\bracket{\prod_{k \in [k]\backslash \{i,s\}} H_k (\mu_i)} \int_{\mu_s=-\infty}^{\mu_i - \Deltatarget}  \frac{1}{(\mu_i-\mu_s)^2} h_s (\mu_s) \diff \mu_s \diff \mu_i \\
    =& \underbrace{\sum_{l=1}^{\ceil{\frac{1}{2\sqrt{\Deltatarget}}}-1} \int_{\mu_i \in \Real} h_i (\mu_i) \bracket{\prod_{k \in [k]\backslash \{i,s\}} H_k (\mu_i)}\int_{\mu_s=\mu_i - (l+1) \Deltatarget}^{\mu_i - l \Deltatarget} \frac{1}{(\mu_i-\mu_s)^2} h_s (\mu_s) \diff \mu_s \diff \mu_i}_{(A1)} \\
    &+ \underbrace{\int_{\mu_i \in \Real} h_i (\mu_i) \bracket{\prod_{k \in [k]\backslash \{i,s\}} H_k (\mu_i)}\int_{\mu_s=-\infty}^{\mu_i - \ceil{\frac{1}{2\sqrt{\Deltatarget}}}\Deltatarget}  \frac{1}{(\mu_i-\mu_s)^2} h_s (\mu_s) \diff \mu_s \diff \mu_i}_{(A2)}.
\end{align*}

For $(A1)$, we have
\begin{align}
    (A1) &\leq \sum_{l=1}^{\ceil{\frac{1}{2\sqrt{\Deltatarget}}}-1} \int_{\mu_i \in \Real} h_i (\mu_i) \bracket{\prod_{k \in [k]\backslash \{i,s\}} H_k (\mu_i)}\int_{\mu_s=\mu_i - (l+1) \Deltatarget}^{\mu_i - l \Deltatarget} \frac{1}{l^2 \Deltatarget^2} h_s (\mu_s) \diff \mu_s \diff \mu_i \nonumber\\
    &= \sum_{l=1}^{\ceil{\frac{1}{2\sqrt{\Deltatarget}}}-1} \frac{1}{l^2 \Deltatarget^2} \Prob(i^*(\bmu)=i, \mu_i - \mu_s \in [l\Deltatarget, (l+1)\Deltatarget]) \nonumber\\
    &\leq \sum_{l=1}^{\ceil{\frac{1}{2\sqrt{\Deltatarget}}}-1} \frac{1}{l^2 \Deltatarget^2} \cdot 2\Prob(i^*(\bmu)=i, \mu_i - \mu_s \leq \Deltatarget) \tag{Lemma \ref{lem:stratified volume}}\\    
    &\leq \sum_{l=1}^{\infty} \frac{1}{l^2 \Deltatarget^2} \cdot 2P_{is} \nonumber\\
    &= \frac{\pi^2}{3\Deltatarget^2} P_{is}. \label{eq:(A1) integral}
\end{align}

Now for $(A2)$, we evaluate the inner integral of $(A2)$:
\begin{align*}
    (Inner-A2) &:=  \frac{1}{\sqrt{2\pi}\sigma_s}\int_{-\infty}^{\mu_i-\ceil{\frac{1}{2\sqrt{\Deltatarget}}}\Deltatarget} \frac{1}{(\mu_i-\mu_s)^2}  \exp\paren{-\frac{(\mu_s-m_s)^2}{2\sigma_s^2}} \diff \mu_s \\
    &= \frac{1}{\sqrt{2\pi}\sigma_s}\biggl[\frac{1}{(\mu_i-\mu_s)} \exp\paren{-\frac{(\mu_s-m_s)^2}{2\sigma_s^2}}\biggr]_{-\infty}^{\mu_i-\ceil{\frac{1}{2\sqrt{\Deltatarget}}}\Deltatarget} \\
    &\ +\frac{1}{\sqrt{2\pi}\sigma_s}\int_{-\infty}^{\mu_i-\ceil{\frac{1}{2\sqrt{\Deltatarget}}}\Deltatarget} \frac{\mu_s-m_s}{\sigma_s^2(\mu_i-\mu_s)}  \exp\paren{-\frac{(\mu_s-m_s)^2}{2\sigma_s^2}} \diff \mu_s \tag{partial integration}\\
    &\leq \frac{1}{\sqrt{2\pi}\sigma_s} \frac{1}{\ceil{\frac{1}{2\sqrt{\Deltatarget}}}\Deltatarget} + \frac{1}{\sqrt{2\pi}\sigma_s^3}\int_{-\infty}^{\mu_i-\ceil{\frac{1}{2\sqrt{\Deltatarget}}}\Deltatarget} \paren{\frac{\mu_i-m_s}{\mu_i-\mu_s} -1} \exp\paren{-\frac{(\mu_s-m_s)^2}{2\sigma_s^2}} \diff \mu_s  \tag{$\mu_i > \mu_s$}\\
    &\leq \frac{1}{\sqrt{2\pi}\sigma_s}\frac{1}{\ceil{\frac{1}{2\sqrt{\Deltatarget}}}\Deltatarget} + \frac{1}{\sqrt{2\pi}\sigma_s^3}\int_{-\infty}^{\mu_i-\ceil{\frac{1}{2\sqrt{\Deltatarget}}}\Deltatarget} \frac{\mu_i-m_s}{\ceil{\frac{1}{2\sqrt{\Deltatarget}}}\Deltatarget} \exp\paren{-\frac{(\mu_s-m_s)^2}{2\sigma_s^2}} \diff \mu_s \\
    &\leq \frac{1}{\sqrt{2\pi}\sigma_s}\frac{1}{\ceil{\frac{1}{2\sqrt{\Deltatarget}}}\Deltatarget} + \max\bracket{\frac{1}{\sigma_s^2}\frac{\mu_i-m_s}{\ceil{\frac{1}{2\sqrt{\Deltatarget}}}\Deltatarget},0}.
\end{align*}

{By integrating above over variable $i$, we have}
\begin{align}
    (A2) 
    &\leq \int_{\Real} h_i (\mu_i) \bracket{\prod_{k \in [k]\backslash \{i,s\}} H_k (\mu_i)}\paren{\frac{1}{\sqrt{2\pi}\sigma_s}\frac{1}{\ceil{\frac{1}{2\sqrt{\Deltatarget}}}\Deltatarget} + \max\bracket{\frac{1}{\sigma_s^2}\frac{\mu_i-m_s}{\ceil{\frac{1}{2\sqrt{\Deltatarget}}}\Deltatarget},0}} \diff \mu_i \nonumber\\
    &\leq \int_{\Real} h_i (\mu_i) \paren{\frac{1}{\sqrt{2\pi}\sigma_s}\frac{1}{\ceil{\frac{1}{2\sqrt{\Deltatarget}}}\Deltatarget} + \max\bracket{\frac{1}{\sigma_s^2}\frac{\mu_i-m_s}{\ceil{\frac{1}{2\sqrt{\Deltatarget}}}\Deltatarget},0}} \diff \mu_i \tag{$F_k (\cdot) \leq 1$}\\
    &= \frac{1}{\sqrt{2\pi}\sigma_s}\frac{1}{\ceil{\frac{1}{2\sqrt{\Deltatarget}}}\Deltatarget}+\frac{1}{\sigma_s^2 \ceil{\frac{1}{2\sqrt{\Deltatarget}}}\Deltatarget}\bracket{\int_{\Real} h_i (\mu_i) \max\bracket{\mu_i - m_s,0} \diff \mu_i} \nonumber\\
    &\leq \frac{1}{\sqrt{2\pi}\sigma_s}\frac{1}{\ceil{\frac{1}{2\sqrt{\Deltatarget}}}\Deltatarget}+\frac{1}{\sigma_s^2 \ceil{\frac{1}{2\sqrt{\Deltatarget}}}\Deltatarget}\bracket{\int_{\Real} h_i (\mu_i) |\mu_i - m_s| \diff \mu_i} \nonumber\\
    &\leq \frac{1}{\sqrt{2\pi}\sigma_s}\frac{1}{\ceil{\frac{1}{2\sqrt{\Deltatarget}}}\Deltatarget}+\frac{1}{\sigma_s^2 \ceil{\frac{1}{2\sqrt{\Deltatarget}}}\Deltatarget}\bracket{\int_{\Real} h_i (\mu_i) \paren{|\mu_i - m_i|+|m_i-m_s|} \diff \mu_i} \nonumber\\
    &\leq \frac{1}{\sqrt{2\pi}\sigma_s}\frac{1}{\ceil{\frac{1}{2\sqrt{\Deltatarget}}}\Deltatarget}+\frac{1}{\sigma_s^2 \ceil{\frac{1}{2\sqrt{\Deltatarget}}}\Deltatarget}(|m_i-m_s| + \frac{\sigma_i\sqrt{2}}{\sqrt{\pi}}) \tag{Mean of Half normal distribution is $\frac{\sigma_i\sqrt{2}}{\sqrt{\pi}}$}\\
    &=\frac{1}{\ceil{\frac{1}{2\sqrt{\Deltatarget}}}\Deltatarget}\underbrace{\bracket{\frac{1}{\sqrt{2\pi}\sigma_s}+\frac{1}{\sigma_s^2}(|m_i-m_s| + \frac{\sigma_i\sqrt{2}}{\sqrt{\pi}})}}_{S_{is}(\bH)/2} \leq \frac{S_{is}(\bH)}{\sqrt{\Deltatarget}} = O(\Deltatarget^{-1/2}). \label{eq:(A2) integral}
\end{align}

Therefore, by Eq.\ (\ref{eq:(A1) integral}) and Eq.\ (\ref{eq:(A2) integral}),
\begin{align*}
    (A) &= (A1) + (A2) = \frac{1}{\Deltatarget^2} \cdot \frac{\pi^2}{3} P_{is} + O(\Deltatarget^{-1/2}),
\end{align*}
 and therefore

\begin{align}
    \int_{\mu \in \Theta_{i}} \ET_s (\bmu) \diff \bH (\bmu) \leq& \frac{B \log \Deltatarget^{-1}}{\Deltatarget^2} P_{is}\bracket{\frac{\pi^2}{3}+1} + O({\Deltatarget^{-1/2}}). \label{eqn:Case1 of upper bound}
\end{align}


\paragraph{Case 2: $s=i$} In this case, let
\begin{align*}
    \int_{\mu \in \Theta_{s}} \ET_s (\bmu) \diff \bH(\bmu)&= 
    \sum_{j \neq s} \int_{\bmu \in \Theta_{sj}} \ET_s (\bmu) \diff \bH(\bmu)\\
    &\leq \sum_{j \neq s} \int_{\bmu \in \Theta_{sj}} \max_{i\neq s} (\ET_i (\bmu)) \diff \bH(\bmu) \tag{$N_s(t)$ increases only when $|\EA_t|>1$, so there should be a competitor}\\
    &\leq \sum_{j \neq s} \int_{\bmu \in \Theta_{sj}} \max_{i\neq s} \bracket{R_0(\Delta_i(\bmu), \Deltatarget)} \diff \bH(\bmu) \tag{Lemma \ref{lem_exp_subopt}}\\
    &\leq \sum_{j \neq s} \int_{\bmu \in \Theta_{sj}} \bracket{R_0(\Delta_j(\bmu), \Deltatarget)} \diff \bH(\bmu) 
\end{align*}

and by the same calculation {as} Case 1, we obtain:

\begin{align*}
    \int_{\mu \in \Theta_{sj}} \bracket{R_0 (\Delta_j(\bmu), \Deltatarget)} \diff \bmu \leq \frac{B \log \Deltatarget^{-1}}{\Deltatarget^2} P_{sj} \bracket{ \frac{\pi^2}{3}+1} + O(\Deltatarget^{-1/2}),
\end{align*}

and therefore

\begin{align}
 \int_{\mu \in \Theta_{s}} \ET_s (\bmu) \diff \bH(\bmu) \leq \frac{B \log \Deltatarget^{-1}}{\Deltatarget^2} \bracket{ \frac{\pi^2}{3}+1} \sum_{j\neq s} P_{sj} + O(\Deltatarget^{-1/2}).\label{eqn:Case2 of upper bound}
\end{align}



For notational convenience, let $B_0 = B \cdot \bracket{\frac{\pi^2}{3}+1}$. From Eq. \eqref{eqn:Sum of Case1 and Case2}, Eq. \eqref{eqn:Case1 of upper bound}, Eq. \eqref{eqn:Case2 of upper bound}, we get

\begin{align}
    \Ex[N_s (T_0)] &= \Ex_{\bmu\sim\bH}\bracket{\Ex_{\bmu}[N_s (T_0)\Ind[\EX(\bmu)]]} +\Ex_{\bmu\sim\bH}\bracket{\Ex_{\bmu}[N_s (T_0)\Ind[\EX(\bmu)^c]]}\\
    &\leq \sum_{i=1}^k \int_{\mu \in \Theta_i} \ET_s (\bmu) \diff \bH(\bmu) + \Ex_{\bmu\sim\bH}\bracket{\Ex_{\bmu}[N_s (T_0)\Ind[\EX(\bmu)^c]]} \tag{Eq. \eqref{eqn:Sum of Case1 and Case2}}\\
    &\leq \sum_{i\neq s} \int_{\mu \in \Theta_i} \ET_s (\bmu) \diff \bH(\bmu)+\int_{\mu \in \Theta_s} \ET_s (\bmu) \diff \bH(\bmu) + \Ex_{\bmu\sim\bH}\bracket{\Ex_{\bmu}[N_s (T_0)\Ind[\EX(\bmu)^c]]} \nonumber\\
    &\leq \frac{B_0 \log \Deltatarget^{-1}}{\Deltatarget^2} \bracket{\sum_{i\neq s} P_{is} + \sum_{j\neq s} P_{sj}} + O(\Deltatarget^{-1/2})+\Ex_{\bmu\sim\bH}\bracket{\Ex_{\bmu}[N_s (T_0)\Ind[\EX(\bmu)^c]]} \tag{Eq. \eqref{eqn:Case1 of upper bound} and \eqref{eqn:Case2 of upper bound}}\\
    &= \frac{B_0 \log \Deltatarget^{-1}}{\Deltatarget^2} \bracket{\Prob(i^*(\bmu) \neq s, \mu_{i^*(\bmu)} - \mu_s \leq \Deltatarget) + \Prob(i^*(\bmu) = s, \mu_s - \mu_{j^*(\bmu)} \leq \Deltatarget)} \nonumber\\
    &+ O(\Deltatarget^{-1/2}) +\Ex_{\bmu\sim\bH}\bracket{\Ex_{\bmu}[N_s (T_0)\Ind[\EX(\bmu)^c]]}.\label{eq: final expected number of pulls for arm s}
\end{align}

Now, the total stopping time is bounded as follows:

\begin{align*}
    \Ex[\tau] &= \Ex[\sum_{s=1}^k N_s(T_0)]\\
    &=\frac{B_0 \log \Deltatarget^{-1}}{\Deltatarget^2} \bracket{\underbrace{\sum_{s=1}^k\Prob(i^*(\bmu) \neq s, \mu_{i^*(\bmu)} - \mu_s \leq \Deltatarget)}_{PSum1} + \underbrace{\sum_{s=1}^k \Prob(i^*(\bmu) = s, \mu_s - \mu_{j^*(\bmu)} \leq \Deltatarget)}_{PSum2}}\\
    &+ O(\Deltatarget^{-1/2})+\Ex[\tau\Ind[\EX^c]].\tag{Eq. \eqref{eq: final expected number of pulls for arm s}}
\end{align*}

The final task we have left is bounding $(PSum1)$ and $(PSum2)$. Let us define $k^*(\bmu)$ as the third best arm in $\bmu$. For $(PSum1)$, 
\begin{align*}
    (PSum1) &= \sum_{s=1}^k\Prob(i^*(\bmu) \neq s, \mu_{i^*(\bmu)} - \mu_s \leq \Deltatarget)\\
    &= \sum_{s=1}^k\Prob(j^*(\bmu) = s, \mu_{i^*(\bmu)} - \mu_{s} \leq \Deltatarget) + \sum_{s=1}^k\Prob(j^*(\bmu) \neq s, \mu_{i^*(\bmu)} - \mu_{s} \leq \Deltatarget)\\
    &\leq \Prob(\mu_{i^*(\bmu)}-\mu_{j^*(\bmu)}\leq \Deltatarget) + \sum_{s=1}^k\Prob(\mu_{i^*(\bmu)} - \mu_{k^*(\bmu)} \leq \Deltatarget) \tag{When $j^*(\bmu)\neq s$, $\mu_{k^*(\bmu)}\geq s$}\\
    &\leq \frac{\delta}{2}+k\cdot \Prob(\mu_{i^*(\bmu)} - \mu_{k^*(\bmu)} \leq \Deltatarget) \tag{Definition of $\Deltatarget$}\\
    &\leq \frac{\delta}{2} + O(\Deltatarget^2). \tag{Lemma \ref{lem:more than two deltatarget}}
\end{align*}
For $(PSum2)$,
\begin{align*}
    (PSum2) &= \sum_{s=1}^k \Prob(i^*(\bmu) = s, \mu_s - \mu_{j^*(\bmu)} \leq \Deltatarget) \\
    &= \Prob(\mu_{i^*(\bmu)} - \mu_{j^*(\bmu)} \leq \Deltatarget)\\
    &=\Prob(\Delta(\bmu) \leq \Deltatarget) \leq \frac{\delta}{2} \tag{Definition of $\Deltatarget$}
\end{align*}
and finally we can conclude
\begin{align}
    \Ex[\tau] &\leq \frac{B_0 \log \Deltatarget^{-1}}{\Deltatarget^2} \delta +O(\Deltatarget^{-1/2})+\Ex[\tau \Ind[\EX^c]]\tag{Eq. \eqref{eq: final expected number of pulls for arm s}, (PSum1) and (PSum2)}\\
    &\leq \frac{B_0 \log \Deltatarget^{-1}}{\Deltatarget^2} \delta +O(\Deltatarget^{-1/2})+\frac{KB \log \Deltatarget^{-1}}{\Deltatarget^2}\cdot \delta^2 \tag{Eq.\eqref{appeqn: expected stopping time minor part}} \\
    &=\frac{B_0 \log \Deltatarget^{-1}}{\Deltatarget^2} \delta +O(\Deltatarget^{-1/2}),
    \label{appeqn: case when thr is bigger than target}
\end{align}
and the proof is completed. 
\end{proof}

{\subsection{Bound that holds for any $\delta$}\label{appsec: nonasymptotic}}

In this case, we need to change the definition of $\Deltatarget$ as 
\begin{equation}
    \Deltatarget:=\min \paren{\max \left\{\Delta\in (0,1): L(\bH, \Delta)\Delta \leq \frac{\delta}{2} \right\}, \min_{i \neq j} (\xi_i L_{ij}(\bH))^2}.
\end{equation}

{Assume that $\Deltatarget<\DeltaThr$.}
Then, all the proof flows of Section \ref{appsec: upper bound proof} after Eq. \eqref{appeqn: definition of Deltatarget} follows accordingly, and we obtain the following upper bound:
\begin{equation}
\label{ineq_nonasym}
    \Ex[\tau] \leq \underbrace{\frac{B_0 \log \Deltatarget^{-1}}{\Deltatarget^2}\delta + \frac{2\sum_{i\neq j} S_{ij}(\bH)}{\sqrt{\Deltatarget}}}_{\eqref{appeqn: case when thr is bigger than target}} +\underbrace{\frac{KB_0 \log \Deltatarget^{-1}}{\Deltatarget^2}\delta^2}_{\eqref{appeqn: expected stopping time minor part}} + \underbrace{B_0 \paren{\sum_{i\neq j \neq k} Q_{ijk}}\log \Deltatarget^{-1}}_{\text{From }(PSum1)} + k^2 \cdot \frac{B_0 \log \DeltaThr^{-1}}{\DeltaThr^2},
\end{equation}
where $S_{ij}$ and $Q_{ijk}$ are defined in Eq. \eqref{eq:(A2) integral} and Lemma \ref{lem:more than two deltatarget}, respectively.

{Eq.~\eqref{ineq_nonasym} also holds for the case of $\Deltatarget \ge \DeltaThr$. In this case, from the definition of $R_0$, we have}
$$\mathcal{T}_s (\bmu) = \Ex_{\bmu}\bracket{R_0(\max\paren{\Delta_s, \Delta_0)} \Ind[\EX]} = \Ex_{\bmu}\bracket{R_0(\DeltaThr) \Ind[\EX]} \leq R_0(\DeltaThr) \Prob_{\bmu}[\EX]= R_0(\DeltaThr),$$ {which implies} \begin{align*}
    \Ex[\tau] &\le k\cdot R_0 (\DeltaThr) + T_0 \cdot \deltatwo \\
    &= k \cdot R_0 (\DeltaThr) \cdot (1+\deltatwo) \tag{$T_0 = k\cdot R_0(\Deltatarget)=R_0(\DeltaThr)$}\\
    &=k \cdot \frac{B_0 \log \DeltaThr^{-1}}{\DeltaThr^2}\cdot (1+ \deltatwo) \\
    &\le 2k \frac{B_0 \log \DeltaThr^{-1}}{\DeltaThr^2} \le \text{Eq.~\eqref{ineq_nonasym}}.
\end{align*} 

In summary, we obtain Eq.~\eqref{ineq_nonasym}. 
%

\subsection{Proof of Lemmas}

\subsubsection{Proof of Lemma \ref{lem:stratified volume}}

\begin{lem}\label{lem:stratified volume}
    For $S+1 \leq \frac{1}{\sqrt{\delta}}$ and for $\delta \leq \paren{\xi_i {L_{is}(\bH)}}^2$, we have 
$$\Prob (i^*(\bmu)=i, \mu_i - \mu_s \in [S\delta, (S+1)\delta] )\leq 2 \Prob (i^*(\bmu)=i, \mu_i - \mu_s \leq \delta).$$
\end{lem}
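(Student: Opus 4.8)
The plan is to reduce the statement to a one-dimensional inequality about the marginal ``profile'' of the event. Set $Q(a):=\Prob\big(i^*(\bmu)=i,\ \mu_i-\mu_s\le a\big)$. Since $i^*(\bmu)=i$ forces $\mu_i>\mu_s$ and $S\ge 1$, the left-hand side of the claim equals $Q\big((S+1)\delta\big)-Q(S\delta)$ and the right-hand side is $2Q(\delta)$. Integrating first over the coordinates other than $i,s$ and then substituting $\mu_s=x-t$ gives $Q(a)=\int_0^a g(t)\,\diff t$, where
\[
g(t):=\int_{\Real}h_i(x)\,h_s(x-t)\prod_{k\in[k]\setminus\{i,s\}}H_k(x)\,\diff x ,\qquad g(0)=L_{is}(\bH).
\]
So everything reduces to showing $\int_{S\delta}^{(S+1)\delta}g(t)\,\diff t\le 2\int_0^{\delta}g(t)\,\diff t$.

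I would first record two facts about $g$. (i) $g$ is log-concave on $\Real$: the integrand is jointly log-concave in $(x,t)$ (a product of Gaussian densities and of CDFs of log-concave densities, composed with affine maps), so log-concavity of the marginal is Pr\'ekopa's theorem; in particular $g$ is unimodal with some mode $r^*$. (ii) A volume-lemma-type Lipschitz estimate, exactly as in the proof of \lemref{lem:volume_rewrite}, gives $\int_0^{\delta}g(t)\,\diff t=\Prob\big(i^*(\bmu)=i,\ \mu_i-\mu_s\le\delta\big)\ge \delta L_{is}(\bH)-c\,\delta^2$ for an explicit $c=c(\bH)$, hence $\int_0^{\delta}g\ge\tfrac12\delta L_{is}(\bH)$ once $\delta$ is small; the hypothesis $\delta\le(\xi_iL_{is}(\bH))^2$, together with the elementary bound $\xi_iL_{is}(\bH)\le\tfrac1{\sqrt{2\pi}}$, is exactly what makes this hold.

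For the upper side I would compare the thin strip $[S\delta,(S+1)\delta]$ to $[0,\delta]$ pointwise. After the shift $t\mapsto t-S\delta$, $g(t'+S\delta)=\int h_i(x)h_s(x-t'-S\delta)\prod H_k(x)\,\diff x$, and for $t'\in[0,\delta]$ the Gaussian ratio obeys $h_s(x-t'-S\delta)/h_s(x-t')=\exp\!\big(\tfrac{S\delta(x-t'-m_s)}{\xi_s^2}-\tfrac{S^2\delta^2}{2\xi_s^2}\big)$, which is $\le1$ when $x-t'\le m_s$ and is $\le\exp\!\big(S\delta(x-m_s)^+/\xi_s^2\big)$ in general. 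Since $S\delta\le(S+1)\delta\le\sqrt\delta$, this factor is close to $1$ on the ``bulk'' $\{x:\ x-m_s\le \xi_s^2\ln2/\sqrt\delta\}$; on the complementary tail I would instead bound the inner $\mu_s$-integral crudely by $\delta\sup h_s$ and use $\prod H_k\le1$, then evaluate the residual $\int h_i(x)\exp(\sqrt\delta(x-m_s)/\xi_s^2)\,\diff x$ over the tail by completing the square, which produces a factor $e^{O(\delta)}\,\overline{\Phi}\!\big(\text{threshold}/\xi_i+O(\sqrt\delta)\big)$ with Gaussian decay. Combining with fact (ii) and using $\sqrt\delta\le\xi_iL_{is}(\bH)$ to bound the threshold from below, the tail term is absorbed.

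The main obstacle I expect is not any single estimate but pinning the constant down to exactly $2$: the ``$1+\eta$'' from the bulk ratio, the tail contribution, and the factor $\tfrac12$ lost in the lower bound for $Q(\delta)$ must all be accounted for at once, and the cleanest way seems to be a short case split on whether $r^*\le\delta$ (equivalently, whether $\delta$ is large or small compared with $|m_i-m_s|$ and $\xi_s$). When $r^*\le\delta$ the strip $[S\delta,(S+1)\delta]$ lies entirely in the region where $g$ is non-increasing, so $\int_{S\delta}^{(S+1)\delta}g\le\delta g(S\delta)\le\delta g(\delta)\le\int_0^{\delta}g$ directly and the factor $2$ is not even needed; in the opposite regime $L_{is}(\bH)=g(0)$ is (exponentially) small, which is precisely what forces $\delta$ -- and hence the bulk ratio -- to be near $1$, killing the tail term. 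I would organise the write-up around these two regimes.
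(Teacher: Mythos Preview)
Your route is considerably heavier than what the paper does, and the final case split has a genuine gap. The paper's argument is a five-line Lipschitz estimate, identical in mechanism to the proof of \lemref{lem:volume_rewrite}: writing the strip probability as
\[
\int\!\int_{\mu_s+S\delta}^{\mu_s+(S+1)\delta} h_i(\mu_i)\,\diff\mu_i\,\diff\bH_{\setminus i}(\bmu_{\setminus i})
\]
and using that the Gaussian density $h_i$ has Lipschitz constant $e^{-1/2}/\xi_i$, one bounds $h_i(\mu_i)\le h_i(\mu_s)+\tfrac{1}{\xi_i}(S+1)\delta$ uniformly on the inner range, so the strip probability is at most $\delta\bigl(L_{is}(\bH)+\tfrac{1}{\xi_i}(S+1)\delta\bigr)$. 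The two hypotheses combine exactly as $\tfrac{1}{\xi_i}(S+1)\delta\le\tfrac{\sqrt\delta}{\xi_i}\le L_{is}(\bH)$, which caps the strip at $2L_{is}(\bH)\delta$. Your fact~(ii) is precisely the matching lower bound on $Q(\delta)$; the missing upper half is just this Lipschitz control of $g(t)-g(0)$ for $t\le(S+1)\delta\le\sqrt\delta$, and Pr\'ekopa, bulk/tail splitting, and the mode play no role.

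On the gap: in your case $r^*\le\delta$ you assert $\delta g(\delta)\le\int_0^{\delta}g$ ``directly'', but unimodality with mode in $(0,\delta]$ does not yield this. A log-concave $g$ that rises steeply from a small $g(0)$ to a mode near $\delta/2$ and then decays slowly can have $g(\delta)\gg g(0)$ and $\int_0^\delta g$ as small as roughly $\tfrac12\delta g(\delta)$; your ``factor $2$ not even needed'' claim then fails. To recover, you would still need an inequality of the form $g(t)\le 2g(0)$ on $[0,(S+1)\delta]$, and that is again the Lipschitz bound above, not a consequence of log-concavity. The complementary regime is also not as you describe: $g(0)=L_{is}(\bH)$ need not be small when $r^*>\delta$ (for $k=2$ and $m_i-m_s$ slightly larger than $\delta$, the mode $r^*=m_i-m_s$ exceeds $\delta$ while $g(0)$ stays of order one), so the claim that the hypothesis ``forces $\delta$ to be near $1$'' does not go through.
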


\begin{proof} We have
\begin{align*}
\int_{\Theta_{i}} \Ind\left[|\mu_i-\mu_{s}| \in [S\delta, (S+1)\delta] \right] \diff \bH(\bmu)
&= 
\int_{\Theta_{\backslash i}}
\int_{\mu_i = \mu_{s}+S\delta}^{\mu_{s} + (S+1)\delta}
h_i(\mu_i)
\diff \mu_i
\diff \bH_{\setminus i}(\bmu_{\setminus i})\\
&\le 
\int_{\Theta_{\backslash i}}
\delta \bracket{h_i(\mu_{s}) + \frac{e^{-1/2}}{\xi_i}(S+1)\delta}
\diff \bH_{\setminus i}(\bmu_{\setminus i}) \tag{by Lipschitz property of Gaussian, $e^{-1/2}/\xi_i$ is the steepest slope of $N(m_i, \xi_i^2)$}\\
&=
\delta (\underbrace{\bracket{\int_{\Theta_{\backslash i}}
h_i(\mu_{s})
\diff \bH(\bmu_{\setminus i})}}_{L_{is}(\bH)}+ \frac{e^{-1/2}}{\xi_i} (S+1)\delta) \\
&\leq (L_{is} (\bH) +  \frac{1}{\xi_i} (S+1)\delta) \delta.
\end{align*}

When $S+1 < \frac{1}{\sqrt{\delta}}$ and $\sqrt{\delta} < {L_{is} (\bH)}{\xi_i}$, we have

$$\int_{\Theta_{i}} \Ind\left[|\mu_i-\mu_{j}| \in [S\delta, (S+1)\delta] \right] \diff \bH(\bmu) \leq 2L_{is} (\bH) \delta
$$
as intended. 
\end{proof}

\subsubsection{Proof of Lemma \ref{lem:more than two deltatarget}}

We will show that the probability that three or more arms are $\delta$-close is $O(\delta^2)$. Namely: 
\begin{lem}\label{lem:more than two deltatarget}
   We have $\Prob(\mu_{i^*(\bmu)} - \mu_{k^* (\bmu)} \leq \Deltatarget) = O(\delta^2)$.
\end{lem}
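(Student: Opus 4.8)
The plan is to decompose the target event according to which three arms occupy the top three ranks, and then to bound each resulting term by the probability that two independent Gaussian coordinates simultaneously land in an interval of length $\Deltatarget$. First I would assume $k\ge 3$ (for $k<3$ the arm $k^*(\bmu)$ is undefined and the probability is vacuously $0$), and, writing $\Theta_{ijl}:=\{\bmu\in\mathbb{R}^k: i^*(\bmu)=i,\ j^*(\bmu)=j,\ k^*(\bmu)=l\}$, decompose
\begin{align*}
\Prob\bigl(\mu_{i^*(\bmu)}-\mu_{k^*(\bmu)}\le\Deltatarget\bigr)
=\sum_{(i,j,l)}\int_{\Theta_{ijl}}\Ind\bigl[\mu_i-\mu_l\le\Deltatarget\bigr]\diff\bH(\bmu),
\end{align*}
where the sum ranges over all ordered triples of distinct indices in $[k]$.

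Next I would bound each term. On $\Theta_{ijl}$ with $\mu_i-\mu_l\le\Deltatarget$ one has $\mu_i-\Deltatarget\le\mu_l\le\mu_j\le\mu_i$, so both $\mu_j$ and $\mu_l$ lie in the length-$\Deltatarget$ interval $[\mu_i-\Deltatarget,\mu_i]$; dropping the remaining ordering constraints $\mu_s\le\mu_l$ for $s\neq i,j,l$ by bounding the corresponding CDF factors $H_s(\cdot)$ by $1$, and bounding each density over that interval by its maximum $\|h_j\|_\infty=1/(\sqrt{2\pi}\,\xi_j)$ (resp.\ $\|h_l\|_\infty$),
\begin{align*}
\int_{\Theta_{ijl}}\Ind\bigl[\mu_i-\mu_l\le\Deltatarget\bigr]\diff\bH(\bmu)
&\le\int_{\mathbb{R}}h_i(\mu_i)\Bigl(\int_{\mu_i-\Deltatarget}^{\mu_i}h_j(\mu_j)\diff\mu_j\Bigr)\Bigl(\int_{\mu_i-\Deltatarget}^{\mu_i}h_l(\mu_l)\diff\mu_l\Bigr)\diff\mu_i\\
&\le\frac{\Deltatarget^2}{2\pi\,\xi_j\xi_l}\int_{\mathbb{R}}h_i(\mu_i)\diff\mu_i
=\frac{\Deltatarget^2}{2\pi\,\xi_j\xi_l}=:Q_{ijl}\,\Deltatarget^2 .
\end{align*}
Summing over the at most $k^3$ distinct triples then gives
\begin{align*}
\Prob\bigl(\mu_{i^*(\bmu)}-\mu_{k^*(\bmu)}\le\Deltatarget\bigr)\le\Bigl(\sum_{i\neq j\neq l}Q_{ijl}\Bigr)\Deltatarget^2,
\end{align*}
and since $\Deltatarget=\delta/(4L(\bH))$, the right-hand side equals $\bigl(\sum_{i\neq j\neq l}Q_{ijl}\bigr)\delta^2/(16L(\bH)^2)$, which is $O(\delta^2)$ once the prior parameters $\xi_j$ and $L(\bH)$ are treated as constants. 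This simultaneously supplies the constants $Q_{ijk}$ referenced in the non-asymptotic bound of Appendix~\ref{appsec: nonasymptotic}.

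I do not expect a genuinely hard step here: the only points needing care are (i) setting up the ordering constraints so that \emph{both} $\mu_j$ and $\mu_l$ are provably confined to an interval of length $\Deltatarget$ around $\mu_i$, after which the two inner integrals decouple and each contributes a factor $O(\Deltatarget)$; and (ii) being explicit that the $O(\delta^2)$ hides a prior-dependent multiplicative constant. A variant mirroring the proof of Lemma~\ref{lem:volume_rewrite} — first integrating out all coordinates other than $\mu_i,\mu_j,\mu_l$, then using the Lipschitz estimate $\int_{\mu_i-\Deltatarget}^{\mu_i}h_j\le h_j(\mu_i)\Deltatarget+\tfrac{e^{-1/2}}{\xi_j}\Deltatarget^2$ — yields the same $\Deltatarget^2$ order with $L_{ij}$-type constants, and either form would be acceptable.
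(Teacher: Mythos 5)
Your proposal is correct and follows essentially the same route as the paper: both arguments reduce the event to the observation that three arms being $\Deltatarget$-close forces two coordinates into a length-$O(\Deltatarget)$ interval around a third, so the two inner Gaussian integrals each contribute a factor $O(\Deltatarget)$, yielding $O(\Deltatarget^2)=O(\delta^2)$ after summing over the finitely many triples. The only cosmetic difference is that you bound the inner integrals by the sup norms $1/(\sqrt{2\pi}\xi_j)$ while the paper uses the Lipschitz estimate to express the constant as $Q_{ijk}(\bH)=\int h_j(\mu_i)h_k(\mu_i)\diff\bH_{\setminus jk}+O(\delta)$ — a variant you already anticipate at the end.
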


\begin{proof}

For any $i \ne j \ne k \in [k]$, we have
\begin{align*}\label{ineq_threenear}
\lefteqn{
\int \Ind\left[
|\mu_i-\mu_j|, |\mu_i - \mu_k| \le \delta
\right] \diff \bH(\bmu)
}\\
&=
\int_{\bmu_{\setminus jk}} \int_{\mu_j= \mu_i-\delta}^{\mu_i+\delta} \int_{\mu_k = \mu_i-\delta}^{\mu_i+\delta}
h_{j}(\mu_j) h_k (\mu_k) \diff \mu_k \diff \mu_j
\diff \bH_{\setminus jk}(\bmu_{\setminus jk})\\
&\le 
\int_{\bmu_{\setminus jk}} \int_{\mu_j= \mu_i-\delta}^{\mu_i+\delta} \int_{\mu_k = \mu_i-\delta}^{\mu_i+\delta}
\paren{h_{j}(\mu_i)+\frac{e^{-1/2}\delta}{\xi_j}} \paren{h_k (\mu_i)+\frac{e^{-1/2}\delta}{\xi_k}}\diff\mu_k \diff\mu_j
\diff\bH_{\setminus jk}(\bmu_{\setminus jk})\tag{Lipschitz property of Gaussian}
\\
&\le 
(2\delta)^2 \underbrace{\int_{\bmu_{\setminus jk}} \bracket{h_{j}(\mu_i) h_k (\mu_i)+ O(\delta)} \diff \bH_{\setminus jk}(\bmu_{\setminus jk})}_{=:Q_{ijk}(\bH)} = O(\delta^2).
\end{align*}
\end{proof}

\section{Experimental details}\label{appsec:experiment details}

\subsection{Stopping condition} 

\paragraph{TTTS} We use our theoretical results stated in Section \ref{sec:main algorithm} for our stopping criterion. For TTTS, we use Chernoff's stopping rule, as \cite{garivier16, jourdan2022top} did. Here is the description how it works: for each arm $i, j \in [k]$, let 
$$ \hat{\mu}_{ij} (t):= \frac{N_i (t)}{N_i(t) + N_j (t)} \hat{\mu}_i (t)+ \frac{N_j (t)}{N_i(t) + N_j (t)}  \hat{\mu}_j (t),$$
and define
$$ Z_{ij} (t) := N_i (t) \cdot KL_i(\hat{\mu}_i, \hat{\mu}_{ij}) + N_j (t) \cdot KL_j(\hat{\mu}_j, \hat{\mu}_{ij}).$$
Now the stopping time is defined as:
$$\tau_{TTTS} := \inf \{t\in\mathbb{N}: \max_{a \in [k]} \min_{b \in [k]\backslash a} Z_{ab}(t) \geq \beta(t, \delta)\}$$
for some threshold function $\beta(t, \delta)$, which is defined by the following proposition of \cite{garivier16}:

\begin{thm}[\citealt{garivier16}, Proposition 12]
    Let $\bmu$ be an exponential family bandit model. Let $\delta\in (0, 1)$ and $\alpha > 1$. There exists a constant $C = C(\alpha, k)$ such that whatever the sampling strategy, using Chernoff’s stopping rule with the threshold
    $$ \beta(t,\delta) = \log \frac{C t^\alpha}{\delta}$$
    ensures that for all $\bmu$, $\Prob_{\bmu} \paren{\tau < \infty, J \neq i^* (\bmu)} \leq \delta$.
\end{thm}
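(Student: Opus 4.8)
The plan is to control the error event $\{\tau<\infty,\ J\neq i^*(\bmu)\}$ by a union of ``single-pair overshoot'' events and then bound each of those with a uniform-in-time deviation inequality for the empirical KL. First I would note that, by construction, the recommended arm $J$ is the empirical best arm $\hat\imath(\tau)=\argmax_a\hat\mu_a(\tau)$ (equivalently, the arm achieving $\max_a\min_{b\neq a}Z_{ab}(\tau)$), so that on $\{\tau<\infty\}$ we have $\min_{b\neq J}Z_{J,b}(\tau)\ge\beta(\tau,\delta)$; in particular $Z_{J,i^*(\bmu)}(\tau)\ge\beta(\tau,\delta)$. Taking a union over the (random) value of $J$ gives
\[
\{\tau<\infty,\ J\neq i^*(\bmu)\}\ \subseteq\ \bigcup_{a\neq i^*(\bmu)}\Big\{\exists t\in\mathbb{N}:\ Z_{a,i^*(\bmu)}(t)\ge\beta(t,\delta)\Big\},
\]
so it suffices to choose $C=C(\alpha,k)$ so that each of the $k-1$ events on the right has probability at most $\delta/(k-1)$.

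Next I would use the variational form of the statistic: writing $i^*$ for $i^*(\bmu)$, on $\{\hat\mu_a(t)\ge\hat\mu_{i^*}(t)\}$ (which is forced once $Z_{a,i^*}(t)\ge\beta(t,\delta)\ge0$) one has $Z_{a,i^*}(t)=\min_{\theta_a\le\theta_{i^*}}\big[N_a(t)\,d(\hat\mu_a(t),\theta_a)+N_{i^*}(t)\,d(\hat\mu_{i^*}(t),\theta_{i^*})\big]$, i.e.\ the transportation distance from the empirical pair to the half-space $\{\theta_a\le\theta_{i^*}\}$. Since $\mu_a<\mu_{i^*}$, the true pair $(\mu_a,\mu_{i^*})$ lies in that half-space; evaluating the minimand there, together with a short case analysis on the signs of $\hat\mu_a(t)-\mu_a$ and $\hat\mu_{i^*}(t)-\mu_{i^*}$ (at least one of which must have the ``bad'' sign), yields the one-sided bound
\[
Z_{a,i^*}(t)\ \le\ N_a(t)\,d\big(\hat\mu_a(t),\mu_a\big)\Ind[\hat\mu_a(t)>\mu_a]\;+\;N_{i^*}(t)\,d\big(\hat\mu_{i^*}(t),\mu_{i^*}\big)\Ind[\hat\mu_{i^*}(t)<\mu_{i^*}].
\]
Hence each pair event is contained in $\{\exists t:\ N_a(t)\,d^{+}_a(t)+N_{i^*}(t)\,d^{-}_{i^*}(t)\ge\beta(t,\delta)\}$, where the two summands are functions only of the i.i.d.\ reward streams of arms $a$ and $i^*$, read off at the (possibly adaptively chosen) sample counts $N_a(t),N_{i^*}(t)$; this is exactly why the conclusion is independent of the sampling strategy.

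The crux is then a uniform-in-time deviation bound. For a canonical one-parameter exponential family, the Cram\'er--Chernoff inequality gives, for a \emph{fixed} sample size $n$, $\Prob\big(n\,d^{+}(\hat\mu^{(n)},\mu)\ge x\big)\le e^{-x}$ and similarly for $d^{-}$. I would upgrade this to a statement holding simultaneously over all $t$ by a peeling/stratification argument over geometric brackets of the pair $(N_a(t),N_{i^*}(t))$, applying a maximal (Doob-type) inequality inside each bracket and summing the per-bracket costs against $e^{-\beta(t,\delta)}=\delta/(Ct^{\alpha})$; this is the device used in \citet{garivier16} (following Garivier--Capp\'e and Magureanu et al.). The polynomial slack $t^{\alpha}$ with $\alpha>1$ and the constant $C(\alpha,k)$ are precisely what make $\sum_{t}\mathrm{poly}(\ln t)/t^{\alpha}$ converge and absorb the final union over the $k-1$ arms. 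I expect this last step to be the main obstacle: a naive union over all admissible $(N_a(t),N_{i^*}(t))$ pairs costs a factor $t^2$ and would force $\alpha>2$, so getting away with $\alpha>1$ requires the logarithmic peeling and careful bookkeeping of the constant; by contrast, the reduction to single pairs and the variational/one-sided bound on $Z$ are routine consequences of the definitions.
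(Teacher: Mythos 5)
This statement is not proved in the paper at all: it is quoted verbatim from \citet{garivier16} (their Proposition~12) in the experimental appendix, solely to justify the threshold $\beta(t,\delta)$ used for the TTTS baseline, so there is no in-paper argument to compare against. Your sketch correctly reproduces the proof from that reference: the reduction of the error event to the pairwise overshoot events $\{\exists t:Z_{a,i^*}(t)\ge\beta(t,\delta)\}$, the one-sided bound on $Z_{a,i^*}$ obtained by evaluating the transportation cost at the true (feasible) parameter pair, and the uniform-in-time deviation inequality for $N_a(t)\,d(\hat\mu_a(t),\mu_a)+N_{i^*}(t)\,d(\hat\mu_{i^*}(t),\mu_{i^*})$ obtained by geometric peeling over the sample counts are exactly the ingredients of the original proof. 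The only part left at the level of a plan is the last step (the peeling/maximal-inequality bookkeeping that determines $C(\alpha,k)$ and lets $\alpha>1$ suffice rather than $\alpha>2$ or more); you correctly identify this as the crux and as the device supplied by \citet{garivier16}, so I see no gap in the approach, only an unexpanded (and correctly attributed) technical lemma.
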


In this theorem, we give an advantage to the stopping time of TTTS by setting $\alpha=1, C=1$. Theoretically, $C(\alpha,k)>1$ and $C \to \infty$ as $\alpha \to 1_+$, but we set the threshold smaller than the theoretical guarantee so that TTTS stops earlier. 

\paragraph{TTUCB} We followed the stopping rule of the original paper \cite{jourdan2022non}. Let $\EC_G(x):= \min_{\lambda \in (\frac{1}{2},1]} \frac{2\lambda -2\lambda\log(4\lambda) +\log \zeta(2\lambda) -0.5\log (1-\lambda)+x}{\lambda}$ where $\zeta$ is a Riemann $\zeta$ function, and 
$$c(n, \delta):= 2\EC_G(\frac{1}{2} \log \frac{k-1}{\delta}) + 4 \log (4+\log \frac{n}{2}).$$
Let $\hat{i}_t:= \arg \max_{i \in [k]} \hat{\mu}_i (t)$, the empirical best arm at step $t$. The TTUCB algorithm stops when 

$$ \min_{i\neq \hat{i}_t} \frac{\hat{\mu}_{\hat{i}}(t)-\hat{\mu}_{i}(t)}{\sqrt{\frac{1}{N_{hat{i}}(t)} + \frac{1}{N_i(t)}}} \geq \sqrt{c(t, \delta)}.$$

When the algorithm stops sampling, the TTUCB algorithm recommends the empirical best arm as its final suggestion. 

Since the computation of $\EC_G (x)$ involves optimization, it is computationally heavy when the number of samples is excessively large (as our Table \ref{table:simulation result}). Instead, we approximated $\EC_G (x) \approx x + \log x$ as mentioned in \cite{jourdan2022non}. 

\subsection{NoElim algorithm} \label{appsubsec:noelim}

The NoElim algorithm is shown in Algorithm \ref{alg:noelim}.

\begin{algorithm}[h]
\caption{No Elimination (NoElim) Algorithm} 
\label{alg:noelim}
\begin{algorithmic}
\STATE {\bfseries Input:}{ Confidence level $\delta$, prior $\bH$}
\STATE $\Deltatarget:=\frac{\delta}{4L(\bH)}$
\STATE Initialize the candidate of best arms $\EA (1) = [k]$ 
\STATE $t=1$
\WHILE {True}
\STATE \textit{\textbf{Draw each arm in $[k]$ once. }} \COMMENT{Main Difference with Algorithm \ref{alg:elim}}.
\STATE $t \to t+|\EA(t)|$
$\hat{\Delta}^{\mathrm{safe}}(t)$.
\FOR{$i \in \EA(t)$}
\STATE Calculate $\rmUCB(i,t)$ and $\rmLCB(i,t)$ from \eqref{eqn:UCB LCB bound}. 
\IF{$\rmUCB(i,t) \le \max_j \rmLCB(j,t)$} 
\STATE $\EA(t) \leftarrow \EA(t) \setminus \{i\}$.
\ENDIF   
\ENDFOR
\IF{$|\EA(t)|=1$}
\STATE {\bfseries Return} arm $J$ in $\EA(t)$.\label{line_stopone} 
\ENDIF
\STATE Calculate safe empirical gap 
\IF{$\hat{\Delta}^{\mathrm{safe}}(t) \le \Deltatarget$}
\STATE {\bfseries Return} arm $J$ which is uniformly sampled from $\EA(t)$.\label{line_stoptwo} 
\ENDIF
\ENDWHILE
\end{algorithmic}
\end{algorithm}

\subsection{Tables including computation time}\label{appsubsec:comp time}
For all tables in this section, Comp represents the average computation time (second). 
\begin{table}[h!]
\caption{Extended version of Table \ref{table:simulation result} with computation time. }
\label{table:simulation result with comp}
\begin{center}
\begin{small}
\begin{sc}
\begin{tabular}{c|cccc}
\toprule
& Avg& Max & Error & Comp.\\
\midrule
Alg. \ref{alg:elim}  &   $1.06 \times 10^4$& $2.35\times 10^5$ & 1.5\% & $0.17$\\
TTTS & $1.56\times 10^5$ & $1.09\times 10^8$ & 0.5\% & 27.6\\
TTUCB & $1.95\times 10^5$ & $1.13\times 10^8$ & 0\% &5.07\\
\bottomrule
\end{tabular}
\end{sc}
\end{small}
\end{center}
\vskip -0.1in
\end{table}

\begin{table}[h!]
\caption{{Extended version of Table \ref{table:without elim result} with computation time.} }
\label{table:without elim result with comp}
\begin{center}
\begin{small}
\begin{sc}
\begin{tabular}{c|cccc}
\toprule
& Avg& Max & Error & Comp.\\
\midrule
Alg. \ref{alg:elim}  &  $2.69\times 10^5$& $1.66\times 10^7$ & 0.6\% & $1.59$\\
NoElim & $1.29\times 10^6$ & $8.25\times 10^7$ & 0\% & 5.5\\
\bottomrule
\end{tabular}
\end{sc}
\end{small}
\end{center}
\vskip -0.1in
\end{table}

\subsection{Miscellaneous}

\paragraph{Computation of $\Delta_0$} From Definition \ref{def:Lij}, 
$$ L_{ij}(\bH) := \int_{-\infty}^{\infty} h_i (x) h_j (x) \prod_{s: s \in [k]\backslash \{i,j\}} H_s (x) \diff x.$$
In \textbf{Scipy} package, there are functions for computing the cumulative function of Gaussian $H_s$ (scipy.norm.cdf) and $h_i$ (scipy.norm.pdf). \textbf{Scipy} package also supports the numerical integration (scipy.integrate.quad) which we use to numerically compute $L_{ij}$ in our experiments. 

\paragraph{Codes} The codes are in the supplementary material and will be published in public through GitHub when this paper is accepted.

\paragraph{Hardware} We used Python 3.7 as our programming language and Macbook Pro M2 16 inch as our hardware.

\section{Sufficiently small $\delta$}\label{appsec:delta0}

The second result of the Lemma \ref{lem:volume_rewrite} is used for the lower bound.
For this result to hold, we need the following two conditions for $D_1$:
\begin{itemize}
    \item Proof of Lemma \ref{lem:volume_lem}, second result: For the proof of Eq. \eqref{appeqn: volume lemma lower}, we used $|m_i|\leq \frac{1}{2\sqrt{\Delta}}$
    \item Proof of Lemma \ref{lem:volume_lem}, second result: For the proof of Eq. \eqref{appeqn: volume lemma lower}, we used $\frac{1}{\xi_i} \Delta^2 > 2\exp \paren{-\frac{1}{8\Delta \xi_i^2}}+2\exp \paren{-\frac{1}{8\Delta \xi_j^2}}$. $\Delta < D_0 (\bH)$ where 
    \begin{align*}D_0 (\bH) := \begin{cases}
        W(-\frac{1}{32 \max_{i\in [k]} \xi_i^{3/2}})& \text{If $\max_{i\in[k]} \xi_i > \sqrt[3]{\frac{e^2}{2^{10}}}$} \\
        1 & \text{Otherwise}
    \end{cases}
    \end{align*}
    satisfies the condition. Here $W$ is the Lambert W function with the principal branch.
\end{itemize}

For the lower bound proof, we consider sufficiently small $\delta$ subject to the following constraints on 
$\tilde\Delta = \frac{32e^4}{L(\bH)}\delta$: 
\begin{itemize}
    \item Proof of Lemma \ref{lem:volume_lem}, second result: For the proof of Eq. \eqref{appeqn: volume lemma lower}, we used $|m_i|\leq \frac{1}{2\sqrt{\tilde\Delta}}$.
    \item Proof of Lemma \ref{lem:volume_lem}, second result: For the proof of Eq. \eqref{appeqn: volume lemma lower}, we used $\frac{1}{\xi_i} \tilde\Delta^2 > 2\exp \paren{-\frac{1}{8\tilde\Delta \xi_i^2}}+2\exp \paren{-\frac{1}{8\tilde\Delta \xi_j^2}}$. $\tilde\Delta < D_0 (\bH)$ will satisfy this condition. 
    \item For the Lemma \ref{lem:ratiolemma}:
    $\tilde\Delta<D_1 (\bH):=\min_{i \neq j} \bracket{\left|\frac{m_i}{\sigma_i^2} -\frac{m_j}{\sigma_j^2}\right|^{-1}, \bracket{\frac{1}{2\sigma_i^2}+\frac{1}{2\sigma_j^2}}^{-2}}$.
    \item To make $L'(\bH, \tilde\Delta)\tilde\Delta \in (\frac{1}{2} L(\bH), 2L(\bH))$, $\tilde\Delta \leq \frac{L(\bH)}{4{\sum_{i\in[k]}\frac{k-1}{\xi_i}}}$
\end{itemize}

In summary, Theorem 1 holds for any 
\[
\delta \le \delta_L := \frac{L(\bH)}{32e^4} \cdot \min \paren{D_0(\bH), D_1(\bH), \min_{i \in [k]}\frac{1}{4m_i^2}, \frac{L(\bH)}{4(k-1){\sum_{i\in[k]}\frac{1}{\xi_i}}}}.
\]

For the upper bound proof (Theorem \ref{thm: elim upperbound}), we consider $\delta$ such that $\Deltatarget = \frac{\delta}{4L(\bH)}$ satisfies the following conditions: 
\begin{itemize}
    \item $\Deltatarget < \frac{L(\bH)}{{\sum_{i\in[k]}\frac{k-1}{\xi_i}}}$ to make $L(\bH, \Deltatarget)\cdot \Deltatarget \leq 2L(\bH) \Deltatarget$ by the first result of Lemma \ref{lem:volume_rewrite}.
    \item $\Deltatarget \leq \min_{i,j\in [k], i\neq j} ({L_{ij}}{\xi_i})^2$ for the proof and usage of Lemma \ref{lem:stratified volume}, and
    \item $\Deltatarget < \DeltaThr$.
\end{itemize}



\newpage

\end{document}